\documentclass{article}
\usepackage{iclr2027_conference,times}

\usepackage[utf8]{inputenc}
\usepackage[T1]{fontenc}

\usepackage{amsfonts}
\usepackage{amsmath}
\usepackage{amssymb}
\usepackage{amsthm}

\usepackage{graphicx}
\usepackage{booktabs}
\usepackage{multirow}
\usepackage{array}
\usepackage{tabularx}
\usepackage{colortbl}
\usepackage{xcolor}
\usepackage{algorithm}
\usepackage{algpseudocode}
\usepackage{placeins}
\usepackage{wrapfig}   
\usepackage{needspace}
\usepackage{xspace}
\usepackage[expansion=false]{microtype}

\usepackage{url}
\usepackage{hyperref}
\usepackage{xcolor}
\usepackage{amsmath}
\usepackage{tikz}
\usepackage[font=small]{caption}
\usepackage[font=footnotesize]{subcaption}
\usetikzlibrary{
  positioning,
  arrows.meta,
  calc
}
\usepackage{titlesec}  
\usepackage{titlesec}

\titlespacing*{\section}
  {0pt}{1.5ex}{0.8ex}

\titlespacing*{\subsection}
  {0pt}{1.0ex}{0.5ex}

\titlespacing*{\subsubsection}
  {0pt}{0.5ex}{0.25ex}
\newcolumntype{Y}{>{\centering\arraybackslash}X}
\definecolor{oursrow}{RGB}{224,236,255}

\newcommand{\method}{BARQ\xspace}

\theoremstyle{definition}

\newtheorem{definition}{Definition}

\theoremstyle{plain}
\newtheorem{lemma}{Lemma}

\newtheorem{theorem}{Theorem}

\newtheorem{fact}{Fact}

\theoremstyle{remark}

\AtBeginDocument{\let\cite\citep}

\DeclareMathOperator{\KL}{KL}

\iclrfinalcopy
\makeatletter
\newcommand{\affiliations}[1]{\gdef\@paperaffiliations{#1}}
\def\@maketitle{\vbox{\hsize\textwidth
  {\LARGE\sc\@title\par}
  \vskip 0.12in
  {\centering\normalfont\fontfamily{ntxtlf}\fontsize{12}{14}\selectfont\bfseries\@author\par}
  \vskip 0.2em
  {\centering\normalfont\fontfamily{ntxtlf}\fontsize{10}{11}\selectfont\@paperaffiliations\par}
  \vskip 0.18in
}}
\makeatother

\begin{document}

\title{BARQ: Balanced Codebook Refinement for Low-Bit LLM Quantization}

\author{
Chenhang Cui\textsuperscript{1\textdagger}, Xu Xie\textsuperscript{3\textdagger}, Linrui Xu\textsuperscript{2}, Xiaohao Liu\textsuperscript{1},\\
Xingyu Zhu\textsuperscript{1}, Fei Shen\textsuperscript{1*}, Tat-Seng Chua\textsuperscript{1}
}
\affiliations{
\textsuperscript{1} National University of Singapore\\
\textsuperscript{2} Central South University\\
\textsuperscript{3} Independent Researcher
}

\maketitle
\begingroup
\renewcommand{\thefootnote}{\fnsymbol{footnote}}
\footnotetext[1]{Corresponding author.}
\footnotetext[2]{Equal contribution.}
\endgroup
\setcounter{footnote}{0}
\fancyhead{}
\renewcommand{\headrulewidth}{0pt}

\begin{abstract}
As large language models (LLMs) grow in parameter count, model storage and parameter memory traffic have become major bottlenecks to efficient deployment.
Codebook-based weight quantization reduces these costs, but imbalanced nearest-codeword assignments during fitting can leave some codewords insufficiently updated, limiting effective codebook utilization.
To address this limitation, we propose Balanced Assignment Refinement for Quantization (BARQ), which improves quantization quality through balanced fitting of existing codebooks.
Specifically, we first compute joint soft assignments between weight blocks and codewords through entropically regularized optimal transport with uniform marginals and curvature-weighted reconstruction costs, ensuring equal positive fitting mass for every codeword in the exact solution.
We then refine the codewords through an assignment-weighted barycentric update, which we prove minimizes the fitting objective for fixed assignments.
For finite Sinkhorn iterations, the implemented update retains this optimality provided all codeword masses exceed the denominator floor.
Finally, we discard the soft assignments and use the refined codebook for standard hard nearest-codeword encoding, with our analysis establishing sufficient conditions for reducing hard-quantization distortion and evaluation loss.
Across multiple LLMs, BARQ achieves lower perplexity and higher mean zero-shot accuracy than the evaluated baselines at comparable bit budgets.

The code is available at \url{https://github.com/chenhangcuisg-code/BARQ}.
\end{abstract}

\section{Introduction}
\label{sec:intro}

\begin{wrapfigure}{r}{0.50\textwidth}
  \vspace{-8pt}
  \centering
  \includegraphics[width=0.49\textwidth]{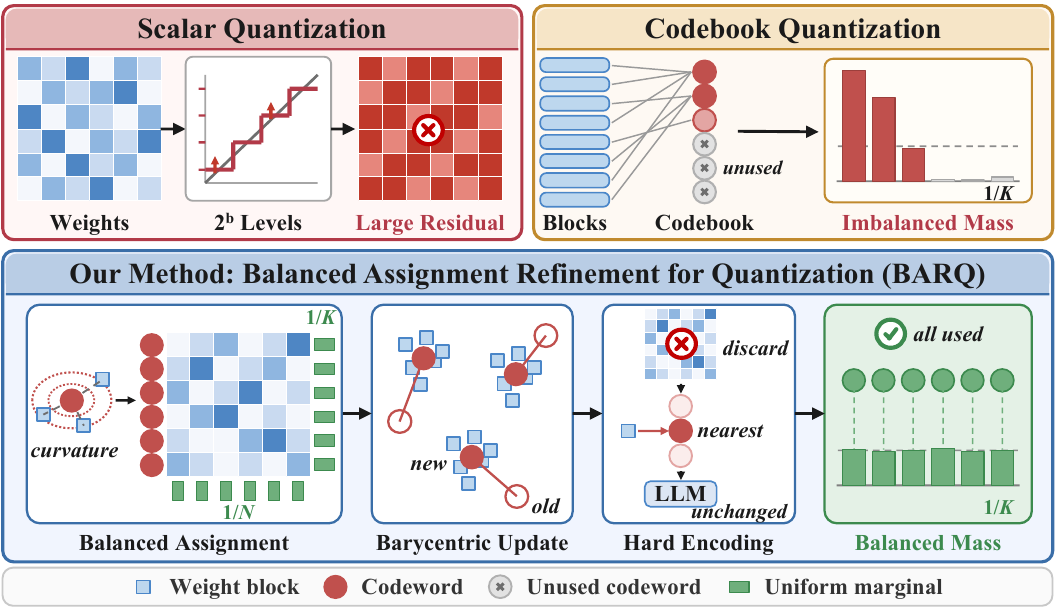}
  \caption{\textbf{Comparison of Codebook Fitting.} Prior methods concentrate
  fitting mass on a few codewords, leaving the rest never updated. In contrast,
  \method{} fits under uniform marginals, giving every codeword positive mass
  without changing the inference path.}
  \label{fig:teaser}
  \vspace{-5pt}
\end{wrapfigure}

Large language models (LLMs)~\citep{llama3_2024,qwen3_2025} continue to advance in language understanding, generation, and reasoning, but their growing parameter scale makes weight storage and memory traffic major bottlenecks to efficient inference and deployment.
Weight-only quantization~\citep{gptq2023,awq2024,spqr2024,squeezellm2024,quip2023} reduces model storage and data-transfer costs through low-bit parameter representations and has become a common approach to efficient LLM deployment.
Preserving performance at low bit widths is essential for deployment.
Related foundation-model research highlights task-specific aspects of output quality, including token-level vision-language alignment~\citep{cui2025fine} and cultural consistency in text-to-image generation~\citep{shi2026culture}.


Conventional scalar quantization maps individual weights to a limited set of quantization levels.
At extremely low bit widths, this representation can struggle to preserve weight information, leading to substantial reconstruction error and model-quality degradation~\citep{gptvq2024,aqlm2024}.
Codebook-based quantization~\citep{gptvq2024,aqlm2024,vptq2024,quipsharp2024} jointly represents short weight blocks with shared codewords, offering a more flexible representation for extremely low-bit compression.
However, realizing this representational advantage also depends on the effective use of codewords during fitting.
Conventional codebook fitting~\citep{kmeans1982lloyd,opq2014,gptvq2024} typically alternates between nearest-codeword assignment and codeword updates based on the assigned blocks.
Independent nearest-codeword assignment~\citep{bradley2000constrained,malinen2014balanced,zheng2023online} places no constraint on the total fitting mass received by each codeword, potentially leaving some codewords with few or no supporting blocks and limiting effective codebook utilization.
Further improving low-bit codebook quantization therefore calls for better codeword fitting and an analysis linking this refinement to final hard-quantization distortion and model performance.

To address this limitation, we propose Balanced Assignment Refinement for Quantization (\method{}), a fitting-stage refinement procedure for existing codebooks.
Specifically, \method{} uses curvature-weighted reconstruction costs to compute joint soft assignments between weight blocks and codewords through entropically regularized optimal transport~\citep{cuturi2013sinkhorn,peyre2019computational} with uniform marginals, giving every codeword the same positive fitting mass in the exact solution.
It then refines the codewords through assignment-weighted barycentric updates, which minimize the fitting objective for fixed assignments.
For finite Sinkhorn iterations, the implemented update retains this optimality provided every codeword's fitting mass is at least the denominator floor.
Finally, \method{} discards the soft assignments and uses the refined codebook for standard hard nearest-codeword encoding, preserving the codebook size, index width, and inference path.
Our analysis proves that an exact assignment--update pair does not increase the balanced fitting objective and further establishes sufficient conditions for reducing final hard-quantization distortion and evaluation loss.
 The main contributions are summarized as follows:
\begin{itemize}
    \item We propose \method{}, which combines curvature-weighted balanced entropic assignment with barycentric updates to refine existing codebooks during fitting while preserving the standard hard encoding rule and inference path.

    \item We establish a theoretical connection between balanced fitting and final quantization quality by proving that exact refinement does not increase the fitting objective, characterizing codeword optimality under finite Sinkhorn assignments, and deriving sufficient conditions for reduced hard-quantization distortion and evaluation loss.

    \item We evaluate \method{} across six LLMs. In paired comparisons with GPT-VQ at matched bit budgets, the mean per-configuration relative changes yield a 7.0\% reduction in perplexity and a 3.0\% improvement in five-task mean accuracy. Further analyses examine assignment concentration, hard codeword utilization, and curvature-weighted reconstruction error.
\end{itemize}

\section{Related Work}
\label{sec:related}

\noindent\textbf{Weight and Codebook Quantization.}
Scalar quantization improves low-bit LLM fidelity through error compensation~\citep{gptq2023}, activation-aware scaling~\citep{awq2024,smoothquant2023}, quantization-parameter optimization~\citep{hqq2024,omniquant2024}, sparse--dense decomposition~\citep{squeezellm2024}, and weight rotations~\citep{quarot2024,spinquant2024}.
Codebook-based methods jointly represent weight blocks using shared codewords.
GPT-VQ~\citep{gptvq2024} combines learned codebooks with error compensation, AQLM~\citep{aqlm2024} adopts additive codebooks, and QuIP\#~\citep{quipsharp2024} combines incoherence processing with lattice codebooks.
VPTQ~\citep{vptq2024} targets extreme low-bit vector quantization, while PV-Tuning~\citep{pvtuning2024} improves the fine-tuning of compressed models.
However, fitting learned codebooks through independent nearest-codeword assignment can yield imbalanced fitting mass, leaving some codewords insufficiently updated and limiting effective codebook utilization.
Complementary work on foundation-model safety traces and intervenes on downstream causal paths~\citep{shi2026tracerouter}; it addresses behavioral control rather than low-bit weight representation.



\noindent\textbf{Balanced Assignment and Entropic Optimal Transport.}
Sinkhorn--Knopp scaling~\citep{sinkhorn1967concerning,knight2008sinkhorn} is a classical method for matrix balancing.
Entropic regularization~\citep{cuturi2013sinkhorn,peyre2019computational} enables efficient approximate optimal transport via Sinkhorn iterations, motivating faster solvers~\citep{altschuler2017nearlinear,lin2019efficient} and transport-based objectives across machine learning~\citep{frogner2015wasserstein,genevay2018learning,mena2018learning}.
Balanced or uniform-marginal assignments have been used to avoid degenerate solutions and improve discrete representation utilization in settings such as representation learning and vector quantization~\citep{caron2020swav,asano2020selflabelling}.
Optimal transport has also been used to infer cross-neuron correspondences for weight-space fusion between heterogeneous language models~\citep{cui2026transport}.
However, balanced codebook utilization alone does not guarantee lower hard-quantization distortion or evaluation loss in LLM weight quantization.

\let\methodEqLabel\label
\section{Method}
\label{sec:method}
This section introduces \method{} for post-training vector quantization of LLM weights. The presentation first reviews the
standard codebook fitting formulation and the limitation of independent
nearest-codeword assignments. It then describes how \method{} refines an
existing codebook through balanced, curvature-aware assignments while
preserving the original hard quantization rule for deployment. Detailed
implementation and additional algorithmic details are provided in
Appendix~\ref{sec:appendix_implementation}.

\newlength{\barqfigh}
\newcommand{\barqeqlink}[5]{%
  \node[anchor=south west,inner sep=0,outer sep=0] at (#1,#2)
    {\hyperref[#5]{\makebox[#3\textwidth][l]{\rule{0pt}{#4\barqfigh}}}};%
}

\begin{figure*}[t]
\centering
\setlength{\barqfigh}{0.40855615\textwidth}
\begingroup\hypersetup{pdfborder={0 0 0}}%
\begin{tikzpicture}[x=\textwidth,y=\barqfigh]
  \node[anchor=south west,inner sep=0,outer sep=0] at (0,0)
    {\includegraphics[width=\textwidth]{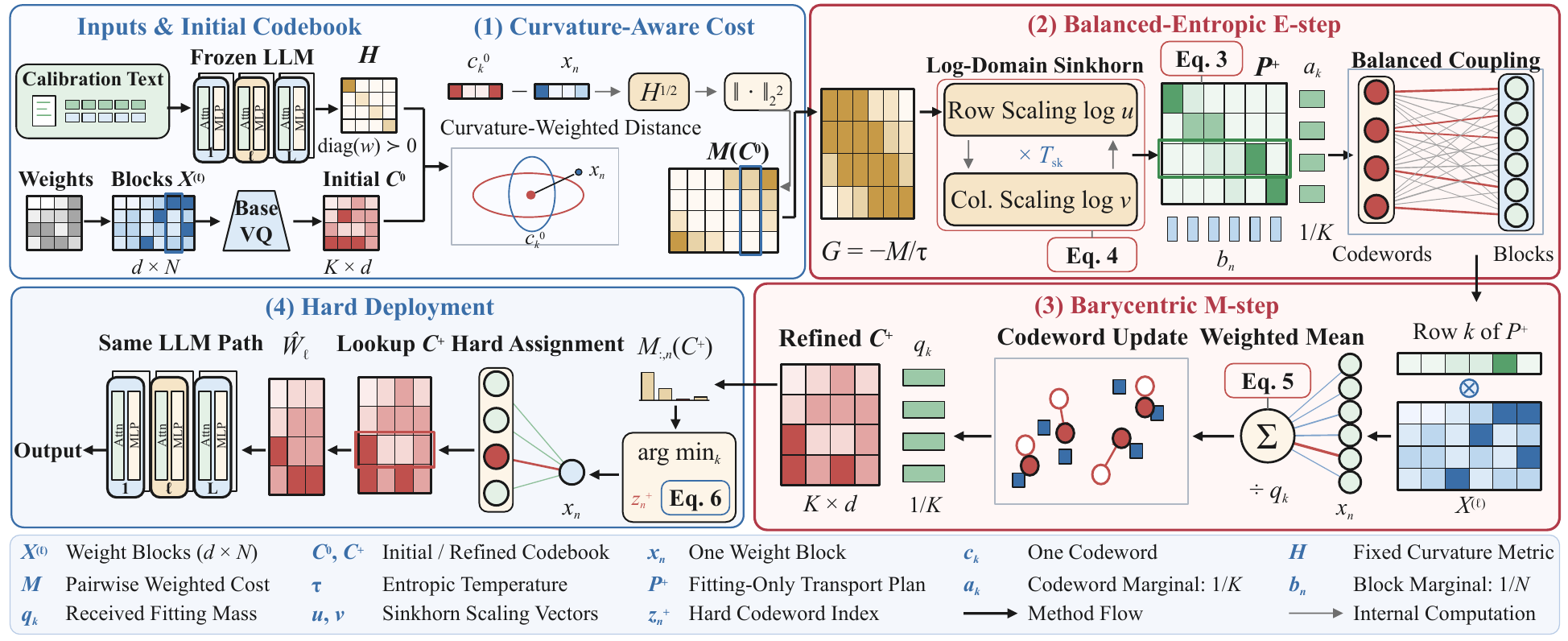}};
  \barqeqlink{0.0459893}{0.9319372}{0.1882353}{0.0549738}{sec:background}
  \barqeqlink{0.7411765}{0.8769634}{0.0502674}{0.0602094}{eq:exact_estep}
  \barqeqlink{0.7818182}{0.3743455}{0.0545455}{0.0628272}{eq:bary}
  \barqeqlink{0.4171123}{0.1910995}{0.0534759}{0.0628272}{eq:final_hard}
  \barqeqlink{0.6705882}{0.5706806}{0.0534759}{0.0628272}{eq:log_sinkhorn}
  \barqeqlink{0.2983957}{0.9319372}{0.1871658}{0.0549738}{sec:stage_cost}
  \barqeqlink{0.6513369}{0.9345550}{0.2096257}{0.0549738}{sec:stage_estep}
  \barqeqlink{0.6556150}{0.4947644}{0.1657754}{0.0575916}{sec:stage_mstep}
  \barqeqlink{0.1657754}{0.4921466}{0.1529412}{0.0575916}{sec:stage_deployment}
\end{tikzpicture}%
\endgroup
\captionsetup{skip=3pt}
\caption{\textbf{Overview of the \method pipeline (VQ: vector quantization; MLP: multilayer perceptron).}
\textbf{Curvature-Aware Cost:}
compute the distances between weight blocks and initial codewords, and set
the groupwise entropic temperature.
\textbf{Balanced-Entropic E-step:}
use log-domain Sinkhorn scaling to obtain soft assignments with uniform
codeword and block marginals.
\textbf{Barycentric M-step:}
update each codeword to the assignment-weighted barycenter of the blocks.
\textbf{Hard Deployment:}
discard the transport state and encode the weights with the refined codebook
using the original nearest-codeword rule.
Only the refined codebook and hard indices are stored.}
\label{fig:method_framework}
\end{figure*}

\subsection{Preliminary: Codebook Fitting}
\label{sec:background}
\label{sec:bg_quantizer}
\label{sec:bg_distortion}
\label{sec:bg_lloyd}
\label{sec:bg_usage}

Following learned vector quantizers for LLM
weights~\cite{gptvq2024,aqlm2024,vptq2024}, we consider one group of a linear
layer $\ell$. The group contains $N$ weight blocks
$X^{(\ell)}=[x_1,\ldots,x_N]\in\mathbb{R}^{d\times N}$, represented by a
codebook $C=\{c_k\}_{k=1}^{K}$ with $K$ codewords.
Calibration data provide positive curvature weights
$w\in\mathbb{R}_{++}^{d}$, which define the metric
$H=\operatorname{diag}(w)\succ0$.
Given a codebook, hard quantization assigns each block to the codeword with
the smallest curvature-weighted distance. We denote this distance by
$M_{k,n}(C)=\|c_k-x_n\|_H^2=(c_k-x_n)^\top H(c_k-x_n)$ and the corresponding
assignment by $z_n(C)=\arg\min_k M_{k,n}(C)$.
The reconstruction error is
\begin{equation}
D_{\mathrm{hard}}(C)
=
\sum_{n=1}^{N} b_n\min_k M_{k,n}(C),
\label{eq:hard_distortion}
\methodEqLabel{eq:cost}
\methodEqLabel{eq:hard}
\end{equation}
where $b_n>0$ specifies the contribution of block $n$ and
$\sum_n b_n=1$.

To refine the codebook, we propose the following variant of the expectation--maximization (EM)
algorithm. Instead of assigning each block to a single codeword during
fitting, we use a soft assignment matrix
$P\in\mathbb{R}_{+}^{K\times N}$, where $P_{k,n}$ is the mass assigned from
block $n$ to codeword $k$. We require the assignments to belong to
$\Pi(a,b)=\{P\ge0:P\mathbf{1}_N=a,\;P^\top\mathbf{1}_K=b\}$.
In our method, $a_k=1/K$ and $b_n=1/N$, so each block contributes the same
mass and each codeword receives the same total fitting mass.

We jointly optimize codewords $C$ and assignments $P\in\Pi(a,b)$ using the objective
\begin{equation}
\Phi_\tau(C,P)
=
R(C,P)
+
\tau\KL\!\left(P\,\middle\|\,ab^\top\right),
\label{eq:fitting_objective}
\methodEqLabel{eq:transport_polytope}
\methodEqLabel{eq:marginal}
\methodEqLabel{eq:reconstruction_term}
\methodEqLabel{eq:coupling_kl}
\methodEqLabel{eq:profiled_objectives}
\end{equation}
where $R(C,P)=\sum_{k,n}P_{k,n}M_{k,n}(C)$ is the weighted reconstruction
cost and
$\KL(P\|ab^\top)=\sum_{k,n}P_{k,n}\log\frac{P_{k,n}}{a_kb_n}$
is the Kullback--Leibler (KL) regularizer.
The temperature is $\tau=s\varepsilon>0$, where $s$ accounts for the cost
scale of the group and $\varepsilon$ controls the regularization strength.
The marginal constraints balance the fitting mass, while the regularizer
controls the softness of the assignments.

For the analysis, we also write
$J_\tau(C)=\min_{P\in\Pi(a,b)}\Phi_\tau(C,P)$ for the objective after
optimizing the assignments, and
$D_{\mathrm{bal}}(C)=\min_{P\in\Pi(a,b)}R(C,P)$ for the corresponding
balanced reconstruction cost without entropy regularization.
\subsection{The \method Algorithm}
\label{sec:method_core}
\label{sec:method_transport}
\label{sec:method_algorithm}
\begin{wrapfigure}{r}{0.6\textwidth}
\begin{minipage}{\linewidth}
\begin{algorithm}[H]
\small
\captionsetup{font=tiny}
\algrenewcommand\algorithmicindent{1em}
\caption{\textbf{\method fitting and hard deployment for one group.}}
\label{alg:sinkhorn_em}
\begin{algorithmic}[1]
\Require Blocks $\{x_n\}_{n=1}^N$; curvature weights
$w\in\mathbb{R}_{++}^{d}$; initial codebook $C^0$;
 Sinkhorn steps $T_{\mathrm{sk}}$.
\Ensure Refined codebook $C^+$ and hard indices $\{z_n^+\}_{n=1}^{N}$.
\State $H\gets\operatorname{diag}(w)$;
$a_k\gets1/K$;
$b_n\gets1/N$.
\State $M_{k,n}\gets(c_k^0-x_n)^\top H(c_k^0-x_n)$.
\State $s\gets\max\{\operatorname{median}(M),10^{-8}\}$;
$\tau\gets s\varepsilon$;
$G\gets-M/\tau$.
\State Initialize $\log u\gets\mathbf{0}$ and $\log v\gets\mathbf{0}$.
\For{$r=1,\ldots,T_{\mathrm{sk}}$}
    \State $\log u_k\gets
    \log a_k-\operatorname{LSE}_{n}(G_{k,n}+\log v_n)$.
    \State $\log v_n\gets
    \log b_n-\operatorname{LSE}_{k}(G_{k,n}+\log u_k)$.
\EndFor
\State $P^+_{k,n}\gets
\exp(\log u_k+G_{k,n}+\log v_n)$.
\State $c_k^+\gets
\dfrac{\sum_nP^+_{k,n}x_n}
{\max\{\sum_nP^+_{k,n},10^{-12}\}}$
for all $k$.
\State Discard $P^+$ and set
$z_n^+\gets\arg\min_k(c_k^+-x_n)^\top H(c_k^+-x_n)$.
\State \Return $C^+$ and $\{z_n^+\}_{n=1}^{N}$.
\end{algorithmic}
\end{algorithm}
\end{minipage}
\end{wrapfigure}

Following the objective above, we refine an initial codebook $C^0$ by
first computing balanced soft assignments and then updating the codewords.
We perform one assignment--update pair and use the codebook
$C^+$ for hard quantization. Figure~\ref{fig:method_framework} shows the
four stages; Algorithm~\ref{alg:sinkhorn_em} gives the steps.

\noindent\phantomsection\label{sec:stage_cost}\textbf{(1) Curvature-Aware Cost.}
Given the initial codebook, we first compute the cost of assigning each
weight block to each codeword. The entries of the cost matrix are
$M_{k,n}(C^0)=\|H^{1/2}(c_k^0-x_n)\|_2^2$.
To account for differences in cost scale across groups, we set
$s=\max\{\operatorname{median}(M(C^0)),10^{-8}\}$.
The temperature and the log kernel used in the assignment step are
$
\tau=s\varepsilon,
\qquad
G=-M(C^0)/\tau.
\label{eq:cost_scale}
\methodEqLabel{eq:initial_cost}
$
This makes the regularization strength relative to the initial distances
within each group. We keep $H$, $s$, and $\tau$ fixed throughout the E-step
and M-step, so both steps optimize the same fitting objective.

\noindent\phantomsection\label{sec:stage_estep}\textbf{(2) Balanced-Entropic E-step.}
Keeping the codewords fixed, we find the assignment matrix that minimizes
Equation~\eqref{eq:fitting_objective} under the prescribed marginal
constraints:
\begin{equation}
P^+
=
P_\tau(C^0)
=
\arg\min_{P\in\Pi(a,b)}
\Phi_\tau(C^0,P).
\label{eq:exact_estep}
\end{equation}
Unlike independent nearest-codeword assignments, this step assigns the
blocks jointly so that every codeword receives its prescribed mass.
We solve the assignment problem using Sinkhorn scaling. To avoid explicitly
forming small exponential kernel entries, we carry out the updates in the
log domain:
\begin{equation}
\log u_k \leftarrow \log a_k-\operatorname{LSE}_{n}(G_{k,n}+\log v_n),
\qquad
\log v_n \leftarrow \log b_n-\operatorname{LSE}_{k}(G_{k,n}+\log u_k).
\label{eq:log_sinkhorn}
\methodEqLabel{eq:plan_reconstruction}
\end{equation}
where $\operatorname{LSE}$ denotes log-sum-exp over the indicated index.
The two updates alternately enforce the codeword and block marginals.
At convergence, the assignment matrix is recovered as
$P^+_{k,n}=\exp(\log u_k+G_{k,n}+\log v_n)$.
Theorem~\ref{thm:entropic_estep} establishes this scaling form and the
existence of a unique positive solution.

\noindent\phantomsection\label{sec:stage_mstep}\textbf{(3) Barycentric M-step.}
Keeping the assignments found in the E-step fixed, we find the codeword
values that minimize the reconstruction cost.
The entropy term does not depend on the codewords, so the objective for
codeword $k$ is
$\sum_nP^+_{k,n}(c_k-x_n)^\top H(c_k-x_n)$.
This objective is a quadratic form in $c_k$. Since the same
positive-definite metric $H$ applies to every block, the optimal value
is computed in closed form as
\begin{equation}
c_k^+
=
\frac{\sum_nP^+_{k,n}x_n}{q_k(P^+)}
=
\frac{\sum_nP^+_{k,n}x_n}{a_k},
\label{eq:bary}
\methodEqLabel{eq:row_mass}
\end{equation}
where $q_k(P^+)=\sum_nP^+_{k,n}=a_k$ is the total mass assigned to codeword
$k$. Thus, each updated codeword is the assignment-weighted barycenter of
the blocks. The uniform marginal gives $q_k(P^+)=1/K>0$, so every codeword
has a well-defined update.
Theorem~\ref{thm:em_descent} shows that this update is the exact fixed-plan
minimizer and establishes the fitting-objective descent of the exact pair
$(P^+,C^+)$.

\noindent\phantomsection\label{sec:stage_deployment}\textbf{(4) Hard Deployment.}
After updating the codebook, we quantize the original weight blocks using
the same nearest-codeword rule as the underlying vector quantizer:
\begin{equation}
z_n^+
=
z_n(C^+)
=
\arg\min_k M_{k,n}(C^+).
\label{eq:final_hard}
\methodEqLabel{eq:hard_reconstruction}
\end{equation}
Each block is reconstructed as $\widehat{x}_n=c_{z_n^+}^+$.
The balanced assignments are used only to fit the codebook; they do not
constrain the number of blocks assigned to each codeword after hard
quantization. We discard $P^+$ and the Sinkhorn state and retain only
$C^+$ and the hard indices $\{z_n^+\}_{n=1}^{N}$.
Consequently, the codebook size, index width, lookup mechanism, and LLM
inference path remain unchanged.
In practice, we run $T_{\mathrm{sk}}$ Sinkhorn iterations and compute each
codeword update using its actual row mass, with a small numerical floor
in the denominator. The exact marginal equalities and descent guarantees
refer to the converged E-step.

\section{Theoretical Analysis}
\label{sec:theory}
This section analyzes when balanced codebook fitting improves the hard
quantizer used at deployment. We first establish existence and positivity of
the balanced-entropic assignment and characterize the barycentric codeword
update. We then derive fitting-objective descent, give a sufficient condition
for improvement after hard encoding, and connect hard distortion to downstream
model metrics. Appendix~\ref{sec:appendix_theory} develops the theoretical
details; Appendix~\ref{sec:appendix_main_proofs} contains the full proofs.

\subsection{Balanced-Entropic E-step}
\label{sec:theory_estep}

Stage~2 minimizes Equation~\eqref{eq:fitting_objective} over $\Pi(a,b)$.
The following result ensures that the assignment is well defined and that
every codeword has positive support for the subsequent update.

\begin{theorem}[Well-posed balanced-entropic E-step]
\label{thm:uniform_transport}%
\label{thm:entropic_estep}%
Assume finite costs, $\tau>0$, and strictly positive probability marginals
$a$ and $b$. The minimizer
$P_\tau(C)=\arg\min_{P\in\Pi(a,b)}\Phi_\tau(C,P)$ exists, is unique, and is
entrywise positive. It has the form
\begin{equation}
P_\tau(C)=\operatorname{diag}(u)
\exp[-M(C)/\tau]\operatorname{diag}(v),
\label{eq:sinkhorn_form}
\end{equation}
where $u\in\mathbb{R}_{++}^{K}$ and $v\in\mathbb{R}_{++}^{N}$ are unique
up to reciprocal scalar rescaling.
\end{theorem}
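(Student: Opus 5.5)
The plan is the standard convex-analysis argument for entropic optimal transport, in four steps: existence, uniqueness, positivity, and then the scaling form with its uniqueness.

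\textbf{Existence and uniqueness.} Fix $C$ and write $M=M(C)$, a finite matrix. The set $\Pi(a,b)$ is a nonempty polytope: it contains $ab^\top$, and it is compact because $P\ge0$ and $\sum_{k,n}P_{k,n}=1$. The objective $P\mapsto\Phi_\tau(C,P)=\langle P,M\rangle+\tau\sum_{k,n}P_{k,n}\log\frac{P_{k,n}}{a_kb_n}$ is continuous on $\mathbb{R}_+^{K\times N}$, using the convention $0\log0=0$. Since $a_kb_n>0$, it is strictly convex, because $t\mapsto t\log t$ is strictly convex on $[0,\infty)$ and the linear term does not affect this. Weierstrass then gives existence, and strict convexity on a convex set gives uniqueness.

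\textbf{Positivity.} Suppose the minimizer $P^\star$ has a zero entry. Compare it with $P_t=(1-t)P^\star+t\,ab^\top\in\Pi(a,b)$, which is entrywise positive for $t\in(0,1]$. Along this segment the derivative of $\Phi_\tau$ at $t\to0^+$ equals $-\infty$. The reason is that each entry with $P^\star_{k,n}=0$ contributes a term $\tau\,t a_kb_n\log(t)$, whose derivative diverges to $-\infty$. All other terms have bounded derivatives because $M$ is finite and the remaining entries are bounded away from zero. Hence $\Phi_\tau(C,P_t)<\Phi_\tau(C,P^\star)$ for small $t$, which is a contradiction. This boundary-derivative step is the one place needing care, and it is the main obstacle, modest as it is. It is what removes the $P\ge0$ constraints from the KKT system.

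\textbf{Scaling form.} Since $P^\star$ is interior to the nonnegativity constraints, only the affine marginal constraints are active. Lagrange multipliers $f\in\mathbb{R}^K$ and $g\in\mathbb{R}^N$ therefore exist; these are affine constraints, so no further qualification is needed. Stationarity reads
\[
M_{k,n}+\tau\Bigl(\log\frac{P^\star_{k,n}}{a_kb_n}+1\Bigr)-f_k-g_n=0 .
\]
This gives $P^\star_{k,n}=u_k e^{-M_{k,n}/\tau}v_n$ with
\[
u_k=a_k\exp\bigl(f_k/\tau-\tfrac12\bigr),\qquad v_n=b_n\exp\bigl(g_n/\tau-\tfrac12\bigr).
\]
Both vectors are strictly positive, which is \eqref{eq:sinkhorn_form}.

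\textbf{Uniqueness of the scalings.} Suppose $\operatorname{diag}(u)K\operatorname{diag}(v)=\operatorname{diag}(u')K\operatorname{diag}(v')$ with the kernel $K=e^{-M/\tau}>0$ entrywise. Then $u_kv_n=u'_kv'_n$ for all $k,n$. It follows that $u_k/u'_k=v'_n/v_n$ for all pairs $(k,n)$, so both ratios equal a common constant $\lambda>0$. Thus $(u',v')=(u/\lambda,\lambda v)$, which is exactly uniqueness up to reciprocal scalar rescaling.

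As an alternative route to the scaling form, Sinkhorn's theorem for positive matrices could be invoked. Any scaled matrix $\operatorname{diag}(u)K\operatorname{diag}(v)$ lying in $\Pi(a,b)$ satisfies the convex KKT conditions above, so it is the unique minimizer. The Lagrangian route is self-contained, however, so I would use it.
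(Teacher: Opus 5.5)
Your proposal is correct and follows essentially the same route as the paper. Existence and uniqueness come from compactness and strict convexity on $\Pi(a,b)$; positivity comes from perturbing toward $ab^\top$, where your $-\infty$ one-sided derivative combined with the mean value theorem is equivalent to the paper's expansion $\tau t\log t\sum_{(k,n)\in Z}a_kb_n+O(t)$; the scaling form comes from Lagrange stationarity on the affine marginal constraints; and $(u,v)$ is unique up to reciprocal rescaling because $u_kv_n=u'_kv'_n$ with an entrywise positive kernel. Your alternative Sinkhorn remark likewise mirrors the paper's closing convexity argument identifying a positive fixed point $\operatorname{diag}(u)\mathcal K\operatorname{diag}(v)\in\Pi(a,b)$ as the unique minimizer.
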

In particular, $q_k(P_\tau(C))=a_k>0$ for every codeword, and the uniform
marginal gives exactly $1/K$ fitting mass to each. The entropic term makes
the coupling unique and entrywise positive, while
Equation~\eqref{eq:sinkhorn_form} gives the scaling representation used by
the log-domain Sinkhorn updates~\cite{cuturi2013sinkhorn,peyre2019computational}.

\subsection{Barycentric M-step and Profiled Descent}
\label{sec:theory_mstep}

For fixed assignments, Stage~3 minimizes reconstruction cost by moving each
codeword to the weighted center of its assigned blocks.

\begin{definition}[Barycentric codeword]
\label{def:barycentric_codeword}
For $P\ge0$, the barycentric codeword of a row with $q_k(P)>0$ is
$c_k^P=q_k(P)^{-1}\sum_nP_{k,n}x_n$. When all row sums are positive,
write $C^P=\{c_k^P\}_{k=1}^{K}$.
\end{definition}

\begin{theorem}[Barycentric M-step and one-pair descent]
\label{thm:barycentric_mstep}%
\label{thm:barycentric_variance}%
\label{thm:balanced_refinement_descent}%
\label{thm:em_descent}%
\label{thm:profiled_descent}%
Let $P\ge0$ have positive row sums. Then $C^P$ uniquely minimizes $R(C,P)$,
and the excess cost of any $C$ over $C^P$ is exactly
$\sum_kq_k(P)\|c_k-c_k^P\|_H^2$.

Fix $H\succ0$, $a,b$, and $\tau>0$. For finite costs, the \emph{exact pair} $P^+=P_\tau(C^0)$, $C^+=C^{P^+}$ satisfies
\begin{equation}
J_\tau(C^+)\le J_\tau(C^0)
-\sum_ka_k\|c_k^+-c_k^0\|_H^2.
\label{eq:em_descent}
\end{equation}
Consequently, $J_\tau(C^+)<J_\tau(C^0)$ whenever $C^+\ne C^0$.
For every codebook $C$, hard distortion is bounded by balanced reconstruction
cost and the regularized fitting objective:
\[
D_{\mathrm{hard}}(C)\le D_{\mathrm{bal}}(C)\le J_\tau(C).
\]
\end{theorem}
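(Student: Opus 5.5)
The proof splits into three parts: the fixed-plan minimization, the descent inequality, and the chain of bounds.

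\textbf{Fixed-plan minimization.} For a fixed $P$ with $q_k(P)>0$, the cost $R(C,P)=\sum_k\sum_nP_{k,n}\|c_k-x_n\|_H^2$ separates over codewords. For each $k$ we insert $c_k^P$ and expand:
\[
\|c_k-x_n\|_H^2=\|c_k-c_k^P\|_H^2+2(c_k-c_k^P)^\top H(c_k^P-x_n)+\|c_k^P-x_n\|_H^2 .
\]
Summing with weights $P_{k,n}$, the cross term vanishes, since $\sum_nP_{k,n}(c_k^P-x_n)=q_k(P)c_k^P-\sum_nP_{k,n}x_n=0$ by Definition~\ref{def:barycentric_codeword}. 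This gives the exact identity
\[
R(C,P)=R(C^P,P)+\sum_kq_k(P)\|c_k-c_k^P\|_H^2 .
\]
Because $H\succ0$ and $q_k(P)>0$, the excess term is zero only when $C=C^P$. This proves both uniqueness and the excess-cost formula.

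\textbf{Descent.} Let $P^+=P_\tau(C^0)$. By Theorem~\ref{thm:entropic_estep}, $P^+$ exists and lies in $\Pi(a,b)$, so $q_k(P^+)=a_k>0$ and the first part applies. The KL term depends only on $P$, so the identity above transfers directly to $\Phi_\tau$:
\[
\Phi_\tau(C^0,P^+)=\Phi_\tau(C^+,P^+)+\sum_ka_k\|c_k^0-c_k^+\|_H^2 .
\]
The left side equals $J_\tau(C^0)$ by optimality of $P^+$. Since $P^+\in\Pi(a,b)$ is feasible for $C^+$, we have $J_\tau(C^+)\le\Phi_\tau(C^+,P^+)$. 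Combining the two yields \eqref{eq:em_descent}.

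Strict decrease follows because $a_k>0$ and $H\succ0$. If $C^+\ne C^0$, some term $a_k\|c_k^+-c_k^0\|_H^2$ is positive. The minima defining $J_\tau$ are finite because the costs are finite and $\Pi(a,b)$ is a compact polytope on which $\Phi_\tau$ is continuous, with the convention $0\log0=0$.

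\textbf{Chain of bounds.} First, $D_{\mathrm{bal}}(C)\le J_\tau(C)$. For every $P\in\Pi(a,b)$, $\KL(P\|ab^\top)\ge0$, because it is a KL divergence between two probability vectors of equal total mass $1$. Hence $R(C,P)\le\Phi_\tau(C,P)$, and taking minima over $\Pi(a,b)$ gives the bound.

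Second, $D_{\mathrm{hard}}(C)\le D_{\mathrm{bal}}(C)$. For any $P\in\Pi(a,b)$,
\[
R(C,P)=\sum_n\sum_kP_{k,n}M_{k,n}(C)\ge\sum_n\Bigl(\sum_kP_{k,n}\Bigr)\min_kM_{k,n}(C)=\sum_nb_n\min_kM_{k,n}(C),
\]
which is $D_{\mathrm{hard}}(C)$. Minimizing over $P$ gives the bound. This step uses the column-marginal constraint together with the same weights $b_n$ that appear in \eqref{eq:hard_distortion}.

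\textbf{Main obstacle.} The only delicate points are bookkeeping. The descent argument must compare $C^0$ and $C^+$ under the same fixed $\tau$ and $H$, which the method guarantees. The hard-distortion bound must use weights $b_n$ consistent with the block marginal $b$.
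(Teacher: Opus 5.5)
Your proposal is correct and follows essentially the same argument as the paper. The centering identity $\sum_nP_{k,n}(c_k^P-x_n)=0$ removes the cross term and gives the exact excess-cost formula, and descent then follows from $J_\tau(C^+)\le\Phi_\tau(C^+,P^+)=\Phi_\tau(C^0,P^+)-\sum_ka_k\|c_k^+-c_k^0\|_H^2=J_\tau(C^0)-\sum_ka_k\|c_k^+-c_k^0\|_H^2$ by exact E-step optimality, while your bound chain matches the paper's use of the column-marginal hard-decoding inequality together with nonnegativity of the KL term.
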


The fixed-plan identity is the weighted second Lloyd condition: codeword
movement accounts exactly for the reconstruction cost removed by the M-step.
Equation~\eqref{eq:em_descent} therefore guarantees descent of the fitting objective.

\subsection{Transfer to Hard Distortion}
\label{sec:method_effect_bridge}
\label{sec:appendix_measurable}

For the exact pair generated from $C^0$, let
$\Delta_0=\sum_ka_k\|c_k^+-c_k^0\|_H^2$ be the barycentric gain and let
$\Gamma_0=R(C^0,P^+)-D_{\mathrm{hard}}(C^0)$ be the assignment penalty.
These quantities compare the reconstruction gain from moving the
codewords with the cost of changing their fitting assignments.

\begin{theorem}[Transfer to hard distortion]
\label{thm:hard_improvement_condition}%
\label{thm:gap_bound}%
For the exact pair generated from $C^0$, suppose $m_Q=\Delta_0-\Gamma_0>0$. The hard distortion then satisfies
\begin{equation}
D_{\mathrm{hard}}(C^+)\le D_{\mathrm{hard}}(C^0)-m_Q.
\label{eq:hard_margin}
\end{equation}
\end{theorem}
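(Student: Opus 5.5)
We chain three inequalities, using the fixed-plan identity of Theorem~\ref{thm:barycentric_mstep} as the central tool. Let $P^+=P_\tau(C^0)$ and $C^+=C^{P^+}$. Since $P^+\in\Pi(a,b)$ with $a_k>0$ (Theorem~\ref{thm:entropic_estep}), it has positive row sums $q_k(P^+)=a_k$. Applying the excess-cost identity with $C=C^0$ gives
\[
R(C^0,P^+)=R(C^+,P^+)+\sum_k a_k\|c_k^0-c_k^+\|_H^2=R(C^+,P^+)+\Delta_0 .
\]
Rearranging yields $R(C^+,P^+)=R(C^0,P^+)-\Delta_0$.

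Next we bound the left side from below by the hard distortion of $C^+$. Every column of $P^+$ sums to $b_n$, and $\sum_k P^+_{k,n}M_{k,n}(C^+)\ge b_n\min_k M_{k,n}(C^+)$ holds columnwise. Summing over $n$ gives
\[
D_{\mathrm{hard}}(C^+)\le R(C^+,P^+).
\]
This is the same elementary step underlying $D_{\mathrm{hard}}\le D_{\mathrm{bal}}$ in Theorem~\ref{thm:barycentric_mstep}. Here it is applied to the specific plan $P^+$ rather than the minimizing plan.

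Finally we substitute the definition of the assignment penalty, $R(C^0,P^+)=D_{\mathrm{hard}}(C^0)+\Gamma_0$. Combining the three relations gives
\[
D_{\mathrm{hard}}(C^+)\le D_{\mathrm{hard}}(C^0)+\Gamma_0-\Delta_0=D_{\mathrm{hard}}(C^0)-m_Q,
\]
which is Equation~\eqref{eq:hard_margin}. The hypothesis $m_Q>0$ is used only to make this a strict improvement; the inequality itself holds unconditionally.

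There is no real obstacle beyond bookkeeping. The one point to check is that the excess-cost identity applies, which requires positive row sums. This is guaranteed by the exact E-step and would fail for truncated Sinkhorn plans. A second check is that both $R$ and $D_{\mathrm{hard}}$ use the same weights $b_n$ and the same metric $H$, so the columnwise comparison is valid.
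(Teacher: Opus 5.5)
Your proof is correct and follows the paper's argument. The paper likewise combines the fixed-plan identity $R(C^+,P^+)=R(C^0,P^+)-\Delta_0$, Fact~\ref{fact:hard_decoding} applied at $C^+$ (naming the discarded slack $S_1=R(C^+,P^+)-D_{\mathrm{hard}}(C^+)\ge0$), and the definition of $\Gamma_0$, obtaining the exact identity $D_{\mathrm{hard}}(C^+)-D_{\mathrm{hard}}(C^0)=\Gamma_0-\Delta_0-S_1\le\Gamma_0-\Delta_0$; your remark that the inequality holds without assuming $m_Q>0$ matches the paper's observation that the condition is sufficient rather than necessary because $S_1$ is dropped.
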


The condition is sufficient: hard reassignment after the
M-step can remove additional cost. Appendix~\ref{sec:proof_theorem_3}
gives the exact decomposition and two upper bounds on $\Gamma_0$, separating
the cost of balancing from the entropic contribution.

\subsection{From Hard Distortion to Model Metrics}
\label{sec:method_loss_bridge}

For a linear layer $W$ partitioned into groups $g$, use the uniform block
weights $b_n=1/N_g$ and write
$D(C)=\sum_gN_gD_{\mathrm{hard}}^{(g)}(C)$ for its aggregate hard distortion.
The groupwise certificates give
$D(C^+)-D(C^0)\le-m_{\mathrm{tot}}$, where
$m_{\mathrm{tot}}=\sum_gN_gm_Q^{(g)}$.
Let $Q_C(W)$ be the hard-decoded weight matrix,
$\Delta W_C=Q_C(W)-W$, and $\Sigma=\mathbb{E}[\xi\xi^\top]$ the second
moment of the layer input $\xi$.

For the full model, stack the perturbations of all quantized layers into
$\delta_C$ and collect the group metrics into the correspondingly ordered
block-diagonal matrix $\widetilde H$, so that
$D(C)=\delta_C^\top\widetilde H\delta_C$ now denotes total distortion.
Write $\mathcal{L}(C)$ and $\mathcal{L}(W)$ for the average per-token
negative log-likelihoods of the quantized and original models. The
approximation term
$\eta_C=|\mathcal{L}(C)-\mathcal{L}(W)-\tfrac12D(C)|$
measures the discrepancy between the actual loss change and its quadratic
approximation.

\begin{theorem}[Layer-output, loss, and perplexity transfer]
\label{prop:output_error}%
\label{thm:loss_bridge}%
For every codebook,
$\mathbb{E}\|\Delta W_C\xi\|_2^2
=\operatorname{tr}(\Delta W_C\Sigma\Delta W_C^\top)$.
For one layer, assume that (i) $\Sigma=\operatorname{diag}(\sigma)$ with
$\sigma\ge0$; (ii) the blocks partition $W$, and each block of group $g$
consists of $d$ consecutive entries of one row in the same input-coordinate
set $J_g$; and (iii)
$H_g=\kappa\operatorname{diag}(\sigma_{J_g})$ for a common $\kappa>0$.
Then
\begin{equation}
D(C)=\kappa\,\mathbb{E}\|\Delta W_C\xi\|_2^2,
\label{eq:output_error_identity}
\end{equation}
so any strict reduction in layer distortion reduces expected layer-output
error.

For the full model, suppose
$D(C^+)-D(C^0)\le-m_{\mathrm{tot}}$. Then
\begin{equation}
\mathcal{L}(C^+)-\mathcal{L}(C^0)
\le-\tfrac12m_{\mathrm{tot}}+\eta_{C^+}+\eta_{C^0}.
\label{eq:loss_bridge}
\end{equation}
In particular, $m_{\mathrm{tot}}>2(\eta_{C^+}+\eta_{C^0})$ implies strict
decreases in both negative log-likelihood and perplexity, where
$\mathrm{PPL}(C)=\exp\mathcal{L}(C)$.
Equation~\eqref{eq:ppl_ratio} gives the perplexity ratio bound.
\end{theorem}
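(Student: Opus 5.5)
The statement has three independent parts, and the plan treats them in order: the output-error identity, the diagonal-metric identity \eqref{eq:output_error_identity}, and the loss bridge \eqref{eq:loss_bridge}.

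\textbf{Output-error identity.} This part holds for every codebook. Since $\Delta W_C$ is deterministic, we can write
\[
\|\Delta W_C\xi\|_2^2=\operatorname{tr}(\Delta W_C\xi\xi^\top\Delta W_C^\top).
\]
Taking expectations and using linearity of trace and expectation gives $\operatorname{tr}(\Delta W_C\Sigma\Delta W_C^\top)$.

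\textbf{Identity \eqref{eq:output_error_identity}.} Under assumption (i), this trace becomes
\[
\sum_{i}\sum_j \sigma_j(\Delta W_C)_{ij}^2 .
\]
We regroup this double sum by blocks, which is legitimate because assumption (ii) says the blocks partition $W$. Consider a block $n$ of group $g$ lying in row $i$ with coordinates $J_g$. Its residual is $\widehat x_n-x_n=(\Delta W_C)_{i,J_g}$. Its contribution is therefore
\[
\sum_{j\in J_g}\sigma_j(\Delta W_C)_{ij}^2=\kappa^{-1}\|c_{z_n}-x_n\|_{H_g}^2,
\]
using assumption (iii). Hard decoding uses the nearest codeword, so this equals $\kappa^{-1}\min_k M_{k,n}(C)$. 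Summing over the blocks of group $g$ with $b_n=1/N_g$ gives $N_g D^{(g)}_{\mathrm{hard}}(C)$. Summing over $g$ and multiplying by $\kappa$ yields $D(C)=\kappa\,\mathbb{E}\|\Delta W_C\xi\|_2^2$. Since $\kappa>0$, a strict decrease in $D$ gives a strict decrease in expected output error.

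\textbf{Loss bridge.} By the definition of $\eta_C$,
\[
\mathcal{L}(C)\le \mathcal{L}(W)+\tfrac12D(C)+\eta_C
\quad\text{and}\quad
\mathcal{L}(C)\ge\mathcal{L}(W)+\tfrac12 D(C)-\eta_C .
\]
Apply the upper bound at $C^+$ and the lower bound at $C^0$, then subtract. This gives
\[
\mathcal{L}(C^+)-\mathcal{L}(C^0)\le\tfrac12\bigl(D(C^+)-D(C^0)\bigr)+\eta_{C^+}+\eta_{C^0}.
\]
Inserting the hypothesis $D(C^+)-D(C^0)\le -m_{\mathrm{tot}}$ gives \eqref{eq:loss_bridge}. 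If $m_{\mathrm{tot}}>2(\eta_{C^+}+\eta_{C^0})$, the right-hand side is negative, so the NLL strictly decreases. Because $\exp$ is strictly increasing, perplexity strictly decreases as well. Exponentiating also gives the ratio bound
\[
\mathrm{PPL}(C^+)/\mathrm{PPL}(C^0)\le\exp\bigl(-\tfrac12m_{\mathrm{tot}}+\eta_{C^+}+\eta_{C^0}\bigr).
\]

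\textbf{Main obstacle.} The only delicate step is the bookkeeping in the diagonal identity. We must check that the weighting $N_g b_n=1$ exactly cancels the normalization in $D^{(g)}_{\mathrm{hard}}$, so that each block is counted once. We must also check that the hard-decoded residual is indeed the minimizing codeword's residual. Beyond that, the full-model step needs no structural assumption, because $\eta_C$ absorbs all non-quadratic effects.
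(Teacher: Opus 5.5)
Your proposal is correct and follows the paper's proof essentially step for step. It uses the trace identity via linearity of expectation, regroups $\sum_{i,j}\sigma_j(\Delta W_C)_{ij}^2$ block by block under the partition and the aligned metric $H_g=\kappa\operatorname{diag}(\sigma_{J_g})$ (with $N_g b_n=1$ recovering $\sum_g N_g D^{(g)}_{\mathrm{hard}}$), and applies the two-sided bound from the definition of $\eta_C$ at $C^+$ and $C^0$ before exponentiating for the perplexity ratio.
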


The layer identity assumes alignment between the refinement metric and the
input second moment, as discussed in Appendix~\ref{sec:appendix_implementation}.
For the full model, loss improvement follows when the distortion margin exceeds
the approximation terms characterized in Lemma~\ref{lem:local_loss_expansion}.


\definecolor{barqtint}{RGB}{230,239,252}
\definecolor{barqaccent}{RGB}{38,84,156}
\definecolor{barqgood}{RGB}{31,135,72}
\definecolor{barqbad}{RGB}{196,48,48}
\definecolor{barqmute}{RGB}{120,126,138}
\definecolor{barqrule}{RGB}{176,184,198}
\ifcsname NC@find@Z\endcsname\else\newcolumntype{Z}{>{\raggedleft\arraybackslash}X}\fi
\ifcsname NC@find@V\endcsname\else\newcolumntype{V}{>{\columncolor{barqtint}\raggedleft\arraybackslash}X}\fi
\ifcsname NC@find@L\endcsname\else\newcolumntype{L}{>{\raggedright\arraybackslash}X}\fi
\ifcsname NC@find@I\endcsname\else\newcolumntype{I}{!{\hspace{5pt}}}\fi
\providecommand{\barqdash}{\raisebox{-\dp\csname @arstrutbox\endcsname}{\vbox to \dimexpr\ht\csname @arstrutbox\endcsname+\dp\csname @arstrutbox\endcsname\relax{\cleaders\vbox{\kern0.55pt{\color{barqrule}\hrule width 0.45pt height 1.3pt}\kern0.55pt}\vfil}}}
\ifcsname NC@find@Q\endcsname\else\newcolumntype{Q}{!{\hspace{2.25pt}{\color{barqrule}\vrule width 0.5pt}\hspace{2.25pt}}}\fi
\providecommand{\barqsetup}{%
  \setlength{\aboverulesep}{0.8pt}\setlength{\belowrulesep}{0.8pt}%
  \setlength{\extrarowheight}{0pt}\arrayrulecolor{black}%
  \setlength{\heavyrulewidth}{1.0pt}\setlength{\lightrulewidth}{0.5pt}%
  \setlength{\cmidrulewidth}{0.4pt}}
\definecolor{barqqwen}{RGB}{97,92,237}%
\definecolor{barqllama}{RGB}{4,103,223}%
\providecommand{\barqlogo}[1]{\smash{\raisebox{-0.15em}{\includegraphics[height=0.92em]{tables/logos/#1}}}}
\providecommand{\barqmodelstrut}{\rule[-0.12em]{0pt}{1.16em}}
\providecommand{\barqmodel}[1]{\barqmodelstrut\barqmodelx#1\relax}
\def\barqmodelx#1#2\relax{\ifx Q#1{\barqlogo{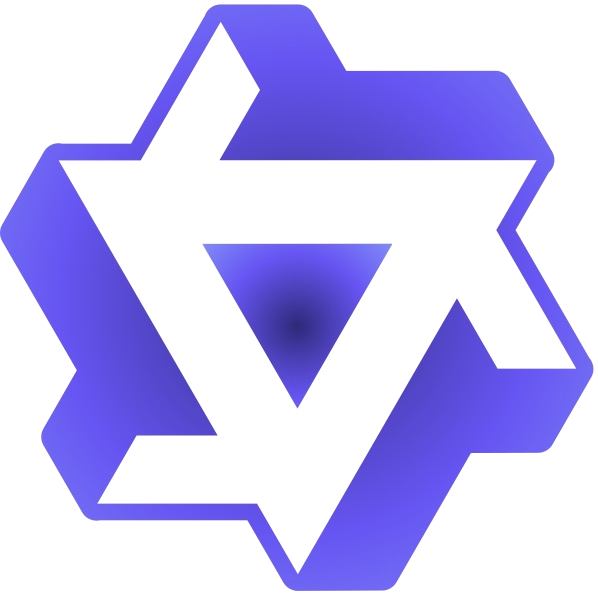}\,\textcolor{barqqwen}{\textbf{#1#2}}}\else{\barqlogo{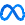}\,\textcolor{barqllama}{\textbf{#1#2}}}\fi}
\providecommand{\barqdw}{1.9em}
\providecommand{\barqd}[2]{\makebox[\barqdw][l]{\kern0.6pt\raisebox{0.62ex}{\fontsize{5.3pt}{5.3pt}\selectfont\color{#1}#2}}}
\providecommand{\barqsecond}[1]{\underline{#1}}
\providecommand{\barqinner}[1]{\arrayrulecolor{barqrule}\cmidrule[0.3pt]{#1}\arrayrulecolor{black}}
\providecommand{\barqinnertwo}[2]{\arrayrulecolor{barqrule}\cmidrule[0.3pt](r){#1}\cmidrule[0.3pt]{#2}\arrayrulecolor{black}}

\begin{table*}[!t]
  \centering
  \caption{\textbf{Paired comparison with GPT-VQ across 60 configurations.}
  \method lowers PPL in 54/60 configurations and raises Avg.\ in 53/60;
  both improve in 47/60. Results cover six models in two panels. PPL is WikiText-2 perplexity;
  Avg.\ is the five-task zero-shot mean~(\%). For each metric, the GPT-VQ baseline
  and \method (Ours) are shown side by side and the better value is bold (superscript: Ours $-$ GPT-VQ). Full
  per-task accuracies are in
  Appendix \ref{sec:appendix_evaluation}.}
  \label{tab:main}
  \barqsetup\renewcommand{\barqdw}{1.55em}\setlength{\aboverulesep}{0.3pt}\setlength{\belowrulesep}{0.3pt}
  \fontsize{8.3pt}{8.9pt}\selectfont
  \renewcommand{\arraystretch}{0.86}
  \setlength{\tabcolsep}{0.9pt}
  \begin{tabularx}{\textwidth}{@{}ccc >{\hsize=.82\hsize}Z>{\hsize=1.18\hsize}V>{\hsize=.82\hsize}Z>{\hsize=1.18\hsize\columncolor{barqtint}[\tabcolsep][0pt]\raggedleft\arraybackslash}X Q ccc >{\hsize=.82\hsize}Z>{\hsize=1.18\hsize}V>{\hsize=.82\hsize}Z>{\hsize=1.18\hsize}V@{}}
    \toprule
    \multicolumn{3}{@{}c}{\textbf{Setting}} & \multicolumn{2}{c}{\textbf{PPL}\,$\downarrow$}
      & \multicolumn{2}{c}{\textbf{Avg.\,(\%)}\,$\uparrow$}
    & \multicolumn{3}{c}{\textbf{Setting}} & \multicolumn{2}{c}{\textbf{PPL}\,$\downarrow$}
      & \multicolumn{2}{c@{}}{\textbf{Avg.\,(\%)}\,$\uparrow$} \\
    \cmidrule(r){1-3}\cmidrule(lr){4-5}\cmidrule(lr){6-7}
    \cmidrule(r){8-10}\cmidrule(lr){11-12}\cmidrule(l){13-14}
        \rule{0pt}{2.2ex}$d$ & gs & bpv & \multicolumn{1}{c}{GPT-VQ} & \multicolumn{1}{>{\columncolor{barqtint}}c}{\textbf{Ours}} & \multicolumn{1}{c}{GPT-VQ} & \multicolumn{1}{>{\columncolor{barqtint}[\tabcolsep][0pt]}cQ}{\textbf{Ours}}
    & $d$ & gs & bpv & \multicolumn{1}{c}{GPT-VQ} & \multicolumn{1}{>{\columncolor{barqtint}}c}{\textbf{Ours}} & \multicolumn{1}{c}{GPT-VQ} & \multicolumn{1}{>{\columncolor{barqtint}}c@{}}{\textbf{Ours}} \\
    \midrule
    \multicolumn{7}{@{}cQ}{\barqmodel{Llama3.2-1B}} & \multicolumn{7}{c@{}}{\barqmodel{Llama3.1-8B}} \\
    \barqinnertwo{1-7}{8-14}
    \multirow{3}{*}{1} & 32 & 2.500 & 28.49 & \textbf{27.70}\barqd{barqgood}{\textendash0.79} & 46.24 & \textbf{46.57}\barqd{barqgood}{+0.33} & \multirow{3}{*}{1} & 32 & 2.500 & 12.57 & \textbf{12.17}\barqd{barqgood}{\textendash0.40} & \textbf{61.22} & 59.60\barqd{barqbad}{\textendash1.62} \\
     & 64 & 2.250 & \textbf{44.24} & 46.36\barqd{barqbad}{+2.12} & 42.48 & \textbf{43.49}\barqd{barqgood}{+1.01} &  & 64 & 2.250 & 14.73 & \textbf{14.06}\barqd{barqgood}{\textendash0.67} & 54.65 & \textbf{55.01}\barqd{barqgood}{+0.36} \\
     & 128 & 2.125 & 95.41 & \textbf{92.04}\barqd{barqgood}{\textendash3.37} & \textbf{39.66} & 39.64\barqd{barqbad}{\textendash0.02} &  & 128 & 2.125 & 15.91 & \textbf{15.59}\barqd{barqgood}{\textendash0.32} & 47.74 & \textbf{51.03}\barqd{barqgood}{+3.29} \\
    \addlinespace[1.6pt]
    \multirow{3}{*}{2} & 32 & 2.500 & 14.67 & \textbf{13.25}\barqd{barqgood}{\textendash1.42} & 57.85 & \textbf{59.59}\barqd{barqgood}{+1.74} & \multirow{3}{*}{2} & 32 & 2.500 & 8.89 & \textbf{8.32}\barqd{barqgood}{\textendash0.57} & 71.42 & \textbf{71.74}\barqd{barqgood}{+0.32} \\
     & 64 & 2.250 & 20.04 & \textbf{16.57}\barqd{barqgood}{\textendash3.47} & 51.08 & \textbf{54.32}\barqd{barqgood}{+3.24} &  & 64 & 2.250 & 9.63 & \textbf{9.13}\barqd{barqgood}{\textendash0.50} & 66.20 & \textbf{68.01}\barqd{barqgood}{+1.81} \\
     & 128 & 2.125 & 26.95 & \textbf{22.00}\barqd{barqgood}{\textendash4.95} & 48.19 & \textbf{49.57}\barqd{barqgood}{+1.38} &  & 128 & 2.125 & 10.92 & \textbf{10.42}\barqd{barqgood}{\textendash0.50} & 60.59 & \textbf{65.39}\barqd{barqgood}{+4.80} \\
    \addlinespace[1.6pt]
    \multirow{4}{*}{4} & 32 & 2.500 & 14.95 & \textbf{13.19}\barqd{barqgood}{\textendash1.76} & 58.15 & \textbf{60.01}\barqd{barqgood}{+1.86} & \multirow{4}{*}{4} & 32 & 2.500 & 6.62 & \textbf{6.15}\barqd{barqgood}{\textendash0.47} & 72.11 & \textbf{72.74}\barqd{barqgood}{+0.63} \\
     & 64 & 2.250 & 15.40 & \textbf{13.23}\barqd{barqgood}{\textendash2.17} & 57.67 & \textbf{59.51}\barqd{barqgood}{+1.84} &  & 64 & 2.250 & 6.83 & \textbf{6.16}\barqd{barqgood}{\textendash0.67} & 71.08 & \textbf{72.95}\barqd{barqgood}{+1.87} \\
     & 128 & 2.125 & 16.10 & \textbf{13.21}\barqd{barqgood}{\textendash2.89} & 56.38 & \textbf{59.65}\barqd{barqgood}{+3.27} &  & 128 & 2.125 & 11.00 & \textbf{8.30}\barqd{barqgood}{\textendash2.70} & 69.47 & \textbf{72.07}\barqd{barqgood}{+2.60} \\
     & 1024 & 2.500 & 19.09 & \textbf{13.84}\barqd{barqgood}{\textendash5.25} & 52.23 & \textbf{57.97}\barqd{barqgood}{+5.74} &  & 1024 & 2.500 & 10.69 & \textbf{8.38}\barqd{barqgood}{\textendash2.31} & 64.86 & \textbf{70.52}\barqd{barqgood}{+5.66} \\
    \midrule
    \multicolumn{7}{@{}cQ}{\barqmodel{Qwen3-1.7B}} & \multicolumn{7}{c@{}}{\barqmodel{Qwen3-8B}} \\
    \barqinnertwo{1-7}{8-14}
    \multirow{3}{*}{1} & 32 & 2.500 & \textbf{31.97} & 33.41\barqd{barqbad}{+1.44} & 44.03 & \textbf{44.54}\barqd{barqgood}{+0.51} & \multirow{3}{*}{1} & 32 & 2.500 & \textbf{12.34} & 12.54\barqd{barqbad}{+0.20} & 63.40 & \textbf{63.54}\barqd{barqgood}{+0.14} \\
     & 64 & 2.250 & 52.57 & \textbf{50.23}\barqd{barqgood}{\textendash2.34} & 40.36 & \textbf{41.30}\barqd{barqgood}{+0.94} &  & 64 & 2.250 & 13.80 & \textbf{13.64}\barqd{barqgood}{\textendash0.16} & \textbf{61.87} & 60.74\barqd{barqbad}{\textendash1.13} \\
     & 128 & 2.125 & 68.21 & \textbf{63.03}\barqd{barqgood}{\textendash5.18} & 38.08 & \textbf{39.67}\barqd{barqgood}{+1.59} &  & 128 & 2.125 & \textbf{14.70} & 14.89\barqd{barqbad}{+0.19} & 57.18 & \textbf{58.57}\barqd{barqgood}{+1.39} \\
    \addlinespace[1.6pt]
    \multirow{3}{*}{2} & 32 & 2.500 & 17.62 & \textbf{16.96}\barqd{barqgood}{\textendash0.66} & 59.46 & \textbf{60.97}\barqd{barqgood}{+1.51} & \multirow{3}{*}{2} & 32 & 2.500 & 9.93 & \textbf{9.70}\barqd{barqgood}{\textendash0.23} & 70.46 & \textbf{71.70}\barqd{barqgood}{+1.24} \\
     & 64 & 2.250 & 27.26 & \textbf{23.02}\barqd{barqgood}{\textendash4.24} & 47.14 & \textbf{52.80}\barqd{barqgood}{+5.66} &  & 64 & 2.250 & 10.96 & \textbf{10.73}\barqd{barqgood}{\textendash0.23} & 66.36 & \textbf{69.59}\barqd{barqgood}{+3.23} \\
     & 128 & 2.125 & 40.49 & \textbf{26.49}\barqd{barqgood}{\textendash14.00} & 42.46 & \textbf{46.68}\barqd{barqgood}{+4.22} &  & 128 & 2.125 & 12.62 & \textbf{11.35}\barqd{barqgood}{\textendash1.27} & 63.12 & \textbf{66.25}\barqd{barqgood}{+3.13} \\
    \addlinespace[1.6pt]
    \multirow{4}{*}{4} & 32 & 2.500 & 20.37 & \textbf{16.69}\barqd{barqgood}{\textendash3.68} & 54.61 & \textbf{61.49}\barqd{barqgood}{+6.88} & \multirow{4}{*}{4} & 32 & 2.500 & 10.16 & \textbf{9.75}\barqd{barqgood}{\textendash0.41} & 70.31 & \textbf{71.71}\barqd{barqgood}{+1.40} \\
     & 64 & 2.250 & 19.24 & \textbf{17.21}\barqd{barqgood}{\textendash2.03} & 57.20 & \textbf{60.88}\barqd{barqgood}{+3.68} &  & 64 & 2.250 & 10.13 & \textbf{9.75}\barqd{barqgood}{\textendash0.38} & 70.46 & \textbf{71.71}\barqd{barqgood}{+1.25} \\
     & 128 & 2.125 & 20.54 & \textbf{16.59}\barqd{barqgood}{\textendash3.95} & 55.75 & \textbf{59.68}\barqd{barqgood}{+3.93} &  & 128 & 2.125 & 10.28 & \textbf{9.72}\barqd{barqgood}{\textendash0.56} & 70.68 & \textbf{71.46}\barqd{barqgood}{+0.78} \\
     & 1024 & 2.500 & 24.57 & \textbf{17.48}\barqd{barqgood}{\textendash7.09} & 50.92 & \textbf{59.82}\barqd{barqgood}{+8.90} &  & 1024 & 2.500 & 10.26 & \textbf{9.73}\barqd{barqgood}{\textendash0.53} & 67.61 & \textbf{71.13}\barqd{barqgood}{+3.52} \\
    \midrule
    \multicolumn{7}{@{}cQ}{\barqmodel{Qwen3-14B}} & \multicolumn{7}{c@{}}{\barqmodel{Qwen3-32B}} \\
    \barqinnertwo{1-7}{8-14}
    \multirow{3}{*}{1} & 32 & 2.500 & 10.80 & \textbf{10.73}\barqd{barqgood}{\textendash0.07} & \textbf{69.45} & 67.40\barqd{barqbad}{\textendash2.05} & \multirow{3}{*}{1} & 32 & 2.500 & 9.10 & \textbf{8.95}\barqd{barqgood}{\textendash0.15} & 71.17 & \textbf{72.08}\barqd{barqgood}{+0.91} \\
     & 64 & 2.250 & 11.29 & \textbf{11.14}\barqd{barqgood}{\textendash0.15} & \textbf{66.78} & 65.90\barqd{barqbad}{\textendash0.88} &  & 64 & 2.250 & 9.54 & \textbf{9.40}\barqd{barqgood}{\textendash0.14} & \textbf{70.76} & 70.41\barqd{barqbad}{\textendash0.35} \\
     & 128 & 2.125 & 11.80 & \textbf{11.79}\barqd{barqgood}{\textendash0.01} & \textbf{64.55} & 63.23\barqd{barqbad}{\textendash1.32} &  & 128 & 2.125 & \textbf{9.86} & 9.88\barqd{barqbad}{+0.02} & 69.98 & \textbf{70.87}\barqd{barqgood}{+0.89} \\
    \addlinespace[1.6pt]
    \multirow{3}{*}{2} & 32 & 2.500 & 8.79 & \textbf{8.65}\barqd{barqgood}{\textendash0.14} & 74.70 & \textbf{75.34}\barqd{barqgood}{+0.64} & \multirow{3}{*}{2} & 32 & 2.500 & 7.76 & \textbf{7.62}\barqd{barqgood}{\textendash0.14} & 75.91 & \textbf{76.39}\barqd{barqgood}{+0.48} \\
     & 64 & 2.250 & 9.61 & \textbf{9.33}\barqd{barqgood}{\textendash0.28} & 72.41 & \textbf{72.88}\barqd{barqgood}{+0.47} &  & 64 & 2.250 & 8.37 & \textbf{8.09}\barqd{barqgood}{\textendash0.28} & 73.78 & \textbf{74.77}\barqd{barqgood}{+0.99} \\
     & 128 & 2.125 & 10.31 & \textbf{9.92}\barqd{barqgood}{\textendash0.39} & 69.42 & \textbf{69.69}\barqd{barqgood}{+0.27} &  & 128 & 2.125 & 8.91 & \textbf{8.55}\barqd{barqgood}{\textendash0.36} & 72.44 & \textbf{73.96}\barqd{barqgood}{+1.52} \\
    \addlinespace[1.6pt]
    \multirow{4}{*}{4} & 32 & 2.500 & \textbf{8.78} & 8.95\barqd{barqbad}{+0.17} & 74.72 & \textbf{74.73}\barqd{barqgood}{+0.01} & \multirow{4}{*}{4} & 32 & 2.500 & 7.73 & \textbf{7.61}\barqd{barqgood}{\textendash0.12} & 76.31 & \textbf{76.36}\barqd{barqgood}{+0.05} \\
     & 64 & 2.250 & 8.81 & \textbf{8.67}\barqd{barqgood}{\textendash0.14} & 74.57 & \textbf{74.92}\barqd{barqgood}{+0.35} &  & 64 & 2.250 & 7.85 & \textbf{7.61}\barqd{barqgood}{\textendash0.24} & 75.81 & \textbf{76.47}\barqd{barqgood}{+0.66} \\
     & 128 & 2.125 & 8.93 & \textbf{8.66}\barqd{barqgood}{\textendash0.27} & 74.59 & \textbf{75.14}\barqd{barqgood}{+0.55} &  & 128 & 2.125 & 7.84 & \textbf{7.61}\barqd{barqgood}{\textendash0.23} & 75.47 & \textbf{76.41}\barqd{barqgood}{+0.94} \\
     & 1024 & 2.500 & 9.11 & \textbf{8.73}\barqd{barqgood}{\textendash0.38} & 73.36 & \textbf{74.78}\barqd{barqgood}{+1.42} &  & 1024 & 2.500 & 8.15 & \textbf{7.69}\barqd{barqgood}{\textendash0.46} & 74.56 & \textbf{76.37}\barqd{barqgood}{+1.81} \\
    \bottomrule
  \end{tabularx}
\end{table*}

\definecolor{barqtint}{RGB}{230,239,252}
\definecolor{barqaccent}{RGB}{38,84,156}
\definecolor{barqgood}{RGB}{31,135,72}
\definecolor{barqbad}{RGB}{196,48,48}
\definecolor{barqmute}{RGB}{120,126,138}
\definecolor{barqrule}{RGB}{176,184,198}
\ifcsname NC@find@Z\endcsname\else\newcolumntype{Z}{>{\raggedleft\arraybackslash}X}\fi
\ifcsname NC@find@V\endcsname\else\newcolumntype{V}{>{\columncolor{barqtint}\raggedleft\arraybackslash}X}\fi
\ifcsname NC@find@L\endcsname\else\newcolumntype{L}{>{\raggedright\arraybackslash}X}\fi
\ifcsname NC@find@I\endcsname\else\newcolumntype{I}{!{\hspace{5pt}}}\fi
\providecommand{\barqdash}{\raisebox{-\dp\csname @arstrutbox\endcsname}{\vbox to \dimexpr\ht\csname @arstrutbox\endcsname+\dp\csname @arstrutbox\endcsname\relax{\cleaders\vbox{\kern0.55pt{\color{barqrule}\hrule width 0.45pt height 1.3pt}\kern0.55pt}\vfil}}}
\ifcsname NC@find@Q\endcsname\else\newcolumntype{Q}{!{\hspace{2.25pt}{\color{barqrule}\vrule width 0.5pt}\hspace{2.25pt}}}\fi
\providecommand{\barqsetup}{%
  \setlength{\aboverulesep}{0.8pt}\setlength{\belowrulesep}{0.8pt}%
  \setlength{\extrarowheight}{0pt}\arrayrulecolor{black}%
  \setlength{\heavyrulewidth}{1.0pt}\setlength{\lightrulewidth}{0.5pt}%
  \setlength{\cmidrulewidth}{0.4pt}}
\definecolor{barqqwen}{RGB}{97,92,237}%
\definecolor{barqllama}{RGB}{4,103,223}%
\providecommand{\barqlogo}[1]{\smash{\raisebox{-0.15em}{\includegraphics[height=0.92em]{tables/logos/#1}}}}
\providecommand{\barqmodelstrut}{\rule[-0.12em]{0pt}{1.16em}}
\providecommand{\barqmodel}[1]{\barqmodelstrut\barqmodelx#1\relax}
\def\barqmodelx#1#2\relax{\ifx Q#1{\barqlogo{qwen.png}\,\textcolor{barqqwen}{\textbf{#1#2}}}\else{\barqlogo{meta.pdf}\,\textcolor{barqllama}{\textbf{#1#2}}}\fi}
\providecommand{\barqdw}{1.9em}
\providecommand{\barqd}[2]{\makebox[\barqdw][l]{\kern0.6pt\raisebox{0.62ex}{\fontsize{5pt}{5pt}\selectfont\color{#1}#2}}}
\providecommand{\barqsecond}[1]{\underline{#1}}
\providecommand{\barqinner}[1]{\arrayrulecolor{barqrule}\cmidrule[0.3pt]{#1}\arrayrulecolor{black}}
\providecommand{\barqinnertwo}[2]{\arrayrulecolor{barqrule}\cmidrule[0.3pt](r){#1}\cmidrule[0.3pt]{#2}\arrayrulecolor{black}}

\begin{table*}[!t]
  \centering
  \caption{\textbf{External baseline comparison at the displayed budgets.}
  VPTQ reports 3.000 index bpv, excluding codebooks and metadata.
  PPL is WikiText-2 perplexity; Avg.\ is mean five-task accuracy~(\%).
  A dash denotes quantization collapse in PPL and an unavailable result in Avg.
  Bold/underline mark best/second best; $\Delta$ gives \method $-$ strongest baseline.
  Table~\ref{tab:external_full} gives per-task results.}
  \label{tab:external}
  \barqsetup
  \small
  \renewcommand{\arraystretch}{0.82}
  \setlength{\tabcolsep}{2.4pt}
  \begin{tabularx}{\textwidth}{@{}l >{\hsize=.7\hsize}Z >{\hsize=1.1\hsize}Z >{\hsize=1.2\hsize}Z Q l >{\hsize=.7\hsize}Z >{\hsize=1.1\hsize}Z >{\hsize=1.2\hsize}Z@{}}
    \toprule
    \textbf{Method} & \textbf{bpv} & \textbf{PPL}\,$\downarrow$ & \textbf{Avg.\,(\%)}\,$\uparrow$
    & \textbf{Method} & \textbf{bpv} & \textbf{PPL}\,$\downarrow$ & \textbf{Avg.\,(\%)}\,$\uparrow$ \\
    \midrule
    \multicolumn{4}{@{}cQ}{\barqmodel{Llama3.2-1B}} & \multicolumn{4}{c@{}}{\barqmodel{Llama3-8B}} \\
    \barqinnertwo{1-4}{5-8}
    GPTQ & 2.500 & 209.53 & 36.38 & GPTQ & 2.500 & 32.59 & 36.88 \\
    HQQ & 2.500 & 4917.51 & 37.32 & HQQ & 2.500 & 204.69 & 43.80 \\
    QuIP & 2.000 & 377.93 & -- & QuIP & 2.000 & 38.06 & 37.94 \\
    VPTQ & 3.000 & \barqsecond{15.91} & 56.24 & VPTQ & 3.000 & \barqsecond{9.11} & \barqsecond{69.63} \\
    AQLM (no BFT) & 2.067 & 20.30 & 51.69 & AQLM (no BFT) & 2.067 & 11.05 & 62.20 \\
    AQLM (BFT) & 2.067 & 17.39 & 52.62 & AQLM (BFT) & 2.067 & 10.18 & 64.99 \\
    GPT-VQ & 2.125 & 16.10 & \barqsecond{56.38} & GPT-VQ & 2.125 & 11.00 & 69.47 \\
    \rowcolor{barqtint} \textbf{\method (Ours)} & 2.125 & \textbf{13.21} & \textbf{59.65} & \textbf{\method (Ours)} & 2.125 & \textbf{8.30} & \textbf{72.07} \\
    \rowcolor{barqtint} \leavevmode{\footnotesize\itshape\color{barqmute}\quad$\Delta$ vs.\ best baseline} &  & \leavevmode{\footnotesize\color{barqgood}$-$2.70} & \leavevmode{\footnotesize\color{barqgood}+3.27} & \leavevmode{\footnotesize\itshape\color{barqmute}\quad$\Delta$ vs.\ best baseline} &  & \leavevmode{\footnotesize\color{barqgood}$-$0.81} & \leavevmode{\footnotesize\color{barqgood}+2.44} \\[-1.2pt]
    \midrule
    \multicolumn{4}{@{}cQ}{\barqmodel{Qwen3-1.7B}} & \multicolumn{4}{c@{}}{\barqmodel{Qwen3-8B}} \\
    \barqinnertwo{1-4}{5-8}
    GPTQ & 2.500 & 666.14 & 35.62 & GPTQ & 2.500 & 17.94 & 46.00 \\
    HQQ & 2.500 & 1018.43 & 37.15 & HQQ & 2.500 & 46.57 & 44.89 \\
    QuIP & 2.000 & 333.99 & -- & QuIP & 2.000 & 25.51 & 41.77 \\
    VPTQ & 3.000 & -- & 36.32 & VPTQ & 3.000 & 10.65 & \barqsecond{70.83} \\
    AQLM (no BFT) & 2.067 & 21.68 & 48.75 & AQLM (no BFT) & 2.067 & 10.73 & 65.80 \\
    AQLM (BFT) & 2.067 & \barqsecond{18.30} & 53.50 & AQLM (BFT) & 2.067 & 10.39 & 66.19 \\
    GPT-VQ & 2.125 & 20.54 & \barqsecond{55.75} & GPT-VQ & 2.125 & \barqsecond{10.28} & 70.68 \\
    \rowcolor{barqtint} \textbf{\method (Ours)} & 2.125 & \textbf{16.59} & \textbf{59.68} & \textbf{\method (Ours)} & 2.125 & \textbf{9.72} & \textbf{71.46} \\
    \rowcolor{barqtint} \leavevmode{\footnotesize\itshape\color{barqmute}\quad$\Delta$ vs.\ best baseline} &  & \leavevmode{\footnotesize\color{barqgood}$-$1.71} & \leavevmode{\footnotesize\color{barqgood}+3.93} & \leavevmode{\footnotesize\itshape\color{barqmute}\quad$\Delta$ vs.\ best baseline} &  & \leavevmode{\footnotesize\color{barqgood}$-$0.56} & \leavevmode{\footnotesize\color{barqgood}+0.63} \\[-1.2pt]
    \midrule
    \multicolumn{4}{@{}cQ}{\barqmodel{Qwen3-14B}} & \multicolumn{4}{c@{}}{\barqmodel{Qwen3-32B}} \\
    \barqinnertwo{1-4}{5-8}
    GPTQ & 2.500 & 12.90 & 55.49 & GPTQ & 2.500 & 14.26 & 54.98 \\
    HQQ & 2.500 & 18.40 & 60.01 & HQQ & 2.500 & 15.11 & 60.83 \\
    QuIP & 2.000 & 14.83 & 49.13 & QuIP & 2.000 & 14.01 & 41.91 \\
    VPTQ & 3.000 & 9.32 & 72.99 & VPTQ & 3.000 & 8.50 & 72.34 \\
    AQLM (no BFT) & 2.067 & 9.42 & 68.65 & AQLM (no BFT) & 2.067 & 8.83 & 73.84 \\
    AQLM (BFT) & 2.067 & 9.09 & 70.03 & AQLM (BFT) & 2.067 & 8.39 & 74.38 \\
    GPT-VQ & 2.125 & \barqsecond{8.93} & \barqsecond{74.59} & GPT-VQ & 2.125 & \barqsecond{7.84} & \barqsecond{75.47} \\
    \rowcolor{barqtint} \textbf{\method (Ours)} & 2.125 & \textbf{8.66} & \textbf{75.14} & \textbf{\method (Ours)} & 2.125 & \textbf{7.61} & \textbf{76.41} \\
    \rowcolor{barqtint} \leavevmode{\footnotesize\itshape\color{barqmute}\quad$\Delta$ vs.\ best baseline} &  & \leavevmode{\footnotesize\color{barqgood}$-$0.27} & \leavevmode{\footnotesize\color{barqgood}+0.55} & \leavevmode{\footnotesize\itshape\color{barqmute}\quad$\Delta$ vs.\ best baseline} &  & \leavevmode{\footnotesize\color{barqgood}$-$0.23} & \leavevmode{\footnotesize\color{barqgood}+0.94} \\[-1.2pt]
    \bottomrule
  \end{tabularx}
  \vspace{-0.3cm}
\end{table*}

\section{Experiments}
\label{sec:experiments}
This section examines three questions. How does \method compare with GPT-VQ
across LLMs? How does it perform against external low-bit baselines at
comparable bit budgets? And how do the observed differences change with
codebook size?

\subsection{Experimental Setup}
\label{sec:experimental_setup}

\textbf{Models and Quantization Settings.}
We evaluate six LLMs: Llama3.2-1B and Llama3.1-8B from the Llama~3
family~\cite{llama3_2024}, and Qwen3-1.7B, Qwen3-8B, Qwen3-14B, and Qwen3-32B
from Qwen3~\cite{qwen3_2025}. For calibration, we use 128 sequences of length
2048 from WikiText-2~\cite{merity2017wikitext}. We consider weight-only
quantization and the block
dimension is $d\in\{1,2,4\}$, while the group size is selected from
$\{32,64,128\}$. We additionally evaluate a large-group setting with
$d=4$ and $\mathrm{gs}=1024$.



\textbf{Bit Budget.}
We account for total bits per value (bpv) using the GPT-VQ~\cite{gptvq2024} metric:
$
  \mathrm{bpv}={\frac{\log_2 K}{d}} 
  +{\frac{K\,d\,b_c}{l}} 
  +{\frac{16}{\mathrm{gs}}},
  \label{eq:bpv}
$
where $K=2^{Bd}$ is the codebook size, $b_c$ the codebook bit-width, $l$ the
number of weights sharing one codebook, and gs the scale group size. The index
term is $B=2$ bits per value throughout, and the scale term adds $0.5$, $0.25$
and $0.125$ at gs$\,=32$, $64$ and $128$; Appendix~\ref{sec:appendix_bpv} works
through the three terms. \method changes neither $K$, $d$, $l$ nor the index
width, so each paired row of Table~\ref{tab:main} compares the two methods at
one budget.

\noindent\textbf{Compared Methods and Evaluation Metrics.}
We  compare against  GPT-VQ~\cite{gptvq2024}, GPTQ~\cite{gptq2023}, HQQ~\cite{hqq2024}, AQLM~\cite{aqlm2024} with and
without block fine-tuning (BFT), and QuIP~\cite{quip2023}. We report
WikiText-2 perplexity and zero-shot accuracy on ARC-Easy,
ARC-Challenge~\cite{clark2018arc}, HellaSwag~\cite{zellers2019hellaswag},
PIQA~\cite{bisk2020piqa}, and WinoGrande~\cite{sakaguchi2021winogrande}. All
 evaluations use the lm-evaluation-harness~\cite{gao2024lmeval}.

\subsection{Comparison with GPT-VQ}
\label{sec:exp_matched}
Table~\ref{tab:main} reports the paired GPT-VQ and \method outputs on all six
models. Each row shares calibration, weights, nominal codebook size, and
reported bits-per-value metadata. On average, \method reduces perplexity by
7.0\% and improves five-task mean accuracy by 3.0\%. The observed mean
difference is larger in the larger-codebook strata: the average relative
perplexity reduction is 1.2\%, 8.2\%, and 10.4\% at $K\in\{4,16,256\}$, while
the corresponding accuracy increase is 0.7\%, 3.6\%, and 4.2\%. Individual
regressions remain concentrated at $K=4$, with one perplexity regression at
$K=256$. Tables~\ref{tab:main_full} and~\ref{tab:large_full} in
Appendix~\ref{sec:appendix_evaluation} give the per-task results.

\begin{figure*}[!t]
  \centering
  \includegraphics[width=\textwidth]{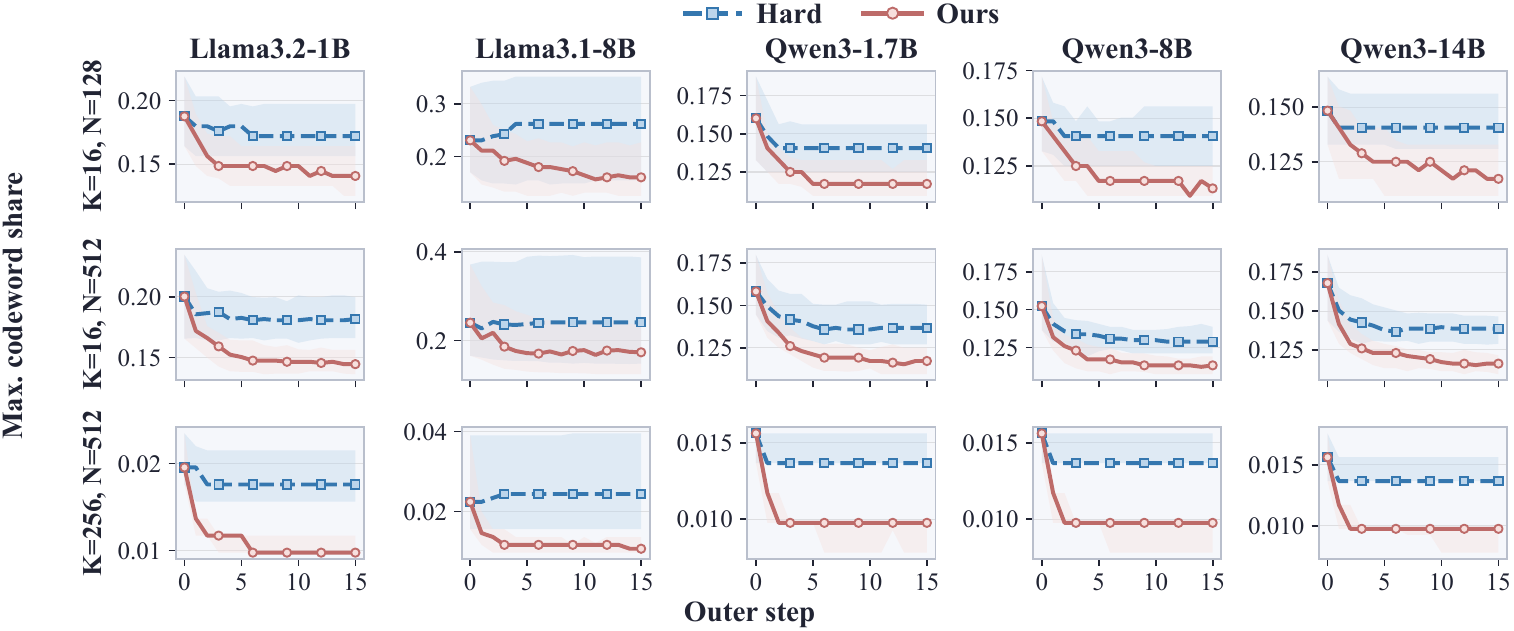}
    \captionsetup{skip=-1pt}
  \caption{\textbf{Balanced assignment reduces codeword concentration during
  fitting.} Columns are models and rows are the three settings
  $(K,N)\in\{(16,128),(16,512),(256,512)\}$. The $x$-axis is the outer
  refinement step and the $y$-axis is the maximum codeword share, the fraction
  of blocks held by the most-used codeword. Each curve is the median over the
  matched shallow and deep attention tensors and the sampled weight groups of
  that cell, and the band is the interquartile range. Blue dashed: hard assignment; red solid: \method.}
  \label{fig:step_max_share}
\end{figure*}

\subsection{Comparison Against Baselines}
\label{sec:exp_external}
Table~\ref{tab:external} compares \method with GPTQ, HQQ, QuIP, VPTQ,
and AQLM with and without block fine-tuning. The \method and GPT-VQ rows use
the lowest-bpv complete paired setting per model. VPTQ uses a 3-bit index
budget, excluding codebooks and metadata. Across the displayed rows,
\method has the lowest perplexity and highest five-task mean on all six models.
Table~\ref{tab:external_full} gives the full per-task results.
\subsection{Codebook Utilization During Fitting}
\label{sec:exp_codebook_usage}
\label{sec:exp_concentration}
Figure~\ref{fig:step_max_share} shows how the maximum codeword share evolves
during codebook fitting. Across all five models and three $(K,N)$ settings,
\method rapidly reduces the share of the most-used codeword and maintains a
lower concentration than hard assignment throughout refinement.

\begin{figure*}[!t]
  \centering
  \includegraphics[width=\textwidth]{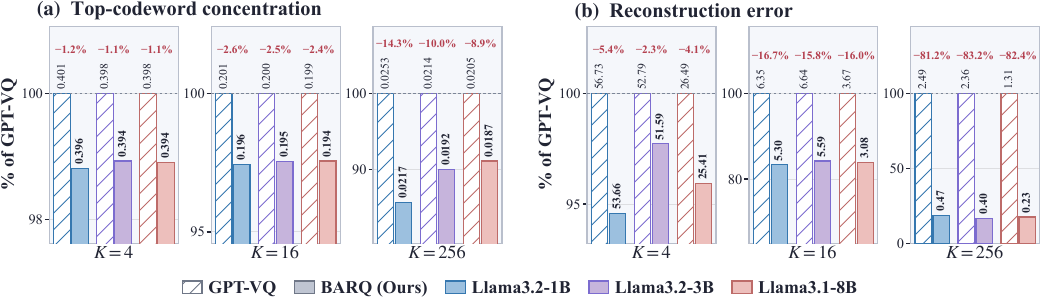}
  \captionsetup{skip=2pt}
  \caption{\textbf{Flatter codeword concentration and lower curvature-weighted
  reconstruction error across model scales.} Each codebook size has its own
  panel and each bar is scaled to its paired GPT-VQ value, so the vertical
  axes read in percent of GPT-VQ and are zoomed to their panel instead of
  starting at zero; the number on a bar is its absolute value and the red tag
  the exact reduction. Color marks the model, hatched bars are GPT-VQ and
  solid bars are \method. \textbf{(a)} Mean maximum codeword share
  $p_{\max}$, lower being flatter at encoding. \textbf{(b)} Curvature-weighted
  reconstruction error $\mathrm{avg}_g\,\lVert W_g-\widehat W_g\rVert_H^2$,
  bar labels in units of $10^{-6}$.}
  \label{fig:concentration_mse}
\end{figure*}

\textbf{Codeword Concentration and Reconstruction Error.}
Two post-refinement measurements, recorded on Llama3.2-1B, Llama3.2-3B and
Llama3.1-8B~\cite{llama3_2024} rather than on all six evaluated models,
separate the two effects. The first is the maximum codeword share
$p_{\max}=\max_k p_k$, where $p_k$ is the fraction of blocks routed to codeword
$k$ at encoding; a lower value means a flatter realized distribution. It is
lower for the paired \method output at every scale and codebook size in
Figure~\ref{fig:concentration_mse}\textbf{(a)}, and the layerwise direction
holds throughout $K=16$ and almost throughout $K=256$, with one exception at
the 8B scale. The second is the curvature-weighted reconstruction error, which
in Figure~\ref{fig:concentration_mse}\textbf{(b)} changes far more than the
count of distinct codewords: on Llama3.2-1B its weighted aggregate falls from
$6.35\times10^{-6}$ to $5.30\times10^{-6}$ at $K=16$ and from
$2.49\times10^{-6}$ to $4.67\times10^{-7}$ at $K=256$. Both changes are
consistent with codeword relocation rather than with utilization alone.

\noindent\textbf{After Hard Encoding.}
Figure~\ref{fig:step_max_share} and Figure~\ref{fig:concentration_mse}
capture different points in the procedure. The first follows maximum codeword
share through refinement; the second evaluates the resulting weights after
hard nearest-codeword encoding, the representation used at inference. The
concentration difference is therefore visible both during fitting and after
final encoding. The paired settings keep nominal $K$ and reported bpv fixed
(Table~\ref{tab:main}), so the final gap reflects a different distribution of
blocks over the available codewords. Figure~\ref{fig:concentration_mse}\textbf{(b)}
also records the curvature-weighted cost of that encoded representation. At
$K=16$ and $K=256$, the \method bars are lower for all three plotted models,
matching the codebook-size strata with larger average perplexity reductions
in Table~\ref{tab:main}.

Across the three plotted models, the mean top-codeword share falls by about
$1\%$ at $K=4$, $2$--$3\%$ at $K=16$, and $9$--$14\%$ at $K=256$ relative
to paired GPT-VQ. The corresponding curvature-weighted error reductions are
about $2$--$5\%$, $16\%$, and $81$--$83\%$, respectively. These
post-encoding changes are largest where the six-model matched comparison also
records its greatest mean perplexity reduction.




\section{Conclusion}

This work presents BARQ, a balanced codebook fitting method for post-training vector quantization of LLMs. By replacing independent hard assignments with balanced entropic assignments during refinement and updating codewords through barycentric fitting, BARQ improves codebook utilization while preserving the standard hard nearest-codeword quantization rule at deployment. The theoretical analysis establishes well-posed balanced fitting, objective descent under the codeword update, and sufficient conditions under which these improvements transfer to lower hard-quantization distortion and better model metrics. Empirically, BARQ generally improves quantization quality across different LLMs and low-bit settings, with more balanced codeword usage and lower reconstruction error. The paired comparisons and post-encoding diagnostics place this effect at both the model-output and weight-reconstruction levels. At $K=4$, both methods nearly saturate the codebook and the average perplexity gain is smaller. Larger codebooks show stronger average perplexity reductions and larger post-encoding decreases in curvature-weighted reconstruction error across the three plotted models. These measurements connect codebook fitting to the quality of the hard quantizer used at inference.

\subsection*{Reproducibility Statement}
The main text describes the refinement procedure and experimental setup. Appendix~\ref{sec:appendix_implementation} gives the implementation details, Appendix~\ref{sec:appendix_evaluation} describes the evaluation and table construction, and Appendix~\ref{sec:appendix_main_proofs} contains the proofs of the theoretical results.

\subsection*{AI Use Statement}
Generative AI tools were used to polish the grammar and wording of the manuscript. The authors reviewed the final text and take responsibility for its content.

\bibliographystyle{iclr2027_conference}
\bibliography{refs}

\appendix
\section*{APPENDIX}
The appendix is organized as follows. Appendix~\ref{sec:appendix_implementation} defines the notation and records the implementation choices, optimality conditions, and stage metrics for BARQ. Appendix~\ref{sec:appendix_evaluation} describes the calibration and evaluation protocol and explains the construction of the comparison tables. Appendix~\ref{sec:appendix_figures} presents additional results on codebook size, assignment behavior, and reconstruction over refinement. Appendix~\ref{sec:appendix_theory} gives a second-order interpretation of the approximation term in the loss analysis. Appendix~\ref{sec:appendix_main_proofs} provides auxiliary facts and full proofs of the theoretical results. Appendix~\ref{sec:appendix_limitations} discusses limitations. Appendix~\ref{sec:more_discussions} addresses questions about balanced fitting, hard deployment, and the interpretation of the reported comparisons.

\section{Implementation Details}
\label{sec:appendix}
\label{sec:appendix_implementation}


\label{sec:appendix_notation}

Table~\ref{tab:notation} lists the symbols used throughout, and
Algorithm~\ref{alg:sinkhorn_em} is the implemented refinement pair for one
group discussed in Section~\ref{sec:method_algorithm}.

\definecolor{barqtint}{RGB}{222,233,248}
\definecolor{barqband}{RGB}{232,234,240}
\definecolor{barqaccent}{RGB}{44,86,150}
\definecolor{barqrule}{RGB}{168,175,188}
\ifcsname NC@find@Z\endcsname\else
  \newcolumntype{Z}{>{\raggedleft\arraybackslash}X}%
\fi
\ifcsname NC@find@V\endcsname\else
  \newcolumntype{V}{>{\columncolor{barqtint}\raggedleft\arraybackslash}X}%
\fi
\ifcsname NC@find@L\endcsname\else
  \newcolumntype{L}{>{\raggedright\arraybackslash}X}%
\fi
\providecommand{\barqsetup}{%
  \setlength{\aboverulesep}{0pt}%
  \setlength{\belowrulesep}{0pt}%
  \setlength{\extrarowheight}{0pt}%
  \arrayrulecolor{black}%
}
\ifcsname NC@find@I\endcsname\else
  \newcolumntype{I}{!{\hspace{3pt}\color{barqrule}\vrule width .5pt\hspace{3pt}}}%
\fi
\providecommand{\barqpanel}[1]{\textbf{#1}}

\begin{table}[!htbp]
  \centering
  \caption{\textbf{Notation used throughout the paper.} The left block lists
  quantities inherited from the second-order vector-quantization setting, and
  the right block lists the quantities introduced by the fitting-stage
  refinement.}
  \label{tab:notation}
  \barqsetup\setlength{\extrarowheight}{0.30ex}
  \footnotesize
  \renewcommand{\arraystretch}{1.02}
  \setlength{\tabcolsep}{2.5pt}
  \begin{tabularx}{\textwidth}{@{}c L I c L@{}}
    \toprule
    \addlinespace[2.6pt]
    \normalfont\itshape Symbol & \normalfont\itshape Meaning
    & \normalfont\itshape Symbol & \normalfont\itshape Meaning \\
    \addlinespace[2.6pt]
    \midrule
    \addlinespace[1.5pt]
    $N$ & number of weight blocks in one group
      & $P$ & soft assignment mass, $P\ge0$ \\
    $d$ & block dimension (weights per block)
      & $q_k(P)$ & fitting mass of codeword $k$, $\sum_n P_{k,n}$ \\
    $x_n$ & the $n$-th weight block, $x_n\in\mathbb{R}^d$
      & $a_k,b_n$ & prescribed row and column marginals of $P$ \\
    $K$ & nominal codebook size, $K=2^{Bd}$
      & $\Pi(a,b)$ & couplings with marginals $a$ and $b$ \\
    $B$ & index bits per value
      & $\varepsilon$ & dimensionless entropic regularization \\
    $c_k$ & the $k$-th codeword; $C=\{c_k\}_{k=1}^{K}$
      & $s$ & groupwise cost scale, fixed per update pair \\
    $H$ & metric of one refinement pair, $H\succ0$
      & $\tau$ & regularization in cost units, $\tau=\varepsilon s$ \\
    $M_{k,n}(C)$ & quadratic cost of representing $x_n$ by $c_k$
      & $T_{\mathrm{sk}}$ & number of Sinkhorn scaling steps \\
    $z_n$ & hard-assignment index of block $n$
      & & \\
    \addlinespace[1.5pt]
    \bottomrule
  \end{tabularx}
\end{table}

\subsection{Implementation Choices}
\label{sec:appendix_implementation_choices}

\noindent\textbf{Layer-Wise Grouping and Block Construction.}
For each linear layer, the weight matrix is partitioned into independent groups.
Within a group, the weights are reshaped into $N$ blocks
$x_n\in\mathbb{R}^{d}$. The block dimension is
$d\in\{1,2,4\}$ in the reported grid. For each block dimension, the nominal
codebook has $K=2^{Bd}$ entries, with $B=2$ index bits per scalar weight in
the experiments. The tables report recorded bpv, excluding the separate
codebook term.
Appendix~\ref{sec:appendix_bpv} gives the accounting term by term.

\noindent\textbf{Bit-Budget Accounting.}
\label{sec:appendix_bpv}
The total budget uses the bits-per-value metric of
GPT-VQ~\citep{gptvq2024}, which charges the indices and the codebook that
decodes them,
\begin{equation}
  \mathrm{bpv}_{\text{GPT-VQ}}
  =\frac{\log_2 K}{d}+\frac{K\,d\,b_c}{l},
  \label{eq:bpv_gptvq}
\end{equation}
with $K$ centroids, block dimension $d$, codebook bit-width $b_c$, and $l$ the
number of weights sharing one codebook. We instantiate it as follows.

\emph{Index term.} A block of $d$ weights stores one index into a codebook of
$K=2^{Bd}$ entries, so the index term is $\log_2(K)/d=B$ bits per value,
independent of $d$. With $B=2$ every configuration in the grid pays exactly
$2$ bits per value for indices: $K=4$ at $d=1$, $K=16$ at $d=2$, and $K=256$
at $d=4$ all give $\log_2(K)/d=2$.

\emph{Group-scale term.} Both \method and the GPT-VQ baseline additionally
store one half-precision scale for each group of consecutive weights, which
costs $16/g$ bits per value for a group size $g$: $0.5$ at $g=32$, $0.25$ at
$g=64$, and $0.125$ at $g=128$. The three budgets reported in the tables,
\begin{equation}
  2.500=2+0.5,\qquad
  2.250=2+0.25,\qquad
  2.125=2+0.125,
  \label{eq:bpv_reported}
\end{equation}
are the sum of the index term and this scale term.

\emph{Codebook term.} The remaining term of Equation~\eqref{eq:bpv_gptvq} is
$K\,d\,b_c/l$. It is the same for the two paired methods, because a refinement
step rewrites the entries of a codebook of unchanged size $K$ that is shared by
the same $l$ weights; it therefore cancels in every paired comparison of
Table~\ref{tab:main}. Its absolute size depends on how widely one codebook is
shared: at $d=4$, $K=256$ and $b_c=16$ it is below $0.01$ bits per value once a
codebook covers more than $1.6\times10^{6}$ weights, and it grows to $0.125$
bits per value when a codebook covers $1.3\times10^{5}$ weights.

\noindent\textbf{Refinement-Cost Construction.}
The implementation obtains a positive factor vector $d_n$ from the diagonal of
its inverse-curvature Cholesky factor for block $n$. For the Sinkhorn refinement,
it averages these factors over the block index and defines
\begin{equation}
  \bar d_j=\frac{1}{N}\sum_{n=1}^{N}d_{n,j},
  \qquad
  w_j=\bar d_j^{-1}>0.
  \label{eq:appendix_refinement_weights}
\end{equation}
Thus the E-step cost can be written
\begin{equation}
  M_{k,n}(C)=\sum_{j=1}^{d} w_j\bigl(c_{k,j}-x_{n,j}\bigr)^2,
  \qquad w_j>0.
  \label{eq:appendix_weighted_cost}
\end{equation}
Equivalently, Equation~\eqref{eq:appendix_weighted_cost} uses
$H=\operatorname{diag}(w)\succ0$, independent of $n$, exactly as assumed in
Section~\ref{sec:bg_distortion}. Hard initialization, hard lookup, and the optional lookup-table refinement
each use their own curvature-derived weights. The exact
barycentric and profiled-objective theorems apply to
Equation~\eqref{eq:appendix_weighted_cost}.
Before the Sinkhorn update, the implementation sets
$s=\max\{\operatorname{median}(M),10^{-8}\}$ and uses $M/s$. This scaling
preserves the feasible set and sets the entropy coefficient in the original
cost units to $\tau=\varepsilon s$. Theorem~\ref{thm:em_descent} then gives the
descent of one exact transport--barycenter pair with this fixed $s$.

\noindent\textbf{Log-Domain Sinkhorn E-step.}
For a fixed codebook, the transport step solves the entropy-regularized
transport problem in its standard computational
form~\cite{cuturi2013sinkhorn},
\begin{equation}
  \min_{P\in\Pi(a,b)}\ \frac{1}{s}R(C,P)-\varepsilon\mathcal{H}(P),
  \qquad
  \mathcal{H}(P)=-\sum_{k,n}P_{k,n}(\log P_{k,n}-1),
  \label{eq:ot}
\end{equation}
with uniform codeword and block marginals. On $\Pi(a,b)$,
$\mathcal{H}(P)=1+\mathcal{E}(P)$ and
$\KL(P\|ab^\top)=\mathcal{E}(a)+\mathcal{E}(b)-\mathcal{E}(P)$,
where $\mathcal{E}$ denotes entropy. Consequently,
\begin{equation}
  \frac{1}{s}\Phi_\tau(C,P)
  =\Bigl[\frac{1}{s}R(C,P)-\varepsilon\mathcal{H}(P)\Bigr]
  +\varepsilon\bigl(1+\mathcal{E}(a)+\mathcal{E}(b)\bigr),
  \label{eq:kl_entropy_form}
\end{equation}
so the computational and KL objectives share the same minimizer; their
objective differences differ only by the scale factor $s$. We compute the
plan in the log domain.
Let $G=-M/(s\varepsilon)$, and let
$(\alpha,\beta)$ denote the log scaling variables. The alternating updates are
\begin{align}
  \alpha_k &\leftarrow \log a_k
  - \log\sum_{n=1}^{N}\exp(G_{k,n}+\beta_n),
  \label{eq:appendix_sinkhorn_row}\\
  \beta_n &\leftarrow \log b_n
  - \log\sum_{k=1}^{K}\exp(G_{k,n}+\alpha_k).
  \label{eq:appendix_sinkhorn_col}
\end{align}
The coupling used by the M-step is then
\begin{equation}
  P_{k,n}=\exp(\alpha_k+G_{k,n}+\beta_n).
  \label{eq:appendix_coupling}
\end{equation}
The reported primary setting uses $\varepsilon=10^{-3}$, and the secondary
reported setting uses $\varepsilon=5\times10^{-3}$. In the exact feasible
problem, every positive row marginal prevents zero fitting mass at any
positive temperature.
At exact convergence, the row marginal is $1/K$ under the uniform setting.
With finitely many alternating scalings, the final $v$ update enforces the block
marginal $b$ in exact arithmetic, and the codeword marginal carries a measurable
residual.  

\noindent\textbf{M-step and Numerical Denominator.}
Given $P$, each codeword is updated by the barycentric projection
\begin{equation}
  c_k \leftarrow \frac{\sum_{n=1}^{N}P_{k,n}x_n}{\sum_{n=1}^{N}P_{k,n}}.
  \label{eq:appendix_barycenter}
\end{equation}
For the exact uniform-marginal plan, the denominator is $1/K$ for every row.
The finite-precision implementation instead divides by
$\max\{\sum_nP_{k,n},10^{-12}\}$, so the update remains numerically defined even
before exact marginal convergence. When every actual row sum is at least
$10^{-12}$, this is the exact fixed-plan barycenter and the variance identity
in Theorem~\ref{thm:barycentric_mstep} applies. For a row below the clamp,
Algorithm~\ref{alg:sinkhorn_em} uses the displayed clamped update.
GPT-VQ and \method start from the same per-group hard k-means initialization and
then apply their own refinement.

\noindent\textbf{Chunking and Memory.}
Groups are processed independently within each layer. When a layer has many
groups, the implementation processes groups in chunks so that the temporary
cost and coupling tensors fit within the available memory. Chunking preserves
Equations~\eqref{eq:appendix_weighted_cost}--\eqref{eq:appendix_barycenter}
because each group has its own transport problem.

\subsection{Classical Optimality Conditions and Stage Metrics}
\label{sec:appendix_background}

These classical conditions connect the BARQ update to standard codebook
generation.

A codebook quantizer is determined by three components: an encoder that maps
each weight block to an index, a decoder that maps each index back to a
codeword, and a distortion criterion that scores the 
reconstruction~\cite{opq2014,jegou2011pq}. BARQ modifies the assignment step
within each codebook-refinement update and preserves the hard encoder. The
analysis assumes a fixed groupwise quadratic metric.
Section~\ref{sec:appendix_implementation} explains how the Hessian-derived
weights used in the current implementation differ across the stages.

Quantization distortion is the common objective minimized by vector
quantization methods, whose codebook constraints define the resulting
quantizer family~\cite{opq2014}. Second-order weight quantization methods~\cite{gptq2023,gptvq2024}
further define a local quadratic metric using Hessian information estimated
from calibration data.
In the implementation, $H$ is a positive diagonal matrix constructed by
averaging the inverse-Hessian-derived refinement weights across blocks. The
earlier hard fitting and the later lookup-table refinement use their own
stage-specific metrics.

A distortion-minimizing quantizer satisfies two classical necessary conditions. For a fixed codebook, a
distortion-minimizing encoder must map each
block to its nearest codeword, which is the first Lloyd condition~\cite{kmeans1982lloyd}; the resulting
quantizer is a Voronoi quantizer~\cite{gray1984vq,kmeans1982lloyd}. For a fixed encoder and an unconstrained
codebook, the distortion-minimizing codeword is the mean of the blocks assigned
to it, which is the second Lloyd condition~\cite{opq2014,kmeans1982lloyd}. Both
statements are classically given for the unweighted Euclidean distortion under
hard assignment. Under Equation~\eqref{eq:cost}, nearest is measured by the
$H$-weighted cost. Because the same positive-definite $H$ applies to every
block in the group, it can be removed from the first-order condition for the codeword
update. The optimal codeword therefore remains the arithmetic mean of its assigned blocks.  Classical
codebook generation alternates between
nearest-codeword assignment and mean-based codeword updates until reaching a
Lloyd fixed point.
Learned-codebook quantizers~\cite{gptvq2024} commonly follow the same
procedure.

For $q_k(P)>0$, the fixed-plan objective for codeword $k$ has Hessian
$2q_k(P)H\succ0$, so the second Lloyd condition produces the unique
assignment-weighted barycenter. Prescribed positive row marginals apply this
update to every stored codeword.

\section{Evaluation and Table Construction Details}
\label{sec:appendix_evaluation}

\noindent\textbf{Calibration and Metrics.}
All paired comparisons use the same calibration protocol: 128 WikiText-2
sequences with sequence length 2048. The reported language-model metric is
WikiText-2 perplexity. The zero-shot accuracy suite contains ARC-Easy,
ARC-Challenge, HellaSwag, PIQA, and WinoGrande, and the Avg. column is the
arithmetic mean of those five task accuracies.

\noindent\textbf{Paired Comparison Tables.}
The body Table~\ref{tab:main} reports a compact view (PPL and five-task mean for
both methods) so that every configuration is visible on one row. The complete
per-task breakdown is given here in Tables~\ref{tab:main_full} (1B/8B) and
~\ref{tab:large_full} (Qwen3-14B/32B), and the full external-baseline numbers in
Table~\ref{tab:external_full}. Every table is generated directly from the recorded results, with no
interpolation. Each GPT-VQ/Ours pair shares the model, calibration
set, group size, block dimension, nominal codebook size, and the reported bpv
metadata. \method writes continuous barycentric centers into the codebook,
whereas the baseline may additionally refine the codebook with a separate
lookup-table step after GPTQ. Table~\ref{tab:main} therefore reports the
paired end-to-end outputs under shared calibration, nominal codebook size, and
recorded bits per value.

\noindent\textbf{External-Baseline Table.}
Table~\ref{tab:external} places \method next to reported external baselines at
the displayed low-bit-width operating points. For the paired GPT-VQ and \method
rows it takes, for each model, the complete paired setting with the lowest
reported budget, preferring $d=4$ when several settings report the same value.

\noindent\textbf{Missing Cells.}
In Tables~\ref{tab:external} and~\ref{tab:external_full}, a dash in PPL denotes
quantization collapse. Other dashes mark unavailable measurements. Every table
average is computed from the displayed task values.

\definecolor{barqtint}{RGB}{230,239,252}
\definecolor{barqaccent}{RGB}{38,84,156}
\definecolor{barqgood}{RGB}{31,135,72}
\definecolor{barqbad}{RGB}{196,48,48}
\definecolor{barqmute}{RGB}{120,126,138}
\definecolor{barqrule}{RGB}{176,184,198}
\ifcsname NC@find@Z\endcsname\else\newcolumntype{Z}{>{\raggedleft\arraybackslash}X}\fi
\ifcsname NC@find@V\endcsname\else\newcolumntype{V}{>{\columncolor{barqtint}\raggedleft\arraybackslash}X}\fi
\ifcsname NC@find@L\endcsname\else\newcolumntype{L}{>{\raggedright\arraybackslash}X}\fi
\ifcsname NC@find@I\endcsname\else\newcolumntype{I}{!{\hspace{5pt}}}\fi
\providecommand{\barqdash}{\raisebox{-\dp\csname @arstrutbox\endcsname}{\vbox to \dimexpr\ht\csname @arstrutbox\endcsname+\dp\csname @arstrutbox\endcsname\relax{\cleaders\vbox{\kern0.55pt{\color{barqrule}\hrule width 0.45pt height 1.3pt}\kern0.55pt}\vfil}}}
\ifcsname NC@find@Q\endcsname\else\newcolumntype{Q}{!{\hspace{2.25pt}{\color{barqrule}\vrule width 0.5pt}\hspace{2.25pt}}}\fi
\providecommand{\barqsetup}{%
  \setlength{\aboverulesep}{0.8pt}\setlength{\belowrulesep}{0.8pt}%
  \setlength{\extrarowheight}{0pt}\arrayrulecolor{black}%
  \setlength{\heavyrulewidth}{1.0pt}\setlength{\lightrulewidth}{0.5pt}%
  \setlength{\cmidrulewidth}{0.4pt}}
\definecolor{barqqwen}{RGB}{97,92,237}%
\definecolor{barqllama}{RGB}{4,103,223}%
\providecommand{\barqlogo}[1]{\smash{\raisebox{-0.15em}{\includegraphics[height=0.92em]{tables/logos/#1}}}}
\providecommand{\barqmodelstrut}{\rule[-0.12em]{0pt}{1.16em}}
\providecommand{\barqmodel}[1]{\barqmodelstrut\barqmodelx#1\relax}
\def\barqmodelx#1#2\relax{\ifx Q#1{\barqlogo{qwen.png}\,\textcolor{barqqwen}{\textbf{#1#2}}}\else{\barqlogo{meta.pdf}\,\textcolor{barqllama}{\textbf{#1#2}}}\fi}
\providecommand{\barqdw}{1.9em}
\providecommand{\barqd}[2]{\makebox[\barqdw][l]{\kern0.6pt\raisebox{0.62ex}{\fontsize{5pt}{5pt}\selectfont\color{#1}#2}}}
\providecommand{\barqsecond}[1]{\underline{#1}}
\providecommand{\barqinner}[1]{\arrayrulecolor{barqrule}\cmidrule[0.3pt]{#1}\arrayrulecolor{black}}
\providecommand{\barqinnertwo}[2]{\arrayrulecolor{barqrule}\cmidrule[0.3pt](r){#1}\cmidrule[0.3pt]{#2}\arrayrulecolor{black}}

\begin{table*}[!t]
  \centering
  \caption{\textbf{Per-task breakdown localizes the mean \method gains and the individual-task exceptions on the four 1B/8B models.} Compact view: Table~\ref{tab:main}. Layout follows Table~\ref{tab:main}: for each metric the GPT-VQ baseline and \method (Ours) are shown side by side and the better value is bold (superscript: Ours $-$ GPT-VQ). PPL is WikiText-2 perplexity; per-task columns are zero-shot accuracy~(\%); Avg.\ is their mean.}
  \label{tab:main_full}
  \barqsetup\renewcommand{\barqdw}{1.75em}
  \fontsize{8pt}{9.2pt}\selectfont
  \renewcommand{\arraystretch}{0.94}
  \setlength{\tabcolsep}{1.7pt}
  \resizebox{\textwidth}{!}{%
  \begin{tabular}{@{}ccc r>{\columncolor{barqtint}}r r>{\columncolor{barqtint}}r r>{\columncolor{barqtint}}r r>{\columncolor{barqtint}}r r>{\columncolor{barqtint}}r r>{\columncolor{barqtint}}r r>{\columncolor{barqtint}}r@{}}
    \toprule
    \multicolumn{3}{@{}c}{\textbf{Setting}} & \multicolumn{2}{c}{\textbf{PPL}\,$\downarrow$}
      & \multicolumn{12}{c@{}}{\textbf{Zero-shot accuracy\,(\%)}\,$\uparrow$} \\
    \cmidrule(r){1-3}\cmidrule(lr){4-5}\cmidrule(l){6-17}
        & & & \multicolumn{2}{c}{} & \multicolumn{2}{c}{ARC-e} & \multicolumn{2}{c}{ARC-c}
      & \multicolumn{2}{c}{HellaSwag} & \multicolumn{2}{c}{PIQA}
      & \multicolumn{2}{c}{WinoGrande} & \multicolumn{2}{c@{}}{\textbf{Avg.}} \\
    \cmidrule(lr){6-7}\cmidrule(lr){8-9}\cmidrule(lr){10-11}
    \cmidrule(lr){12-13}\cmidrule(lr){14-15}\cmidrule(l){16-17}
    $d$ & gs & bpv & \multicolumn{1}{c}{\scriptsize GPT-VQ} & \multicolumn{1}{>{\columncolor{barqtint}}c}{\scriptsize\textbf{Ours}} & \multicolumn{1}{c}{\scriptsize GPT-VQ} & \multicolumn{1}{>{\columncolor{barqtint}}c}{\scriptsize\textbf{Ours}} & \multicolumn{1}{c}{\scriptsize GPT-VQ} & \multicolumn{1}{>{\columncolor{barqtint}}c}{\scriptsize\textbf{Ours}} & \multicolumn{1}{c}{\scriptsize GPT-VQ} & \multicolumn{1}{>{\columncolor{barqtint}}c}{\scriptsize\textbf{Ours}} & \multicolumn{1}{c}{\scriptsize GPT-VQ} & \multicolumn{1}{>{\columncolor{barqtint}}c}{\scriptsize\textbf{Ours}} & \multicolumn{1}{c}{\scriptsize GPT-VQ} & \multicolumn{1}{>{\columncolor{barqtint}}c}{\scriptsize\textbf{Ours}} & \multicolumn{1}{c}{\scriptsize GPT-VQ} & \multicolumn{1}{>{\columncolor{barqtint}}c@{}}{\scriptsize\textbf{Ours}} \\
    \midrule
    \multicolumn{17}{@{}c@{}}{\barqmodel{Llama3.2-1B}} \\
    \barqinner{1-17}
    \multirow{3}{*}{1} & 32 & 2.500 & 28.49 & \textbf{27.70}\barqd{barqgood}{\textendash0.79} & \textbf{46.30} & 45.12\barqd{barqbad}{\textendash1.18} & 26.96 & \textbf{27.39}\barqd{barqgood}{+0.43} & 40.94 & \textbf{42.27}\barqd{barqgood}{+1.33} & 62.46 & \textbf{64.15}\barqd{barqgood}{+1.69} & \textbf{54.54} & 53.91\barqd{barqbad}{\textendash0.63} & 46.24 & \textbf{46.57}\barqd{barqgood}{+0.33} \\
     & 64 & 2.250 & \textbf{44.24} & 46.36\barqd{barqbad}{+2.12} & 39.48 & \textbf{42.26}\barqd{barqgood}{+2.78} & \textbf{25.43} & 25.34\barqd{barqbad}{\textendash0.09} & 35.95 & \textbf{37.31}\barqd{barqgood}{+1.36} & \textbf{58.81} & 58.71\barqd{barqbad}{\textendash0.10} & 52.72 & \textbf{53.83}\barqd{barqgood}{+1.11} & 42.48 & \textbf{43.49}\barqd{barqgood}{+1.01} \\
     & 128 & 2.125 & 95.41 & \textbf{92.04}\barqd{barqgood}{\textendash3.37} & 34.64 & \textbf{35.77}\barqd{barqgood}{+1.13} & \textbf{23.38} & 22.53\barqd{barqbad}{\textendash0.85} & 32.39 & \textbf{32.52}\barqd{barqgood}{+0.13} & \textbf{56.26} & 55.93\barqd{barqbad}{\textendash0.33} & \textbf{51.62} & 51.46\barqd{barqbad}{\textendash0.16} & \textbf{39.66} & 39.64\barqd{barqbad}{\textendash0.02} \\
    \addlinespace[1.6pt]
    \multirow{3}{*}{2} & 32 & 2.500 & 14.67 & \textbf{13.25}\barqd{barqgood}{\textendash1.42} & 60.73 & \textbf{63.72}\barqd{barqgood}{+2.99} & 37.12 & \textbf{37.80}\barqd{barqgood}{+0.68} & 58.27 & \textbf{61.28}\barqd{barqgood}{+3.01} & 72.20 & \textbf{74.32}\barqd{barqgood}{+2.12} & \textbf{60.93} & 60.85\barqd{barqbad}{\textendash0.08} & 57.85 & \textbf{59.59}\barqd{barqgood}{+1.74} \\
     & 64 & 2.250 & 20.04 & \textbf{16.57}\barqd{barqgood}{\textendash3.47} & 49.92 & \textbf{56.61}\barqd{barqgood}{+6.69} & 31.66 & \textbf{32.94}\barqd{barqgood}{+1.28} & 48.37 & \textbf{53.64}\barqd{barqgood}{+5.27} & 66.65 & \textbf{69.86}\barqd{barqgood}{+3.21} & \textbf{58.80} & 58.56\barqd{barqbad}{\textendash0.24} & 51.08 & \textbf{54.32}\barqd{barqgood}{+3.24} \\
     & 128 & 2.125 & 26.95 & \textbf{22.00}\barqd{barqgood}{\textendash4.95} & 48.70 & \textbf{50.67}\barqd{barqgood}{+1.97} & \textbf{28.50} & 27.99\barqd{barqbad}{\textendash0.51} & 42.59 & \textbf{46.29}\barqd{barqgood}{+3.70} & 64.80 & \textbf{65.23}\barqd{barqgood}{+0.43} & 56.35 & \textbf{57.70}\barqd{barqgood}{+1.35} & 48.19 & \textbf{49.57}\barqd{barqgood}{+1.38} \\
    \addlinespace[1.6pt]
    \multirow{4}{*}{4} & 32 & 2.500 & 14.95 & \textbf{13.19}\barqd{barqgood}{\textendash1.76} & 61.28 & \textbf{63.97}\barqd{barqgood}{+2.69} & 36.26 & \textbf{37.63}\barqd{barqgood}{+1.37} & 58.36 & \textbf{61.45}\barqd{barqgood}{+3.09} & 72.80 & \textbf{74.70}\barqd{barqgood}{+1.90} & 62.04 & \textbf{62.27}\barqd{barqgood}{+0.23} & 58.15 & \textbf{60.01}\barqd{barqgood}{+1.86} \\
     & 64 & 2.250 & 15.40 & \textbf{13.23}\barqd{barqgood}{\textendash2.17} & 60.40 & \textbf{63.51}\barqd{barqgood}{+3.11} & 35.49 & \textbf{36.95}\barqd{barqgood}{+1.46} & 58.43 & \textbf{61.56}\barqd{barqgood}{+3.13} & 72.85 & \textbf{74.59}\barqd{barqgood}{+1.74} & \textbf{61.17} & 60.93\barqd{barqbad}{\textendash0.24} & 57.67 & \textbf{59.51}\barqd{barqgood}{+1.84} \\
     & 128 & 2.125 & 16.10 & \textbf{13.21}\barqd{barqgood}{\textendash2.89} & 59.76 & \textbf{63.93}\barqd{barqgood}{+4.17} & 35.32 & \textbf{37.29}\barqd{barqgood}{+1.97} & 56.53 & \textbf{61.54}\barqd{barqgood}{+5.01} & 71.33 & \textbf{73.99}\barqd{barqgood}{+2.66} & 58.96 & \textbf{61.48}\barqd{barqgood}{+2.52} & 56.38 & \textbf{59.65}\barqd{barqgood}{+3.27} \\
     & 1024 & 2.500 & 19.09 & \textbf{13.84}\barqd{barqgood}{\textendash5.25} & 54.29 & \textbf{59.51}\barqd{barqgood}{+5.22} & 31.91 & \textbf{36.69}\barqd{barqgood}{+4.78} & 50.51 & \textbf{59.33}\barqd{barqgood}{+8.82} & 68.50 & \textbf{72.85}\barqd{barqgood}{+4.35} & 55.96 & \textbf{61.48}\barqd{barqgood}{+5.52} & 52.23 & \textbf{57.97}\barqd{barqgood}{+5.74} \\
    \midrule
    \multicolumn{17}{@{}c@{}}{\barqmodel{Llama3.1-8B}} \\
    \barqinner{1-17}
    \multirow{3}{*}{1} & 32 & 2.500 & 12.57 & \textbf{12.17}\barqd{barqgood}{\textendash0.40} & \textbf{66.88} & 64.52\barqd{barqbad}{\textendash2.36} & \textbf{41.64} & 37.29\barqd{barqbad}{\textendash4.35} & \textbf{59.91} & 59.57\barqd{barqbad}{\textendash0.34} & \textbf{72.25} & 72.14\barqd{barqbad}{\textendash0.11} & \textbf{65.43} & 64.48\barqd{barqbad}{\textendash0.95} & \textbf{61.22} & 59.60\barqd{barqbad}{\textendash1.62} \\
     & 64 & 2.250 & 14.73 & \textbf{14.06}\barqd{barqgood}{\textendash0.67} & 58.16 & \textbf{59.22}\barqd{barqgood}{+1.06} & 33.02 & \textbf{34.30}\barqd{barqgood}{+1.28} & 51.33 & \textbf{51.95}\barqd{barqgood}{+0.62} & 67.74 & \textbf{68.12}\barqd{barqgood}{+0.38} & \textbf{62.98} & 61.48\barqd{barqbad}{\textendash1.50} & 54.65 & \textbf{55.01}\barqd{barqgood}{+0.36} \\
     & 128 & 2.125 & 15.91 & \textbf{15.59}\barqd{barqgood}{\textendash0.32} & 47.81 & \textbf{54.29}\barqd{barqgood}{+6.48} & 28.07 & \textbf{29.78}\barqd{barqgood}{+1.71} & 42.68 & \textbf{46.16}\barqd{barqgood}{+3.48} & 62.84 & \textbf{65.18}\barqd{barqgood}{+2.34} & 57.30 & \textbf{59.75}\barqd{barqgood}{+2.45} & 47.74 & \textbf{51.03}\barqd{barqgood}{+3.29} \\
    \addlinespace[1.6pt]
    \multirow{3}{*}{2} & 32 & 2.500 & 8.89 & \textbf{8.32}\barqd{barqgood}{\textendash0.57} & 77.82 & \textbf{78.37}\barqd{barqgood}{+0.55} & 54.52 & \textbf{54.95}\barqd{barqgood}{+0.43} & 74.76 & \textbf{75.31}\barqd{barqgood}{+0.55} & 78.02 & \textbf{78.07}\barqd{barqgood}{+0.05} & 71.98 & 71.98\barqd{barqmute}{$\pm$0.00} & 71.42 & \textbf{71.74}\barqd{barqgood}{+0.32} \\
     & 64 & 2.250 & 9.63 & \textbf{9.13}\barqd{barqgood}{\textendash0.50} & 71.04 & \textbf{72.22}\barqd{barqgood}{+1.18} & 45.73 & \textbf{49.66}\barqd{barqgood}{+3.93} & 68.89 & \textbf{70.61}\barqd{barqgood}{+1.72} & 75.46 & \textbf{76.93}\barqd{barqgood}{+1.47} & 69.85 & \textbf{70.64}\barqd{barqgood}{+0.79} & 66.20 & \textbf{68.01}\barqd{barqgood}{+1.81} \\
     & 128 & 2.125 & 10.92 & \textbf{10.42}\barqd{barqgood}{\textendash0.50} & 64.06 & \textbf{72.52}\barqd{barqgood}{+8.46} & 37.12 & \textbf{45.31}\barqd{barqgood}{+8.19} & 62.72 & \textbf{65.79}\barqd{barqgood}{+3.07} & 71.98 & \textbf{75.35}\barqd{barqgood}{+3.37} & 67.09 & \textbf{67.96}\barqd{barqgood}{+0.87} & 60.59 & \textbf{65.39}\barqd{barqgood}{+4.80} \\
    \addlinespace[1.6pt]
    \multirow{4}{*}{4} & 32 & 2.500 & 6.62 & \textbf{6.15}\barqd{barqgood}{\textendash0.47} & \textbf{77.48} & 76.85\barqd{barqbad}{\textendash0.63} & 51.45 & \textbf{53.50}\barqd{barqgood}{+2.05} & 78.24 & \textbf{79.15}\barqd{barqgood}{+0.91} & 79.87 & \textbf{80.90}\barqd{barqgood}{+1.03} & \textbf{73.48} & 73.32\barqd{barqbad}{\textendash0.16} & 72.11 & \textbf{72.74}\barqd{barqgood}{+0.63} \\
     & 64 & 2.250 & 6.83 & \textbf{6.16}\barqd{barqgood}{\textendash0.67} & 75.72 & \textbf{78.41}\barqd{barqgood}{+2.69} & 50.60 & \textbf{53.41}\barqd{barqgood}{+2.81} & 78.38 & \textbf{79.38}\barqd{barqgood}{+1.00} & 79.27 & \textbf{80.30}\barqd{barqgood}{+1.03} & 71.43 & \textbf{73.24}\barqd{barqgood}{+1.81} & 71.08 & \textbf{72.95}\barqd{barqgood}{+1.87} \\
     & 128 & 2.125 & 11.00 & \textbf{8.30}\barqd{barqgood}{\textendash2.70} & 74.87 & \textbf{78.70}\barqd{barqgood}{+3.83} & 50.17 & \textbf{56.14}\barqd{barqgood}{+5.97} & 74.05 & \textbf{75.58}\barqd{barqgood}{+1.53} & 76.22 & \textbf{78.73}\barqd{barqgood}{+2.51} & \textbf{72.06} & 71.19\barqd{barqbad}{\textendash0.87} & 69.47 & \textbf{72.07}\barqd{barqgood}{+2.60} \\
     & 1024 & 2.500 & 10.69 & \textbf{8.38}\barqd{barqgood}{\textendash2.31} & 66.46 & \textbf{75.80}\barqd{barqgood}{+9.34} & 43.17 & \textbf{53.41}\barqd{barqgood}{+10.24} & 71.82 & \textbf{75.70}\barqd{barqgood}{+3.88} & 73.78 & \textbf{77.15}\barqd{barqgood}{+3.37} & 69.06 & \textbf{70.56}\barqd{barqgood}{+1.50} & 64.86 & \textbf{70.52}\barqd{barqgood}{+5.66} \\
    \midrule
    \multicolumn{17}{@{}c@{}}{\barqmodel{Qwen3-1.7B}} \\
    \barqinner{1-17}
    \multirow{3}{*}{1} & 32 & 2.500 & \textbf{31.97} & 33.41\barqd{barqbad}{+1.44} & \textbf{41.54} & 40.32\barqd{barqbad}{\textendash1.22} & 26.79 & \textbf{28.16}\barqd{barqgood}{+1.37} & 38.26 & \textbf{39.79}\barqd{barqgood}{+1.53} & \textbf{60.83} & 59.90\barqd{barqbad}{\textendash0.93} & 52.72 & \textbf{54.54}\barqd{barqgood}{+1.82} & 44.03 & \textbf{44.54}\barqd{barqgood}{+0.51} \\
     & 64 & 2.250 & 52.57 & \textbf{50.23}\barqd{barqgood}{\textendash2.34} & 34.13 & \textbf{35.90}\barqd{barqgood}{+1.77} & 24.32 & \textbf{24.40}\barqd{barqgood}{+0.08} & 34.55 & \textbf{35.58}\barqd{barqgood}{+1.03} & 57.56 & \textbf{58.11}\barqd{barqgood}{+0.55} & 51.22 & \textbf{52.49}\barqd{barqgood}{+1.27} & 40.36 & \textbf{41.30}\barqd{barqgood}{+0.94} \\
     & 128 & 2.125 & 68.21 & \textbf{63.03}\barqd{barqgood}{\textendash5.18} & \textbf{32.95} & 32.87\barqd{barqbad}{\textendash0.08} & \textbf{25.00} & 24.15\barqd{barqbad}{\textendash0.85} & 32.11 & \textbf{34.08}\barqd{barqgood}{+1.97} & 51.96 & \textbf{56.04}\barqd{barqgood}{+4.08} & 48.38 & \textbf{51.22}\barqd{barqgood}{+2.84} & 38.08 & \textbf{39.67}\barqd{barqgood}{+1.59} \\
    \addlinespace[1.6pt]
    \multirow{3}{*}{2} & 32 & 2.500 & 17.62 & \textbf{16.96}\barqd{barqgood}{\textendash0.66} & 67.55 & \textbf{69.44}\barqd{barqgood}{+1.89} & 42.49 & \textbf{43.26}\barqd{barqgood}{+0.77} & 58.44 & \textbf{60.45}\barqd{barqgood}{+2.01} & 69.64 & \textbf{72.03}\barqd{barqgood}{+2.39} & 59.19 & \textbf{59.67}\barqd{barqgood}{+0.48} & 59.46 & \textbf{60.97}\barqd{barqgood}{+1.51} \\
     & 64 & 2.250 & 27.26 & \textbf{23.02}\barqd{barqgood}{\textendash4.24} & 42.55 & \textbf{52.57}\barqd{barqgood}{+10.02} & 28.16 & \textbf{36.09}\barqd{barqgood}{+7.93} & 46.65 & \textbf{51.27}\barqd{barqgood}{+4.62} & 63.33 & \textbf{68.34}\barqd{barqgood}{+5.01} & 55.01 & \textbf{55.72}\barqd{barqgood}{+0.71} & 47.14 & \textbf{52.80}\barqd{barqgood}{+5.66} \\
     & 128 & 2.125 & 40.49 & \textbf{26.49}\barqd{barqgood}{\textendash14.00} & 38.13 & \textbf{42.55}\barqd{barqgood}{+4.42} & 25.77 & \textbf{29.44}\barqd{barqgood}{+3.67} & 39.60 & \textbf{43.43}\barqd{barqgood}{+3.83} & 57.89 & \textbf{64.64}\barqd{barqgood}{+6.75} & 50.91 & \textbf{53.35}\barqd{barqgood}{+2.44} & 42.46 & \textbf{46.68}\barqd{barqgood}{+4.22} \\
    \addlinespace[1.6pt]
    \multirow{4}{*}{4} & 32 & 2.500 & 20.37 & \textbf{16.69}\barqd{barqgood}{\textendash3.68} & 49.49 & \textbf{70.08}\barqd{barqgood}{+20.59} & 36.52 & \textbf{43.60}\barqd{barqgood}{+7.08} & 58.78 & \textbf{60.50}\barqd{barqgood}{+1.72} & 69.21 & \textbf{72.09}\barqd{barqgood}{+2.88} & 59.04 & \textbf{61.17}\barqd{barqgood}{+2.13} & 54.61 & \textbf{61.49}\barqd{barqgood}{+6.88} \\
     & 64 & 2.250 & 19.24 & \textbf{17.21}\barqd{barqgood}{\textendash2.03} & 58.88 & \textbf{68.48}\barqd{barqgood}{+9.60} & 39.93 & \textbf{43.09}\barqd{barqgood}{+3.16} & 57.69 & \textbf{60.21}\barqd{barqgood}{+2.52} & 70.29 & \textbf{71.71}\barqd{barqgood}{+1.42} & 59.19 & \textbf{60.93}\barqd{barqgood}{+1.74} & 57.20 & \textbf{60.88}\barqd{barqgood}{+3.68} \\
     & 128 & 2.125 & 20.54 & \textbf{16.59}\barqd{barqgood}{\textendash3.95} & 60.40 & \textbf{64.31}\barqd{barqgood}{+3.91} & 37.37 & \textbf{41.64}\barqd{barqgood}{+4.27} & 56.52 & \textbf{60.22}\barqd{barqgood}{+3.70} & 67.30 & \textbf{71.38}\barqd{barqgood}{+4.08} & 57.14 & \textbf{60.85}\barqd{barqgood}{+3.71} & 55.75 & \textbf{59.68}\barqd{barqgood}{+3.93} \\
     & 1024 & 2.500 & 24.57 & \textbf{17.48}\barqd{barqgood}{\textendash7.09} & 48.27 & \textbf{65.78}\barqd{barqgood}{+17.51} & 31.57 & \textbf{42.06}\barqd{barqgood}{+10.49} & 51.02 & \textbf{59.45}\barqd{barqgood}{+8.43} & 67.25 & \textbf{71.16}\barqd{barqgood}{+3.91} & 56.51 & \textbf{60.62}\barqd{barqgood}{+4.11} & 50.92 & \textbf{59.82}\barqd{barqgood}{+8.90} \\
    \midrule
    \multicolumn{17}{@{}c@{}}{\barqmodel{Qwen3-8B}} \\
    \barqinner{1-17}
    \multirow{3}{*}{1} & 32 & 2.500 & \textbf{12.34} & 12.54\barqd{barqbad}{+0.20} & 72.94 & \textbf{73.27}\barqd{barqgood}{+0.33} & 45.56 & \textbf{46.16}\barqd{barqgood}{+0.60} & \textbf{60.96} & 60.77\barqd{barqbad}{\textendash0.19} & \textbf{72.91} & 71.82\barqd{barqbad}{\textendash1.09} & 64.64 & \textbf{65.67}\barqd{barqgood}{+1.03} & 63.40 & \textbf{63.54}\barqd{barqgood}{+0.14} \\
     & 64 & 2.250 & 13.80 & \textbf{13.64}\barqd{barqgood}{\textendash0.16} & \textbf{73.11} & 70.33\barqd{barqbad}{\textendash2.78} & \textbf{44.71} & 42.58\barqd{barqbad}{\textendash2.13} & 57.37 & \textbf{58.48}\barqd{barqgood}{+1.11} & \textbf{70.78} & 70.67\barqd{barqbad}{\textendash0.11} & \textbf{63.38} & 61.64\barqd{barqbad}{\textendash1.74} & \textbf{61.87} & 60.74\barqd{barqbad}{\textendash1.13} \\
     & 128 & 2.125 & \textbf{14.70} & 14.89\barqd{barqbad}{+0.19} & 62.88 & \textbf{66.12}\barqd{barqgood}{+3.24} & 36.86 & \textbf{37.54}\barqd{barqgood}{+0.68} & \textbf{55.32} & 55.14\barqd{barqbad}{\textendash0.18} & 68.82 & \textbf{69.80}\barqd{barqgood}{+0.98} & 62.04 & \textbf{64.25}\barqd{barqgood}{+2.21} & 57.18 & \textbf{58.57}\barqd{barqgood}{+1.39} \\
    \addlinespace[1.6pt]
    \multirow{3}{*}{2} & 32 & 2.500 & 9.93 & \textbf{9.70}\barqd{barqgood}{\textendash0.23} & 77.82 & \textbf{80.68}\barqd{barqgood}{+2.86} & 55.03 & \textbf{56.91}\barqd{barqgood}{+1.88} & 74.37 & \textbf{75.06}\barqd{barqgood}{+0.69} & 76.99 & \textbf{77.75}\barqd{barqgood}{+0.76} & 68.11 & 68.11\barqd{barqmute}{$\pm$0.00} & 70.46 & \textbf{71.70}\barqd{barqgood}{+1.24} \\
     & 64 & 2.250 & 10.96 & \textbf{10.73}\barqd{barqgood}{\textendash0.23} & 74.33 & \textbf{79.21}\barqd{barqgood}{+4.88} & 47.53 & \textbf{53.92}\barqd{barqgood}{+6.39} & 68.43 & \textbf{70.29}\barqd{barqgood}{+1.86} & 74.65 & \textbf{75.68}\barqd{barqgood}{+1.03} & 66.85 & \textbf{68.82}\barqd{barqgood}{+1.97} & 66.36 & \textbf{69.59}\barqd{barqgood}{+3.23} \\
     & 128 & 2.125 & 12.62 & \textbf{11.35}\barqd{barqgood}{\textendash1.27} & 71.17 & \textbf{75.80}\barqd{barqgood}{+4.63} & 44.71 & \textbf{48.63}\barqd{barqgood}{+3.92} & 63.01 & \textbf{66.38}\barqd{barqgood}{+3.37} & 71.76 & \textbf{74.59}\barqd{barqgood}{+2.83} & 64.96 & \textbf{65.82}\barqd{barqgood}{+0.86} & 63.12 & \textbf{66.25}\barqd{barqgood}{+3.13} \\
    \addlinespace[1.6pt]
    \multirow{4}{*}{4} & 32 & 2.500 & 10.16 & \textbf{9.75}\barqd{barqgood}{\textendash0.41} & 78.58 & \textbf{81.06}\barqd{barqgood}{+2.48} & 54.18 & \textbf{57.34}\barqd{barqgood}{+3.16} & 74.29 & \textbf{74.94}\barqd{barqgood}{+0.65} & 76.77 & \textbf{77.48}\barqd{barqgood}{+0.71} & 67.72 & 67.72\barqd{barqmute}{$\pm$0.00} & 70.31 & \textbf{71.71}\barqd{barqgood}{+1.40} \\
     & 64 & 2.250 & 10.13 & \textbf{9.75}\barqd{barqgood}{\textendash0.38} & 79.04 & \textbf{80.64}\barqd{barqgood}{+1.60} & 53.84 & \textbf{56.91}\barqd{barqgood}{+3.07} & 74.02 & \textbf{74.86}\barqd{barqgood}{+0.84} & 76.66 & \textbf{78.02}\barqd{barqgood}{+1.36} & \textbf{68.75} & 68.11\barqd{barqbad}{\textendash0.64} & 70.46 & \textbf{71.71}\barqd{barqgood}{+1.25} \\
     & 128 & 2.125 & 10.28 & \textbf{9.72}\barqd{barqgood}{\textendash0.56} & 80.18 & \textbf{80.39}\barqd{barqgood}{+0.21} & 55.55 & \textbf{56.31}\barqd{barqgood}{+0.76} & 73.84 & \textbf{74.99}\barqd{barqgood}{+1.15} & 76.17 & \textbf{77.75}\barqd{barqgood}{+1.58} & 67.64 & \textbf{67.88}\barqd{barqgood}{+0.24} & 70.68 & \textbf{71.46}\barqd{barqgood}{+0.78} \\
     & 1024 & 2.500 & 10.26 & \textbf{9.73}\barqd{barqgood}{\textendash0.53} & 73.06 & \textbf{80.13}\barqd{barqgood}{+7.07} & 50.68 & \textbf{56.57}\barqd{barqgood}{+5.89} & 71.72 & \textbf{74.97}\barqd{barqgood}{+3.25} & 76.66 & \textbf{77.04}\barqd{barqgood}{+0.38} & 65.90 & \textbf{66.93}\barqd{barqgood}{+1.03} & 67.61 & \textbf{71.13}\barqd{barqgood}{+3.52} \\
    \bottomrule
  \end{tabular}}
\end{table*}

\definecolor{barqtint}{RGB}{230,239,252}
\definecolor{barqaccent}{RGB}{38,84,156}
\definecolor{barqgood}{RGB}{31,135,72}
\definecolor{barqbad}{RGB}{196,48,48}
\definecolor{barqmute}{RGB}{120,126,138}
\definecolor{barqrule}{RGB}{176,184,198}
\ifcsname NC@find@Z\endcsname\else\newcolumntype{Z}{>{\raggedleft\arraybackslash}X}\fi
\ifcsname NC@find@V\endcsname\else\newcolumntype{V}{>{\columncolor{barqtint}\raggedleft\arraybackslash}X}\fi
\ifcsname NC@find@L\endcsname\else\newcolumntype{L}{>{\raggedright\arraybackslash}X}\fi
\ifcsname NC@find@I\endcsname\else\newcolumntype{I}{!{\hspace{5pt}}}\fi
\providecommand{\barqdash}{\raisebox{-\dp\csname @arstrutbox\endcsname}{\vbox to \dimexpr\ht\csname @arstrutbox\endcsname+\dp\csname @arstrutbox\endcsname\relax{\cleaders\vbox{\kern0.55pt{\color{barqrule}\hrule width 0.45pt height 1.3pt}\kern0.55pt}\vfil}}}
\ifcsname NC@find@Q\endcsname\else\newcolumntype{Q}{!{\hspace{2.25pt}{\color{barqrule}\vrule width 0.5pt}\hspace{2.25pt}}}\fi
\providecommand{\barqsetup}{%
  \setlength{\aboverulesep}{0.8pt}\setlength{\belowrulesep}{0.8pt}%
  \setlength{\extrarowheight}{0pt}\arrayrulecolor{black}%
  \setlength{\heavyrulewidth}{1.0pt}\setlength{\lightrulewidth}{0.5pt}%
  \setlength{\cmidrulewidth}{0.4pt}}
\definecolor{barqqwen}{RGB}{97,92,237}%
\definecolor{barqllama}{RGB}{4,103,223}%
\providecommand{\barqlogo}[1]{\smash{\raisebox{-0.15em}{\includegraphics[height=0.92em]{tables/logos/#1}}}}
\providecommand{\barqmodelstrut}{\rule[-0.12em]{0pt}{1.16em}}
\providecommand{\barqmodel}[1]{\barqmodelstrut\barqmodelx#1\relax}
\def\barqmodelx#1#2\relax{\ifx Q#1{\barqlogo{qwen.png}\,\textcolor{barqqwen}{\textbf{#1#2}}}\else{\barqlogo{meta.pdf}\,\textcolor{barqllama}{\textbf{#1#2}}}\fi}
\providecommand{\barqdw}{1.9em}
\providecommand{\barqd}[2]{\makebox[\barqdw][l]{\kern0.6pt\raisebox{0.62ex}{\fontsize{5pt}{5pt}\selectfont\color{#1}#2}}}
\providecommand{\barqsecond}[1]{\underline{#1}}
\providecommand{\barqinner}[1]{\arrayrulecolor{barqrule}\cmidrule[0.3pt]{#1}\arrayrulecolor{black}}
\providecommand{\barqinnertwo}[2]{\arrayrulecolor{barqrule}\cmidrule[0.3pt](r){#1}\cmidrule[0.3pt]{#2}\arrayrulecolor{black}}

\begin{table*}[!t]
  \centering
  \caption{\textbf{At Qwen3-14B/32B, the strongest gains occur for $K=16$ and $K=256$, while the per-task breakdown retains the $K=4$ regressions.} Compact view: the Qwen3-14B/32B panels of Table~\ref{tab:main}. Columns, bolding and superscripts match Table~\ref{tab:main_full}.}
  \label{tab:large_full}
  \barqsetup\renewcommand{\barqdw}{1.75em}
  \fontsize{8pt}{9.2pt}\selectfont
  \renewcommand{\arraystretch}{0.94}
  \setlength{\tabcolsep}{1.7pt}
  \resizebox{\textwidth}{!}{%
  \begin{tabular}{@{}ccc r>{\columncolor{barqtint}}r r>{\columncolor{barqtint}}r r>{\columncolor{barqtint}}r r>{\columncolor{barqtint}}r r>{\columncolor{barqtint}}r r>{\columncolor{barqtint}}r r>{\columncolor{barqtint}}r@{}}
    \toprule
    \multicolumn{3}{@{}c}{\textbf{Setting}} & \multicolumn{2}{c}{\textbf{PPL}\,$\downarrow$}
      & \multicolumn{12}{c@{}}{\textbf{Zero-shot accuracy\,(\%)}\,$\uparrow$} \\
    \cmidrule(r){1-3}\cmidrule(lr){4-5}\cmidrule(l){6-17}
        & & & \multicolumn{2}{c}{} & \multicolumn{2}{c}{ARC-e} & \multicolumn{2}{c}{ARC-c}
      & \multicolumn{2}{c}{HellaSwag} & \multicolumn{2}{c}{PIQA}
      & \multicolumn{2}{c}{WinoGrande} & \multicolumn{2}{c@{}}{\textbf{Avg.}} \\
    \cmidrule(lr){6-7}\cmidrule(lr){8-9}\cmidrule(lr){10-11}
    \cmidrule(lr){12-13}\cmidrule(lr){14-15}\cmidrule(l){16-17}
    $d$ & gs & bpv & \multicolumn{1}{c}{\scriptsize GPT-VQ} & \multicolumn{1}{>{\columncolor{barqtint}}c}{\scriptsize\textbf{Ours}} & \multicolumn{1}{c}{\scriptsize GPT-VQ} & \multicolumn{1}{>{\columncolor{barqtint}}c}{\scriptsize\textbf{Ours}} & \multicolumn{1}{c}{\scriptsize GPT-VQ} & \multicolumn{1}{>{\columncolor{barqtint}}c}{\scriptsize\textbf{Ours}} & \multicolumn{1}{c}{\scriptsize GPT-VQ} & \multicolumn{1}{>{\columncolor{barqtint}}c}{\scriptsize\textbf{Ours}} & \multicolumn{1}{c}{\scriptsize GPT-VQ} & \multicolumn{1}{>{\columncolor{barqtint}}c}{\scriptsize\textbf{Ours}} & \multicolumn{1}{c}{\scriptsize GPT-VQ} & \multicolumn{1}{>{\columncolor{barqtint}}c}{\scriptsize\textbf{Ours}} & \multicolumn{1}{c}{\scriptsize GPT-VQ} & \multicolumn{1}{>{\columncolor{barqtint}}c@{}}{\scriptsize\textbf{Ours}} \\
    \midrule
    \multicolumn{17}{@{}c@{}}{\barqmodel{Qwen3-14B}} \\
    \barqinner{1-17}
    \multirow{3}{*}{1} & 32 & 2.500 & 10.80 & \textbf{10.73}\barqd{barqgood}{\textendash0.07} & \textbf{77.40} & 73.65\barqd{barqbad}{\textendash3.75} & \textbf{54.61} & 48.04\barqd{barqbad}{\textendash6.57} & 68.91 & \textbf{69.51}\barqd{barqgood}{+0.60} & 75.14 & \textbf{75.30}\barqd{barqgood}{+0.16} & \textbf{71.19} & 70.48\barqd{barqbad}{\textendash0.71} & \textbf{69.45} & 67.40\barqd{barqbad}{\textendash2.05} \\
     & 64 & 2.250 & 11.29 & \textbf{11.14}\barqd{barqgood}{\textendash0.15} & \textbf{75.51} & 70.79\barqd{barqbad}{\textendash4.72} & \textbf{48.29} & 47.87\barqd{barqbad}{\textendash0.42} & 65.90 & \textbf{66.74}\barqd{barqgood}{+0.84} & \textbf{75.14} & 73.99\barqd{barqbad}{\textendash1.15} & 69.06 & \textbf{70.09}\barqd{barqgood}{+1.03} & \textbf{66.78} & 65.90\barqd{barqbad}{\textendash0.88} \\
     & 128 & 2.125 & 11.80 & \textbf{11.79}\barqd{barqgood}{\textendash0.01} & \textbf{70.20} & 65.99\barqd{barqbad}{\textendash4.21} & \textbf{46.33} & 45.56\barqd{barqbad}{\textendash0.77} & 63.98 & \textbf{64.26}\barqd{barqgood}{+0.28} & \textbf{73.56} & 72.14\barqd{barqbad}{\textendash1.42} & \textbf{68.67} & 68.19\barqd{barqbad}{\textendash0.48} & \textbf{64.55} & 63.23\barqd{barqbad}{\textendash1.32} \\
    \addlinespace[1.6pt]
    \multirow{3}{*}{2} & 32 & 2.500 & 8.79 & \textbf{8.65}\barqd{barqgood}{\textendash0.14} & 82.95 & \textbf{83.08}\barqd{barqgood}{+0.13} & 59.98 & \textbf{61.52}\barqd{barqgood}{+1.54} & 77.94 & \textbf{78.99}\barqd{barqgood}{+1.05} & 79.43 & \textbf{79.92}\barqd{barqgood}{+0.49} & 73.16 & 73.16\barqd{barqmute}{$\pm$0.00} & 74.70 & \textbf{75.34}\barqd{barqgood}{+0.64} \\
     & 64 & 2.250 & 9.61 & \textbf{9.33}\barqd{barqgood}{\textendash0.28} & 79.71 & \textbf{81.65}\barqd{barqgood}{+1.94} & \textbf{57.08} & 56.91\barqd{barqbad}{\textendash0.17} & 74.15 & \textbf{75.31}\barqd{barqgood}{+1.16} & 77.48 & \textbf{78.56}\barqd{barqgood}{+1.08} & \textbf{73.64} & 71.98\barqd{barqbad}{\textendash1.66} & 72.41 & \textbf{72.88}\barqd{barqgood}{+0.47} \\
     & 128 & 2.125 & 10.31 & \textbf{9.92}\barqd{barqgood}{\textendash0.39} & \textbf{77.02} & 75.80\barqd{barqbad}{\textendash1.22} & \textbf{53.16} & 52.65\barqd{barqbad}{\textendash0.51} & 70.04 & \textbf{71.38}\barqd{barqgood}{+1.34} & 76.50 & \textbf{77.04}\barqd{barqgood}{+0.54} & 70.40 & \textbf{71.59}\barqd{barqgood}{+1.19} & 69.42 & \textbf{69.69}\barqd{barqgood}{+0.27} \\
    \addlinespace[1.6pt]
    \multirow{4}{*}{4} & 32 & 2.500 & \textbf{8.78} & 8.95\barqd{barqbad}{+0.17} & \textbf{83.08} & 82.79\barqd{barqbad}{\textendash0.29} & \textbf{60.07} & 59.98\barqd{barqbad}{\textendash0.09} & 78.14 & \textbf{78.70}\barqd{barqgood}{+0.56} & 79.38 & \textbf{79.98}\barqd{barqgood}{+0.60} & \textbf{72.93} & 72.22\barqd{barqbad}{\textendash0.71} & 74.72 & \textbf{74.73}\barqd{barqgood}{+0.01} \\
     & 64 & 2.250 & 8.81 & \textbf{8.67}\barqd{barqgood}{\textendash0.14} & \textbf{82.87} & 82.66\barqd{barqbad}{\textendash0.21} & \textbf{60.41} & 60.32\barqd{barqbad}{\textendash0.09} & 78.20 & \textbf{78.88}\barqd{barqgood}{+0.68} & 79.65 & \textbf{79.87}\barqd{barqgood}{+0.22} & 71.74 & \textbf{72.85}\barqd{barqgood}{+1.11} & 74.57 & \textbf{74.92}\barqd{barqgood}{+0.35} \\
     & 128 & 2.125 & 8.93 & \textbf{8.66}\barqd{barqgood}{\textendash0.27} & \textbf{83.25} & 83.12\barqd{barqbad}{\textendash0.13} & 59.73 & \textbf{61.09}\barqd{barqgood}{+1.36} & 78.26 & \textbf{78.72}\barqd{barqgood}{+0.46} & 80.03 & \textbf{80.25}\barqd{barqgood}{+0.22} & 71.67 & \textbf{72.53}\barqd{barqgood}{+0.86} & 74.59 & \textbf{75.14}\barqd{barqgood}{+0.55} \\
     & 1024 & 2.500 & 9.11 & \textbf{8.73}\barqd{barqgood}{\textendash0.38} & 81.19 & \textbf{82.15}\barqd{barqgood}{+0.96} & 59.47 & \textbf{60.49}\barqd{barqgood}{+1.02} & 77.10 & \textbf{78.37}\barqd{barqgood}{+1.27} & 79.22 & \textbf{80.20}\barqd{barqgood}{+0.98} & 69.85 & \textbf{72.69}\barqd{barqgood}{+2.84} & 73.36 & \textbf{74.78}\barqd{barqgood}{+1.42} \\
    \midrule
    \multicolumn{17}{@{}c@{}}{\barqmodel{Qwen3-32B}} \\
    \barqinner{1-17}
    \multirow{3}{*}{1} & 32 & 2.500 & 9.10 & \textbf{8.95}\barqd{barqgood}{\textendash0.15} & \textbf{77.86} & 77.31\barqd{barqbad}{\textendash0.55} & \textbf{54.27} & 54.18\barqd{barqbad}{\textendash0.09} & 75.69 & \textbf{76.11}\barqd{barqgood}{+0.42} & 77.09 & \textbf{79.05}\barqd{barqgood}{+1.96} & 70.96 & \textbf{73.72}\barqd{barqgood}{+2.76} & 71.17 & \textbf{72.08}\barqd{barqgood}{+0.91} \\
     & 64 & 2.250 & 9.54 & \textbf{9.40}\barqd{barqgood}{\textendash0.14} & \textbf{78.62} & 75.88\barqd{barqbad}{\textendash2.74} & \textbf{53.58} & 52.90\barqd{barqbad}{\textendash0.68} & 73.44 & \textbf{74.59}\barqd{barqgood}{+1.15} & \textbf{77.69} & 77.09\barqd{barqbad}{\textendash0.60} & 70.48 & \textbf{71.59}\barqd{barqgood}{+1.11} & \textbf{70.76} & 70.41\barqd{barqbad}{\textendash0.35} \\
     & 128 & 2.125 & \textbf{9.86} & 9.88\barqd{barqbad}{+0.02} & 75.80 & \textbf{78.54}\barqd{barqgood}{+2.74} & 54.01 & \textbf{54.18}\barqd{barqgood}{+0.17} & \textbf{73.38} & 73.36\barqd{barqbad}{\textendash0.02} & 76.06 & \textbf{76.82}\barqd{barqgood}{+0.76} & 70.64 & \textbf{71.43}\barqd{barqgood}{+0.79} & 69.98 & \textbf{70.87}\barqd{barqgood}{+0.89} \\
    \addlinespace[1.6pt]
    \multirow{3}{*}{2} & 32 & 2.500 & 7.76 & \textbf{7.62}\barqd{barqgood}{\textendash0.14} & 83.38 & \textbf{83.63}\barqd{barqgood}{+0.25} & 60.58 & \textbf{61.52}\barqd{barqgood}{+0.94} & 82.02 & \textbf{82.66}\barqd{barqgood}{+0.64} & 81.61 & \textbf{81.77}\barqd{barqgood}{+0.16} & 71.98 & \textbf{72.38}\barqd{barqgood}{+0.40} & 75.91 & \textbf{76.39}\barqd{barqgood}{+0.48} \\
     & 64 & 2.250 & 8.37 & \textbf{8.09}\barqd{barqgood}{\textendash0.28} & 79.59 & \textbf{80.30}\barqd{barqgood}{+0.71} & 58.70 & \textbf{62.03}\barqd{barqgood}{+3.33} & 79.60 & \textbf{80.87}\barqd{barqgood}{+1.27} & \textbf{79.92} & 79.54\barqd{barqbad}{\textendash0.38} & 71.11 & 71.11\barqd{barqmute}{$\pm$0.00} & 73.78 & \textbf{74.77}\barqd{barqgood}{+0.99} \\
     & 128 & 2.125 & 8.91 & \textbf{8.55}\barqd{barqgood}{\textendash0.36} & 79.63 & \textbf{80.93}\barqd{barqgood}{+1.30} & 55.97 & \textbf{58.87}\barqd{barqgood}{+2.90} & 76.94 & \textbf{78.27}\barqd{barqgood}{+1.33} & 78.13 & \textbf{78.94}\barqd{barqgood}{+0.81} & 71.51 & \textbf{72.77}\barqd{barqgood}{+1.26} & 72.44 & \textbf{73.96}\barqd{barqgood}{+1.52} \\
    \addlinespace[1.6pt]
    \multirow{4}{*}{4} & 32 & 2.500 & 7.73 & \textbf{7.61}\barqd{barqgood}{\textendash0.12} & \textbf{84.09} & 83.21\barqd{barqbad}{\textendash0.88} & \textbf{61.09} & 60.92\barqd{barqbad}{\textendash0.17} & 82.20 & \textbf{82.64}\barqd{barqgood}{+0.44} & 81.18 & \textbf{81.77}\barqd{barqgood}{+0.59} & 73.01 & \textbf{73.24}\barqd{barqgood}{+0.23} & 76.31 & \textbf{76.36}\barqd{barqgood}{+0.05} \\
     & 64 & 2.250 & 7.85 & \textbf{7.61}\barqd{barqgood}{\textendash0.24} & \textbf{83.71} & 83.21\barqd{barqbad}{\textendash0.50} & 60.32 & \textbf{61.18}\barqd{barqgood}{+0.86} & 81.92 & \textbf{82.63}\barqd{barqgood}{+0.71} & 80.63 & \textbf{81.94}\barqd{barqgood}{+1.31} & 72.45 & \textbf{73.40}\barqd{barqgood}{+0.95} & 75.81 & \textbf{76.47}\barqd{barqgood}{+0.66} \\
     & 128 & 2.125 & 7.84 & \textbf{7.61}\barqd{barqgood}{\textendash0.23} & 82.45 & \textbf{83.21}\barqd{barqgood}{+0.76} & 60.84 & \textbf{61.35}\barqd{barqgood}{+0.51} & 81.70 & \textbf{82.75}\barqd{barqgood}{+1.05} & 80.79 & \textbf{81.72}\barqd{barqgood}{+0.93} & 71.59 & \textbf{73.01}\barqd{barqgood}{+1.42} & 75.47 & \textbf{76.41}\barqd{barqgood}{+0.94} \\
     & 1024 & 2.500 & 8.15 & \textbf{7.69}\barqd{barqgood}{\textendash0.46} & 80.22 & \textbf{83.80}\barqd{barqgood}{+3.58} & \textbf{61.43} & 60.92\barqd{barqbad}{\textendash0.51} & 80.58 & \textbf{82.49}\barqd{barqgood}{+1.91} & 79.38 & \textbf{81.39}\barqd{barqgood}{+2.01} & 71.19 & \textbf{73.24}\barqd{barqgood}{+2.05} & 74.56 & \textbf{76.37}\barqd{barqgood}{+1.81} \\
    \bottomrule
  \end{tabular}}
\end{table*}

\definecolor{barqtint}{RGB}{230,239,252}
\definecolor{barqaccent}{RGB}{38,84,156}
\definecolor{barqgood}{RGB}{31,135,72}
\definecolor{barqbad}{RGB}{196,48,48}
\definecolor{barqmute}{RGB}{120,126,138}
\definecolor{barqrule}{RGB}{176,184,198}
\ifcsname NC@find@Z\endcsname\else\newcolumntype{Z}{>{\raggedleft\arraybackslash}X}\fi
\ifcsname NC@find@V\endcsname\else\newcolumntype{V}{>{\columncolor{barqtint}\raggedleft\arraybackslash}X}\fi
\ifcsname NC@find@L\endcsname\else\newcolumntype{L}{>{\raggedright\arraybackslash}X}\fi
\ifcsname NC@find@I\endcsname\else\newcolumntype{I}{!{\hspace{5pt}}}\fi
\providecommand{\barqdash}{\raisebox{-\dp\csname @arstrutbox\endcsname}{\vbox to \dimexpr\ht\csname @arstrutbox\endcsname+\dp\csname @arstrutbox\endcsname\relax{\cleaders\vbox{\kern0.55pt{\color{barqrule}\hrule width 0.45pt height 1.3pt}\kern0.55pt}\vfil}}}
\ifcsname NC@find@Q\endcsname\else\newcolumntype{Q}{!{\hspace{2.25pt}{\color{barqrule}\vrule width 0.5pt}\hspace{2.25pt}}}\fi
\providecommand{\barqsetup}{%
  \setlength{\aboverulesep}{0.8pt}\setlength{\belowrulesep}{0.8pt}%
  \setlength{\extrarowheight}{0pt}\arrayrulecolor{black}%
  \setlength{\heavyrulewidth}{1.0pt}\setlength{\lightrulewidth}{0.5pt}%
  \setlength{\cmidrulewidth}{0.4pt}}
\definecolor{barqqwen}{RGB}{97,92,237}%
\definecolor{barqllama}{RGB}{4,103,223}%
\providecommand{\barqlogo}[1]{\smash{\raisebox{-0.15em}{\includegraphics[height=0.92em]{tables/logos/#1}}}}
\providecommand{\barqmodelstrut}{\rule[-0.12em]{0pt}{1.16em}}
\providecommand{\barqmodel}[1]{\barqmodelstrut\barqmodelx#1\relax}
\def\barqmodelx#1#2\relax{\ifx Q#1{\barqlogo{qwen.png}\,\textcolor{barqqwen}{\textbf{#1#2}}}\else{\barqlogo{meta.pdf}\,\textcolor{barqllama}{\textbf{#1#2}}}\fi}
\providecommand{\barqdw}{1.9em}
\providecommand{\barqd}[2]{\makebox[\barqdw][l]{\kern0.6pt\raisebox{0.62ex}{\fontsize{5pt}{5pt}\selectfont\color{#1}#2}}}
\providecommand{\barqsecond}[1]{\underline{#1}}
\providecommand{\barqinner}[1]{\arrayrulecolor{barqrule}\cmidrule[0.3pt]{#1}\arrayrulecolor{black}}
\providecommand{\barqinnertwo}[2]{\arrayrulecolor{barqrule}\cmidrule[0.3pt](r){#1}\cmidrule[0.3pt]{#2}\arrayrulecolor{black}}

\begin{table*}[!htbp]
  \centering
  \caption{\textbf{Full per-task external-baseline comparison.} Compact view:
  Table~\ref{tab:external}. PPL is WikiText-2 perplexity; per-task columns are zero-shot
  accuracy~(\%); Avg.\ is their mean. VPTQ uses 3.000 index bpv, excluding codebooks and metadata. A dash in PPL denotes quantization collapse; other dashes denote unavailable results. Bold marks the best; underlining marks the runner-up. The $\Delta$ row gives \method minus the strongest baseline (\textcolor{barqgood}{green}: better; \textcolor{barqbad}{red}: worse).}
  \label{tab:external_full}
  \barqsetup
  \footnotesize
  \renewcommand{\arraystretch}{0.86}
  \setlength{\tabcolsep}{3pt}
  \begin{tabularx}{\textwidth}{@{}>{\hsize=1.9\hsize}L >{\hsize=.7\hsize}Z >{\hsize=1.0\hsize}Z >{\hsize=.914\hsize}Z>{\hsize=.914\hsize}Z>{\hsize=.914\hsize}Z>{\hsize=.914\hsize}Z>{\hsize=.914\hsize}Z>{\hsize=.834\hsize}Z@{}}
    \toprule
    \multirow{2}{*}{\textbf{Method}} & \multirow{2}{*}{\textbf{bpv}} & \multirow{2}{*}{\textbf{PPL}\,$\downarrow$}
      & \multicolumn{6}{c@{}}{\textbf{Zero-shot accuracy\,(\%)}\,$\uparrow$} \\
    \cmidrule(l){4-9}
    & & & ARC-e & ARC-c & HellaSw. & PIQA & WinoG. & \textbf{Avg.} \\
    \midrule
    \multicolumn{9}{@{}c@{}}{\barqmodel{Llama3.2-1B}} \\
    \barqinner{1-9}
    GPTQ & 2.500 & 209.53 & 27.95 & 26.19 & 28.36 & 51.90 & 47.51 & 36.38 \\
    HQQ & 2.500 & 4917.51 & 27.36 & 26.96 & 28.80 & 53.92 & 49.57 & 37.32 \\
    QuIP & 2.000 & 377.93 & -- & -- & -- & -- & -- & -- \\
    VPTQ & 3.000 & \barqsecond{15.91} & 57.91 & 34.22 & 56.45 & \barqsecond{72.63} & \barqsecond{59.98} & 56.24 \\
    AQLM (no BFT) & 2.067 & 20.30 & 56.19 & 30.29 & 47.35 & 67.08 & 57.54 & 51.69 \\
    AQLM (BFT) & 2.067 & 17.39 & 56.31 & 31.06 & 49.68 & 68.44 & 57.62 & 52.62 \\
    GPT-VQ & 2.125 & 16.10 & \barqsecond{59.76} & \barqsecond{35.32} & \barqsecond{56.53} & 71.33 & 58.96 & \barqsecond{56.38} \\
    \rowcolor{barqtint} \textbf{\method (Ours)} & 2.125 & \textbf{13.21} & \textbf{63.93} & \textbf{37.29} & \textbf{61.54} & \textbf{73.99} & \textbf{61.48} & \textbf{59.65} \\
    \rowcolor{barqtint} \leavevmode{\scriptsize\itshape\color{barqmute}\quad$\Delta$ vs.\ best baseline} & & \leavevmode{\scriptsize\color{barqgood}$-$2.70} & \leavevmode{\scriptsize\color{barqgood}+4.17} & \leavevmode{\scriptsize\color{barqgood}+1.97} & \leavevmode{\scriptsize\color{barqgood}+5.01} & \leavevmode{\scriptsize\color{barqgood}+1.36} & \leavevmode{\scriptsize\color{barqgood}+1.50} & \leavevmode{\scriptsize\color{barqgood}+3.27} \\
    \midrule
    \multicolumn{9}{@{}c@{}}{\barqmodel{Llama3-8B}} \\
    \barqinner{1-9}
    GPTQ & 2.500 & 32.59 & 30.81 & 22.70 & 28.62 & 53.43 & 48.86 & 36.88 \\
    HQQ & 2.500 & 204.69 & 35.65 & 25.94 & 43.46 & 57.13 & 56.83 & 43.80 \\
    QuIP & 2.000 & 38.06 & 29.38 & 22.01 & 33.91 & 52.88 & 51.54 & 37.94 \\
    VPTQ & 3.000 & \barqsecond{9.11} & \barqsecond{76.30} & \barqsecond{52.56} & 72.53 & \barqsecond{77.37} & 69.38 & \barqsecond{69.63} \\
    AQLM (no BFT) & 2.067 & 11.05 & 66.33 & 41.72 & 63.65 & 72.74 & 66.54 & 62.20 \\
    AQLM (BFT) & 2.067 & 10.18 & 71.34 & 45.22 & 66.30 & 75.08 & 67.01 & 64.99 \\
    GPT-VQ & 2.125 & 11.00 & 74.87 & 50.17 & \barqsecond{74.05} & 76.22 & \textbf{72.06} & 69.47 \\
    \rowcolor{barqtint} \textbf{\method (Ours)} & 2.125 & \textbf{8.30} & \textbf{78.70} & \textbf{56.14} & \textbf{75.58} & \textbf{78.73} & \barqsecond{71.19} & \textbf{72.07} \\
    \rowcolor{barqtint} \leavevmode{\scriptsize\itshape\color{barqmute}\quad$\Delta$ vs.\ best baseline} & & \leavevmode{\scriptsize\color{barqgood}$-$0.81} & \leavevmode{\scriptsize\color{barqgood}+2.40} & \leavevmode{\scriptsize\color{barqgood}+3.58} & \leavevmode{\scriptsize\color{barqgood}+1.53} & \leavevmode{\scriptsize\color{barqgood}+1.36} & \leavevmode{\scriptsize\color{barqbad}$-$0.87} & \leavevmode{\scriptsize\color{barqgood}+2.44} \\
    \midrule
    \multicolumn{9}{@{}c@{}}{\barqmodel{Qwen3-1.7B}} \\
    \barqinner{1-9}
    GPTQ & 2.500 & 666.14 & 25.63 & 24.06 & 26.99 & 50.98 & 50.43 & 35.62 \\
    HQQ & 2.500 & 1018.43 & 30.26 & 23.89 & 27.80 & 52.12 & 51.70 & 37.15 \\
    QuIP & 2.000 & 333.99 & -- & -- & -- & -- & -- & -- \\
    VPTQ & 3.000 & -- & 25.17 & 26.71 & 26.70 & 51.90 & 51.14 & 36.32 \\
    AQLM (no BFT) & 2.067 & 21.68 & 46.89 & 29.95 & 46.47 & 63.44 & 56.99 & 48.75 \\
    AQLM (BFT) & 2.067 & \barqsecond{18.30} & 56.14 & 34.98 & 49.92 & \barqsecond{67.74} & \barqsecond{58.72} & 53.50 \\
    GPT-VQ & 2.125 & 20.54 & \barqsecond{60.40} & \barqsecond{37.37} & \barqsecond{56.52} & 67.30 & 57.14 & \barqsecond{55.75} \\
    \rowcolor{barqtint} \textbf{\method (Ours)} & 2.125 & \textbf{16.59} & \textbf{64.31} & \textbf{41.64} & \textbf{60.22} & \textbf{71.38} & \textbf{60.85} & \textbf{59.68} \\
    \rowcolor{barqtint} \leavevmode{\scriptsize\itshape\color{barqmute}\quad$\Delta$ vs.\ best baseline} & & \leavevmode{\scriptsize\color{barqgood}$-$1.71} & \leavevmode{\scriptsize\color{barqgood}+3.91} & \leavevmode{\scriptsize\color{barqgood}+4.27} & \leavevmode{\scriptsize\color{barqgood}+3.70} & \leavevmode{\scriptsize\color{barqgood}+3.64} & \leavevmode{\scriptsize\color{barqgood}+2.13} & \leavevmode{\scriptsize\color{barqgood}+3.93} \\
    \midrule
    \multicolumn{9}{@{}c@{}}{\barqmodel{Qwen3-8B}} \\
    \barqinner{1-9}
    GPTQ & 2.500 & 17.94 & 38.05 & 28.24 & 48.60 & 61.97 & 53.12 & 46.00 \\
    HQQ & 2.500 & 46.57 & 42.13 & 26.11 & 43.34 & 62.46 & 50.43 & 44.89 \\
    QuIP & 2.000 & 25.51 & 34.93 & 25.00 & 38.33 & 59.52 & 51.07 & 41.77 \\
    VPTQ & 3.000 & 10.65 & \textbf{80.81} & 55.29 & 71.47 & \barqsecond{76.71} & \textbf{69.85} & \barqsecond{70.83} \\
    AQLM (no BFT) & 2.067 & 10.73 & 75.93 & 49.40 & 63.89 & 73.50 & 66.30 & 65.80 \\
    AQLM (BFT) & 2.067 & 10.39 & 76.05 & 48.55 & 65.65 & 73.99 & 66.69 & 66.19 \\
    GPT-VQ & 2.125 & \barqsecond{10.28} & 80.18 & \barqsecond{55.55} & \barqsecond{73.84} & 76.17 & 67.64 & 70.68 \\
    \rowcolor{barqtint} \textbf{\method (Ours)} & 2.125 & \textbf{9.72} & \barqsecond{80.39} & \textbf{56.31} & \textbf{74.99} & \textbf{77.75} & \barqsecond{67.88} & \textbf{71.46} \\
    \rowcolor{barqtint} \leavevmode{\scriptsize\itshape\color{barqmute}\quad$\Delta$ vs.\ best baseline} & & \leavevmode{\scriptsize\color{barqgood}$-$0.56} & \leavevmode{\scriptsize\color{barqbad}$-$0.42} & \leavevmode{\scriptsize\color{barqgood}+0.76} & \leavevmode{\scriptsize\color{barqgood}+1.15} & \leavevmode{\scriptsize\color{barqgood}+1.04} & \leavevmode{\scriptsize\color{barqbad}$-$1.97} & \leavevmode{\scriptsize\color{barqgood}+0.63} \\
    \midrule
    \multicolumn{9}{@{}c@{}}{\barqmodel{Qwen3-14B}} \\
    \barqinner{1-9}
    GPTQ & 2.500 & 12.90 & 54.92 & 34.90 & 60.84 & 70.46 & 56.35 & 55.49 \\
    HQQ & 2.500 & 18.40 & 65.57 & 41.81 & 61.78 & 72.47 & 58.41 & 60.01 \\
    QuIP & 2.000 & 14.83 & 43.86 & 29.35 & 54.01 & 64.74 & 53.67 & 49.13 \\
    VPTQ & 3.000 & 9.32 & 80.98 & 57.94 & 76.01 & 78.45 & 71.59 & 72.99 \\
    AQLM (no BFT) & 2.067 & 9.42 & 74.79 & 50.43 & 70.52 & 75.68 & \barqsecond{71.82} & 68.65 \\
    AQLM (BFT) & 2.067 & 9.09 & 77.99 & 53.50 & 71.42 & 76.22 & 71.03 & 70.03 \\
    GPT-VQ & 2.125 & \barqsecond{8.93} & \textbf{83.25} & \barqsecond{59.73} & \barqsecond{78.26} & \barqsecond{80.03} & 71.67 & \barqsecond{74.59} \\
    \rowcolor{barqtint} \textbf{\method (Ours)} & 2.125 & \textbf{8.66} & \barqsecond{83.12} & \textbf{61.09} & \textbf{78.72} & \textbf{80.25} & \textbf{72.53} & \textbf{75.14} \\
    \rowcolor{barqtint} \leavevmode{\scriptsize\itshape\color{barqmute}\quad$\Delta$ vs.\ best baseline} & & \leavevmode{\scriptsize\color{barqgood}$-$0.27} & \leavevmode{\scriptsize\color{barqbad}$-$0.13} & \leavevmode{\scriptsize\color{barqgood}+1.36} & \leavevmode{\scriptsize\color{barqgood}+0.46} & \leavevmode{\scriptsize\color{barqgood}+0.22} & \leavevmode{\scriptsize\color{barqgood}+0.71} & \leavevmode{\scriptsize\color{barqgood}+0.55} \\
    \midrule
    \multicolumn{9}{@{}c@{}}{\barqmodel{Qwen3-32B}} \\
    \barqinner{1-9}
    GPTQ & 2.500 & 14.26 & 49.96 & 35.49 & 63.45 & 69.31 & 56.67 & 54.98 \\
    HQQ & 2.500 & 15.11 & 62.04 & 43.43 & 69.03 & 71.82 & 57.85 & 60.83 \\
    QuIP & 2.000 & 14.01 & 25.08 & 22.70 & 56.55 & 49.51 & 55.72 & 41.91 \\
    VPTQ & 3.000 & 8.50 & 78.20 & 55.63 & 79.50 & 77.97 & 70.40 & 72.34 \\
    AQLM (no BFT) & 2.067 & 8.83 & 82.07 & 60.49 & 77.35 & 77.80 & 71.51 & 73.84 \\
    AQLM (BFT) & 2.067 & 8.39 & 82.41 & 60.15 & 77.95 & 79.49 & \barqsecond{71.90} & 74.38 \\
    GPT-VQ & 2.125 & \barqsecond{7.84} & \barqsecond{82.45} & \barqsecond{60.84} & \barqsecond{81.70} & \barqsecond{80.79} & 71.59 & \barqsecond{75.47} \\
    \rowcolor{barqtint} \textbf{\method (Ours)} & 2.125 & \textbf{7.61} & \textbf{83.21} & \textbf{61.35} & \textbf{82.75} & \textbf{81.72} & \textbf{73.01} & \textbf{76.41} \\
    \rowcolor{barqtint} \leavevmode{\scriptsize\itshape\color{barqmute}\quad$\Delta$ vs.\ best baseline} & & \leavevmode{\scriptsize\color{barqgood}$-$0.23} & \leavevmode{\scriptsize\color{barqgood}+0.76} & \leavevmode{\scriptsize\color{barqgood}+0.51} & \leavevmode{\scriptsize\color{barqgood}+1.05} & \leavevmode{\scriptsize\color{barqgood}+0.93} & \leavevmode{\scriptsize\color{barqgood}+1.11} & \leavevmode{\scriptsize\color{barqgood}+0.94} \\
    \bottomrule
  \end{tabularx}
\end{table*}

\section{Additional Results}
\label{sec:appendix_figures}

This appendix collects three diagnostic figures
(Figures~\ref{fig:gain}--\ref{fig:usage}) and their analysis.
Figure~\ref{fig:step_diagnostics} combines assignment entropy and reconstruction
distortion in panels~(a) and~(b).

\begin{figure*}[!t]
  \centering
  \includegraphics[width=\textwidth]{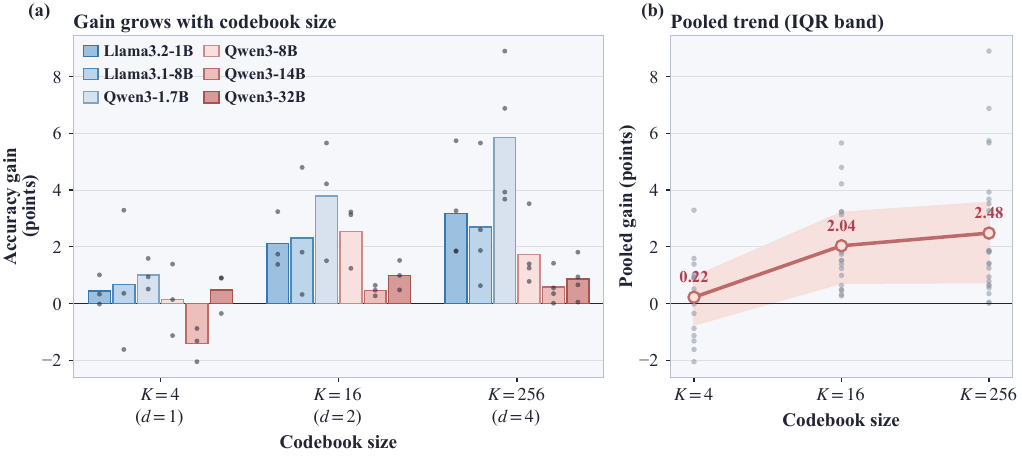}
  \caption{\textbf{The observed accuracy gain is larger in the larger-codebook strata.}
  \textbf{(a)} Mean gain per model at each codebook size, with the individual
  configurations overlaid as points.
  \textbf{(b)} Pooled trend across all models, with the interquartile band.
  The pooled mean grows with codebook size, while the points retain the
  regressions at $K=4$.
  }
  \label{fig:gain}
\end{figure*}

\begin{figure*}[!t]
  \centering
  \begin{minipage}{\textwidth}
    \footnotesize\textbf{(a)}\par\vspace{1pt}
    \includegraphics[width=\textwidth]{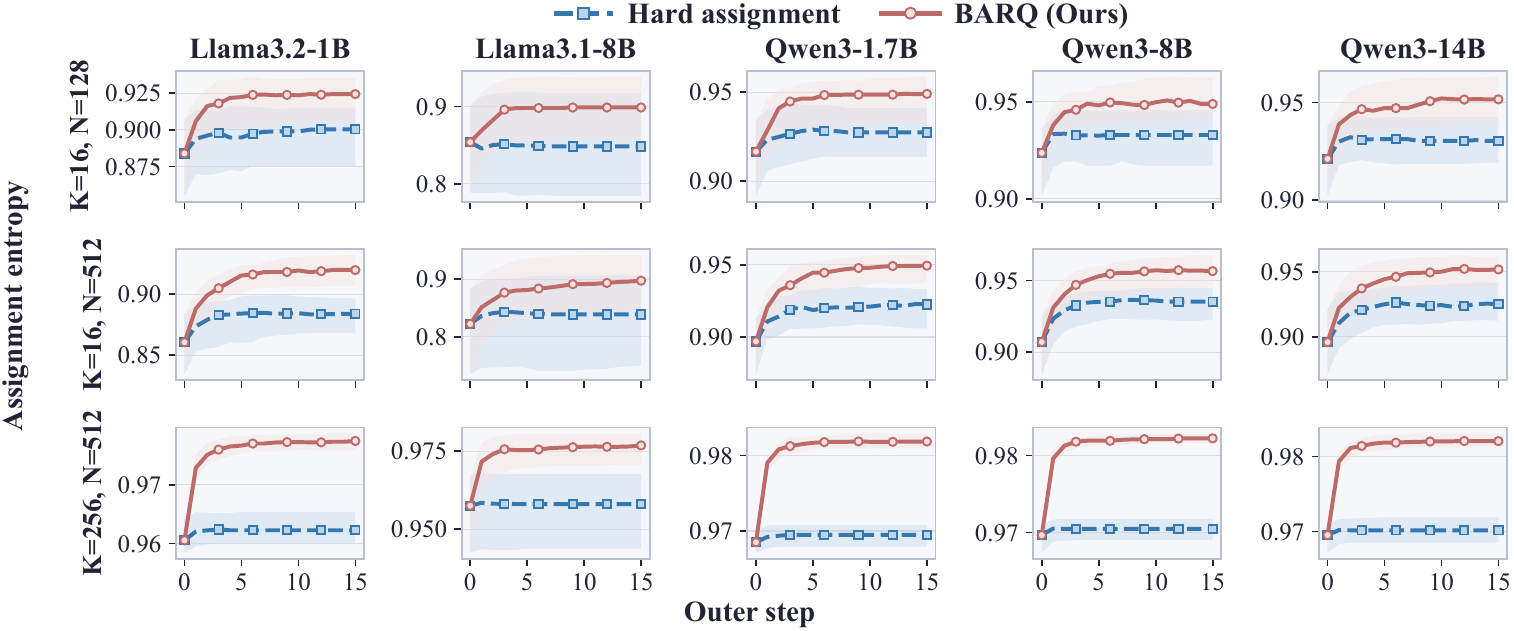}
  \end{minipage}\\[4mm]
  \begin{minipage}{\textwidth}
    \footnotesize\textbf{(b)}\par\vspace{1pt}
    \includegraphics[width=\textwidth]{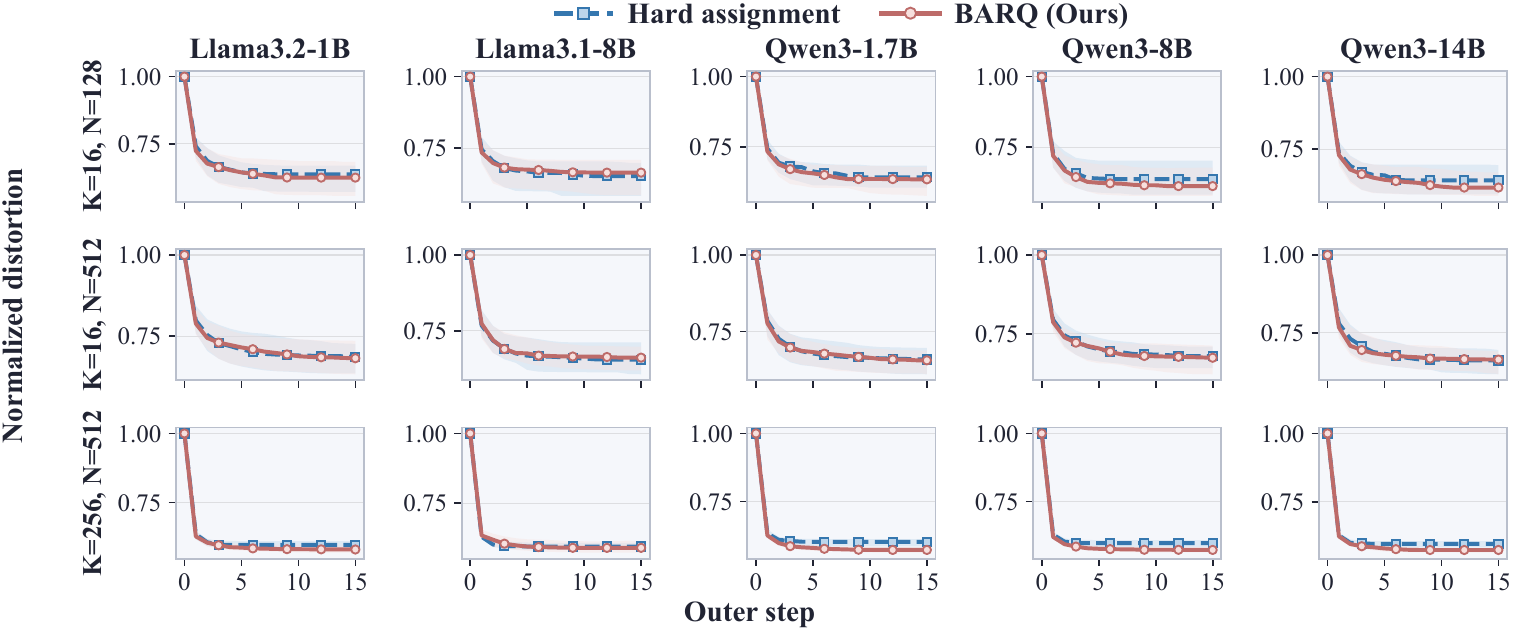}
  \end{minipage}
  \caption{\textbf{Assignment evenness and local reconstruction over the same
  fitting trajectories.}
  \textbf{(a)} Normalized hard-assignment entropy, where \method has the higher
  step-15 median in all 15 model-by-setting cells.
  \textbf{(b)} Euclidean reconstruction distortion normalized by each group's
  step-zero value, where \method has the lower step-15 median in 12 of the 15
  cells.
  In both panels, columns are models, rows are the three $(K,N)$ settings,
  the $x$-axis is the outer refinement step, and the dashed grey curve is the
  hard-assignment baseline while the solid curve is \method.}
  \label{fig:step_diagnostics}
\end{figure*}

\subsection{Discussion and Analysis}
\label{sec:why}

The exact theory and the mechanism probes address different quantities. The
theory controls soft fitting mass and a regularized refinement objective,
whereas the probes measure the hard encoder after refinement. Keeping these
objects separate lets the observed codebook-size pattern inform the mechanism
without turning it into a causal claim.

\begin{figure}[!t]
  \centering
  \includegraphics[width=0.96\textwidth]{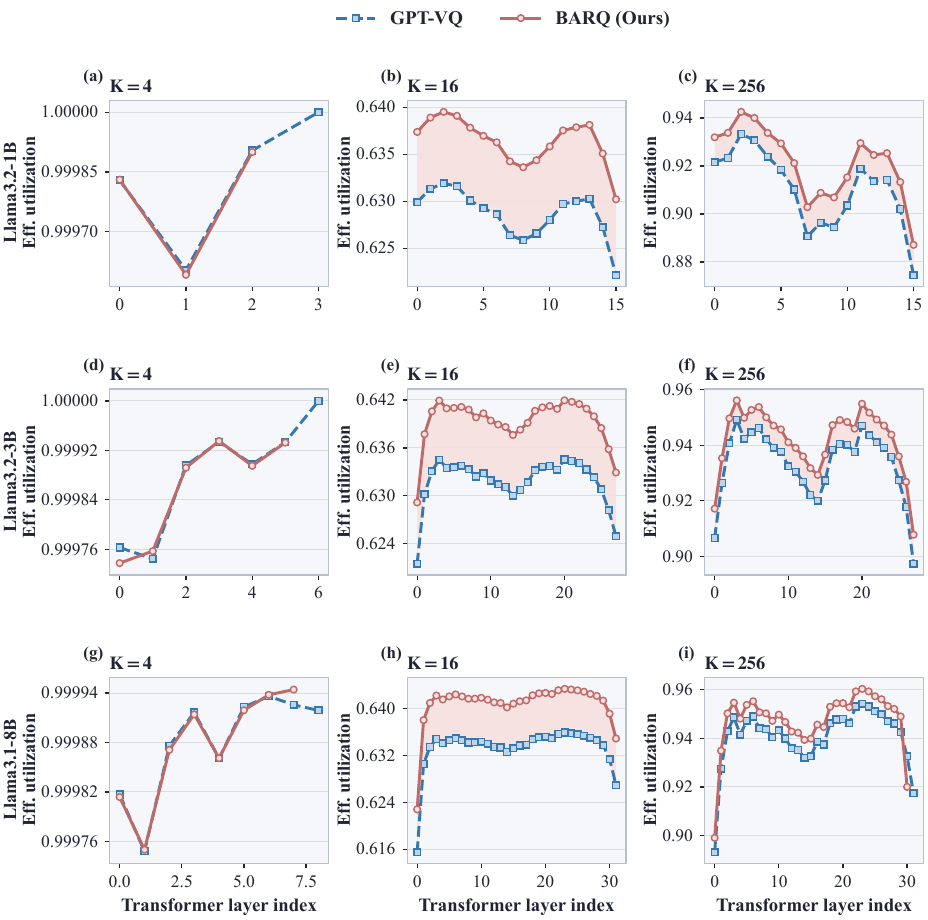}
  \caption{\textbf{Hard codeword use is higher at the larger codebook sizes
  across layers and model scales.}
  Each panel shows effective utilization, the number of used codewords divided
  by $\min(K,N)$ with $N$ the number of blocks per group, for three model
  scales and three codebook sizes.
  At $K=4$ both methods nearly saturate the codebook under nearest-codeword
  assignment, because every group holds many more blocks than codewords.
  At $K=16$ \method is higher at every measured layer.
  At $K=256$ it is higher in 74 of 75 matched layers; one further 8B baseline
  layer has no \method counterpart.
  The matched weighted aggregate is higher at all three model scales.}
  \label{fig:usage}
\end{figure}

\textbf{Accuracy Differences across Codebook Sizes.}
Figure~\ref{fig:gain} reports the observed accuracy difference between paired
rows at each codebook size, per model in panel~\textbf{(a)} and as a pooled
trend in panel~\textbf{(b)}. The pooled mean difference is smallest at $K=4$
and larger at $K=16$ and $K=256$, while several $K=4$ configurations regress.
This pattern is consistent with the hypothesis that larger codebooks leave more
room for balanced refinement, but it neither identifies $K$ as a causal factor
nor establishes improved utilization as the underlying mechanism.

\textbf{Evenness and Local Fit during Refinement.}
Figure~\ref{fig:step_diagnostics} follows the same fitting trajectories with
two further quantities. Normalized hard-assignment entropy has a higher
step-15 median for \method in all 15 model-by-setting cells, and normalized
reconstruction distortion has a lower step-15 median in 12 of those 15 cells.
Both quantities describe the fitting stage of a single group and say nothing
directly about perplexity.

\textbf{Realized Codebook Use.}
The first post-refinement measurement is the number of codewords that receive
at least one block, normalized by $\min(K,N)$, where $N$ is the number of
blocks in the group. Figure~\ref{fig:usage} reports this quantity by layer for
the same three Llama 3 scales used in
Section~\ref{sec:exp_concentration}. At $K=4$ both methods nearly saturate the
codebook under
nearest-codeword assignment, so this stratum separates the two methods very
little. At $K=16$ the paired \method output has the higher value at every
measured layer, and at $K=256$ it is higher in 74 of the 75 matched layers.
The matched weighted aggregate is higher at all three model scales. The paired
outputs therefore reach a different realized encoder distribution at the two
larger codebook sizes, although realized use is not by itself the mechanism.



\section{Second-Order Interpretation of the Approximation Term}
\label{sec:appendix_theory}
\label{sec:appendix_eta}
\label{sec:appendix_deferred}
\label{sec:appendix_loss}

The loss guarantee in Theorem~\ref{thm:loss_bridge} depends on the
approximation term $\eta_C$. The following expansion separates its
first-order, metric-mismatch, and higher-order contributions.

\begin{lemma}[Local loss expansion]
\label{ass:local_loss_expansion}%
\label{lem:local_loss_expansion}%
Suppose $\mathcal{L}$ is three times continuously differentiable on a
neighbourhood of the segment between $W$ and $Q_C(W)$, and
$|D^3\mathcal{L}(\cdot)[\delta_C,\delta_C,\delta_C]|
\le L_3\|\delta_C\|_2^3$ on that segment. Let $g$ and $\mathfrak H$ be
the gradient and Hessian of $\mathcal{L}$ at $W$. Then
\begin{equation}
\mathcal{L}(C)=\mathcal{L}(W)+g^\top\delta_C
+\tfrac12\delta_C^\top\mathfrak H\delta_C+r_C,
\qquad |r_C|\le\tfrac{L_3}{6}\|\delta_C\|_2^3,
\label{eq:loss_expansion}
\end{equation}
and consequently
\begin{equation}
\eta_C\le |g^\top\delta_C|
+\tfrac12\bigl|\delta_C^\top(\mathfrak H-\widetilde H)\delta_C\bigr|
+\tfrac{L_3}{6}\|\delta_C\|_2^3.
\label{eq:loss_residual}
\end{equation}
\end{lemma}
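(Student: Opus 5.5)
The plan is a third-order Taylor expansion with integral remainder along the segment from $W$ to $Q_C(W)$, followed by the triangle inequality on the definition of $\eta_C$. Parametrize the segment by $\varphi(t)=\mathcal{L}(W+t\delta_C)$ for $t\in[0,1]$. By the hypothesis, $\mathcal{L}$ is $C^3$ on a neighbourhood of this segment, so $\varphi$ is $C^3$ on $[0,1]$. Its derivatives are $\varphi'(0)=g^\top\delta_C$ and $\varphi''(0)=\delta_C^\top\mathfrak H\delta_C$, and $\varphi'''(t)=D^3\mathcal{L}(W+t\delta_C)[\delta_C,\delta_C,\delta_C]$. Here $\delta_C$ is the stacked perturbation, with $\mathcal{L}(C)=\varphi(1)$ and $\mathcal{L}(W)=\varphi(0)$.

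Taylor's theorem with integral (or Lagrange) remainder gives
\[
\varphi(1)=\varphi(0)+\varphi'(0)+\tfrac12\varphi''(0)+r_C,
\qquad r_C=\int_0^1\tfrac{(1-t)^2}{2}\varphi'''(t)\,dt .
\]
The assumed bound $|\varphi'''(t)|\le L_3\|\delta_C\|_2^3$ holds at every point of the segment. Since $\int_0^1(1-t)^2/2\,dt=1/6$, this yields $|r_C|\le \tfrac{L_3}{6}\|\delta_C\|_2^3$, which is \eqref{eq:loss_expansion}.

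For \eqref{eq:loss_residual}, substitute the expansion into the definition of $\eta_C$. Recall from Section~\ref{sec:method_loss_bridge} that $D(C)=\delta_C^\top\widetilde H\delta_C$. This gives
\[
\mathcal{L}(C)-\mathcal{L}(W)-\tfrac12D(C)=g^\top\delta_C+\tfrac12\delta_C^\top(\mathfrak H-\widetilde H)\delta_C+r_C .
\]
Taking absolute values and applying the triangle inequality, together with the remainder bound, gives the claim.

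The only point needing care is bookkeeping rather than analysis. The coordinate ordering of $\delta_C$ must match the block-diagonal ordering of $\widetilde H$ and the ordering in which $g$ and $\mathfrak H$ are taken. The cubic bound is also only assumed in the single direction $\delta_C$, so I will state the argument with directional derivatives only.
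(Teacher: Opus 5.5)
Your proposal is correct and follows the paper's proof essentially step for step. The paper also restricts the loss to the segment via $h(t)=\mathcal{L}(W+t\delta_C)$ and bounds the integral remainder $\tfrac12\int_0^1(1-t)^2h'''(t)\,dt$ by $\tfrac{L_3}{6}\|\delta_C\|_2^3$; it then subtracts $\tfrac12 D(C)=\tfrac12\delta_C^\top\widetilde H\delta_C$ and applies the triangle inequality to obtain the bound on $\eta_C$.
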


The proof is given in Appendix~\ref{sec:proof_local_loss_expansion}.
Equation~\eqref{eq:loss_residual} makes explicit that small quantization
perturbations control the cubic remainder, but do not by themselves
remove the gradient or metric-mismatch terms. This second-order
interpretation follows the perspective of weight-pruning
analysis~\cite{lecun1990obd,hassibi1992obs} and calibration-based
quantization objectives~\cite{gptq2023,gptvq2024}.

\section{Proofs for Theoretical Analysis}
\label{sec:appendix_main_proofs}
\label{sec:appendix_proofs}

We prove Theorems~\ref{thm:entropic_estep}--\ref{thm:loss_bridge} in their
main-text order, followed by Lemma~\ref{lem:local_loss_expansion}.
Throughout the refinement analysis, $H=H^\top\succ0$, the costs are finite,
$a$ and $b$ are strictly positive probability vectors, and $H$, $a$, $b$,
and $\tau>0$ remain fixed within an exact pair. We write
$\mathcal{E}(p)=-\sum_i p_i\log p_i$, with $0\log0=0$.

\subsection{Auxiliary Facts}
\label{sec:appendix_auxiliary}

We recall the standard facts used below. The coupling-entropy and
hard-decoding identities are recorded explicitly because they determine
the assignment-penalty bounds.

\begin{fact}[Gibbs' inequality]
\label{fact:gibbs}
For probability vectors $p$ and $r$ with $r_i>0$,
$\KL(p\|r)\ge0$, with equality if and only if $p=r$.
\end{fact}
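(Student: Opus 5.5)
The plan is to reduce the discrete statement to the strict concavity of $t\mapsto\log t$ through Jensen's inequality, which is the standard route to Gibbs' inequality. Write $S=\{i:p_i>0\}$. Terms with $p_i=0$ contribute $0$ to $\KL(p\|r)=\sum_i p_i\log(p_i/r_i)$ under the convention $0\log0=0$. These terms are also finite, because $r_i>0$ throughout. So it suffices to control
\[
-\KL(p\|r)=\sum_{i\in S}p_i\log\frac{r_i}{p_i}.
\]

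First I apply Jensen's inequality to the concave function $\log$. The weights are $p_i$, $i\in S$, which sum to one, and the points are $r_i/p_i>0$. This gives
\[
\sum_{i\in S}p_i\log\frac{r_i}{p_i}\le\log\sum_{i\in S}r_i\le\log 1=0.
\]
The second inequality uses $\sum_{i\in S}r_i\le\sum_i r_i=1$ together with the monotonicity of $\log$. Hence $\KL(p\|r)\ge0$.

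For the equality case, the reverse direction is immediate: $p=r$ makes every log ratio vanish. Conversely, suppose $\KL(p\|r)=0$. Then both inequalities above must be equalities. Because $\log$ is strictly concave, equality in Jensen forces the points $r_i/p_i$ to be constant on $S$, say equal to $c$. Equality in the second inequality forces $\sum_{i\in S}r_i=1$. Since every $r_i$ is strictly positive, this means $S$ is the whole index set.

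It remains to identify $c$. Summing $r_i=cp_i$ over all $i$ gives $1=c$, so $p=r$.

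The only delicate point is this equality step: we need the positivity assumption $r_i>0$ to rule out $S$ being a proper subset. That is exactly where the hypothesis enters. Everything else is routine.
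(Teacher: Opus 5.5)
Your proof is correct. The paper gives no proof of this fact: it recalls Gibbs' inequality as standard, and your Jensen argument with the strict-concavity equality case is the textbook proof the paper relies on. One small refinement to your closing remark: the equality step does not need $r_i>0$, since $\sum_{i\in S}r_i=1$ already forces $r$ to vanish off $S$, where $p$ vanishes too, and summing $r_i=cp_i$ over $S$ gives $c=1$. The hypothesis really enters earlier, by placing each point $r_i/p_i$ with $i\in S$ in $(0,\infty)$, so the logarithms are finite and Jensen applies.
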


\begin{fact}[Uniform vector]
\label{fact:uniform}
For a probability vector $p$ on $K$ entries and
$\mathbf u=K^{-1}\mathbf 1_K$,
\begin{equation}
\mathcal{E}(p)=\log K-\KL(p\|\mathbf u)\le\log K.
\label{eq:aux_uniform}
\end{equation}
Also, $\max_kp_k\ge1/K$. Equality in either bound holds exactly when
$p=\mathbf u$.
\end{fact}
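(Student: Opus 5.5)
The plan is to reduce everything to the Gibbs inequality (Fact~\ref{fact:gibbs}), applied to a suitable reference distribution. Write $p\in\Delta_K$ and $\mathbf u=K^{-1}\mathbf 1_K$, which is a strictly positive probability vector. Fact~\ref{fact:gibbs} therefore applies with $r=\mathbf u$.

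\emph{Step 1: the entropy identity.} Expand the divergence directly:
\[
\KL(p\|\mathbf u)=\sum_k p_k\log(Kp_k)=\log K\sum_k p_k+\sum_k p_k\log p_k=\log K-\mathcal{E}(p).
\]
The only facts used are $\sum_kp_k=1$ and the convention $0\log0=0$. The convention handles zero entries, because terms with $p_k=0$ contribute zero on both sides. Rearranging gives $\mathcal{E}(p)=\log K-\KL(p\|\mathbf u)$.

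\emph{Step 2: the entropy bound and its equality case.} Fact~\ref{fact:gibbs} gives $\KL(p\|\mathbf u)\ge0$, so $\mathcal{E}(p)\le\log K$ follows immediately from Step 1. Equality holds if and only if $\KL(p\|\mathbf u)=0$. By the equality clause of Fact~\ref{fact:gibbs}, this happens if and only if $p=\mathbf u$.

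\emph{Step 3: the maximum bound and its equality case.} This part is elementary and uses averaging. Since $\max_k p_k\ge p_j$ for every $j$, summing over $j$ gives $K\max_kp_k\ge\sum_jp_j=1$.

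For the equality case, suppose $\max_kp_k=1/K$. Then every entry satisfies $p_j\le1/K$. The slacks $1/K-p_j$ are nonnegative and sum to $1-1=0$, so each slack vanishes and $p=\mathbf u$. The converse, that $p=\mathbf u$ gives $\max_kp_k=1/K$, is trivial.

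The only step needing any care is the bookkeeping of zero entries in Step 1. Since the convention $0\log0=0$ is stated in the setup, I expect no real obstacle.
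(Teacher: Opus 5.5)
Your proposal is correct: expanding $\KL(p\|\mathbf u)=\log K-\mathcal{E}(p)$, applying Gibbs' inequality with $r=\mathbf u$ for the entropy bound and its equality case, and using the averaging argument $K\max_kp_k\ge\sum_jp_j=1$ with vanishing slacks for the maximum bound are all sound. The paper states this fact without a written proof, as a standard consequence of Fact~\ref{fact:gibbs}, and your argument follows that standard route.
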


\begin{fact}[Entropy of a coupling]
\label{fact:coupling_entropy}
For $P\in\Pi(a,b)$,
\begin{equation}
\mathcal{E}(P)\ge\max\{\mathcal{E}(a),\mathcal{E}(b)\},
\qquad
\KL(P\|ab^\top)=\mathcal{E}(a)+\mathcal{E}(b)-\mathcal{E}(P),
\label{eq:aux_coupling_identity}
\end{equation}
and hence
\begin{equation}
0\le\KL(P\|ab^\top)
\le\min\{\mathcal{E}(a),\mathcal{E}(b)\}.
\label{eq:aux_coupling_entropy}
\end{equation}
\end{fact}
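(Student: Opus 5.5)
We prove the two parts of Fact~\ref{fact:coupling_entropy} in order: first the KL identity, then the entropy lower bound, and the final sandwich follows from both.

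\textbf{The KL identity.} For $P\in\Pi(a,b)$, expand
\[
\KL(P\|ab^\top)=\sum_{k,n}P_{k,n}\log P_{k,n}-\sum_{k,n}P_{k,n}\log a_k-\sum_{k,n}P_{k,n}\log b_n .
\]
The first sum equals $-\mathcal{E}(P)$. In the second sum, we perform the inner summation over $n$ and use $\sum_nP_{k,n}=a_k$, which gives $\sum_ka_k\log a_k=-\mathcal{E}(a)$. Similarly, the third sum equals $-\mathcal{E}(b)$ by the column marginal. Entries with $P_{k,n}=0$ contribute zero under the convention $0\log0=0$. Since $a,b>0$, every logarithm of a marginal is finite. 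This yields the stated identity.

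\textbf{The entropy lower bound.} We use the chain rule for entropy rather than an inequality chase. Write $P_{k,n}=a_k\,P_{n\mid k}$ with $P_{n\mid k}=P_{k,n}/a_k$, which is well defined because $a_k>0$. Then
\[
\mathcal{E}(P)=\mathcal{E}(a)+\sum_ka_k\,\mathcal{E}(P_{\cdot\mid k})\ge\mathcal{E}(a),
\]
since each conditional entropy is nonnegative. An alternative direct argument: $P_{k,n}\le a_k$ gives $-\log P_{k,n}\ge-\log a_k$, and multiplying by $P_{k,n}$ and summing yields the same bound. The symmetric argument with $P_{k,n}\le b_n$ gives $\mathcal{E}(P)\ge\mathcal{E}(b)$. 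Together these give the max bound.

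\textbf{The sandwich.} The lower bound $\KL\ge0$ follows from Fact~\ref{fact:gibbs}, applied to $P$ and $ab^\top$ viewed as probability vectors on $K\times N$ entries, where $ab^\top$ is entrywise positive. For the upper bound, substitute $\mathcal{E}(P)\ge\mathcal{E}(b)$ into the identity to obtain $\KL\le\mathcal{E}(a)$. Likewise, $\mathcal{E}(P)\ge\mathcal{E}(a)$ gives $\KL\le\mathcal{E}(b)$. Taking the smaller of the two bounds gives the minimum.

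No step is a real obstacle. The only care needed is the handling of zero entries of $P$ through the $0\log0$ convention, and the use of strict positivity of the marginals so that the conditional distributions and $\log a_k$, $\log b_n$ are well defined.
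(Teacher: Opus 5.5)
Your proposal is correct and follows essentially the same route as the paper, which also expands $\KL(P\|ab^\top)$ using the row and column marginals, obtains $\mathcal{E}(P)\ge\max\{\mathcal{E}(a),\mathcal{E}(b)\}$ from the chain-rule decomposition through the row and column conditional distributions, and takes $\KL\ge0$ from Gibbs' inequality. Your additional pointwise argument via $P_{k,n}\le a_k$ is a valid alternative to the chain rule but is not needed.
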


\begin{proof}
Let $p^{(n)}_k=P_{k,n}/b_n$ and $r^{(k)}_n=P_{k,n}/a_k$ be the column
and row conditional distributions. The marginal constraints give
\begin{align*}
\mathcal{E}(P)
&=\mathcal{E}(b)+\sum_n b_n\mathcal{E}(p^{(n)})
 =\mathcal{E}(a)+\sum_k a_k\mathcal{E}(r^{(k)})
 \ge\max\{\mathcal{E}(a),\mathcal{E}(b)\},\\
\KL(P\|ab^\top)
&=\sum_{k,n}P_{k,n}\log P_{k,n}
  -\sum_k a_k\log a_k-\sum_n b_n\log b_n\\
&=\mathcal{E}(a)+\mathcal{E}(b)-\mathcal{E}(P)
 \le\min\{\mathcal{E}(a),\mathcal{E}(b)\}.
\end{align*}
The lower bound follows from Gibbs' inequality.
\end{proof}

\begin{fact}[Transport polytope and entropic objective]
\label{fact:polytope}
The set $\Pi(a,b)$ is nonempty, convex, and compact. For finite costs and
$\tau>0$, $P\mapsto\Phi_\tau(C,P)$ is continuous and strictly convex on
$\Pi(a,b)$.
\end{fact}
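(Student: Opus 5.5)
The statement to prove is Fact~\ref{fact:polytope}. I treat it as three separate claims: the structure of $\Pi(a,b)$, continuity of $P\mapsto\Phi_\tau(C,P)$, and strict convexity. For nonemptiness I will exhibit the product coupling $ab^\top$. It is entrywise nonnegative, and since $\sum_k a_k=\sum_n b_n=1$ we get $ab^\top\mathbf 1_N=a$ and $(ab^\top)^\top\mathbf 1_K=b$. Convexity holds because $\Pi(a,b)$ is cut out of $\mathbb{R}^{K\times N}$ by the linear equalities $P\mathbf 1_N=a$, $P^\top\mathbf 1_K=b$ together with the closed half-spaces $P_{k,n}\ge0$. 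An intersection of these closed convex sets is closed and convex. For boundedness, every $P\in\Pi(a,b)$ satisfies $0\le P_{k,n}\le\sum_{n'}P_{k,n'}=a_k\le1$, so $\Pi(a,b)\subset[0,1]^{K\times N}$. By Heine--Borel the set is compact.

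For continuity I split $\Phi_\tau(C,P)=R(C,P)+\tau\KL(P\|ab^\top)$. The first term is linear in $P$, since the costs $M_{k,n}(C)$ are finite and fixed, so it is continuous. The second term I write as $\sum_{k,n}\bigl[\phi(P_{k,n})-P_{k,n}\log(a_kb_n)\bigr]$ with $\phi(t)=t\log t$ and $\phi(0)=0$. The constants $\log(a_kb_n)$ are finite because $a,b>0$, so the linear part is harmless. The only point needing care is continuity of $\phi$ at $t=0$, which follows from $t\log t\to0$ as $t\downarrow0$. Hence $\Phi_\tau(C,\cdot)$ is continuous on the closed orthant, and in particular on $\Pi(a,b)$.

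Strict convexity is where the real work sits, mainly because of the boundary entries $P_{k,n}=0$. I will use $\phi''(t)=1/t>0$ on $(0,\infty)$, which makes $\phi$ strictly convex there. Together with continuity at $0$, this makes $\phi$ strictly convex on $[0,\infty)$. To justify that, take $0=s<t$ and $\lambda\in(0,1)$. Then $\phi(\lambda t)=\lambda t\log(\lambda t)=\lambda\phi(t)+\lambda t\log\lambda<\lambda\phi(t)$, which is the strict inequality at the boundary. Now take $P\neq P'$ in $\Pi(a,b)$ and $\lambda\in(0,1)$. Every entry satisfies Jensen's inequality for $\phi$, and at least one entry has $P_{k,n}\neq P'_{k,n}$, where the inequality is strict. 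The remaining parts of $\Phi_\tau$ are linear in $P$ and so are preserved exactly under convex combination. Multiplying the strict entropic inequality by $\tau>0$ and summing gives $\Phi_\tau(C,\lambda P+(1-\lambda)P')<\lambda\Phi_\tau(C,P)+(1-\lambda)\Phi_\tau(C,P')$, which is the strict convexity claim.
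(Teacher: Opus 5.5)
Your proof is correct and follows the same route as the paper. The paper's justification is the one-line sketch ``$ab^\top$ is feasible, the marginal constraints define a closed and bounded convex set, and $p\mapsto p\log p$ is continuous on $[0,\infty)$ and strictly convex.'' You fill in each of these points, including the boundary case of strict convexity at $0$ and the observation that a single differing entry makes the summed inequality strict.
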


Indeed, $ab^\top$ is feasible, the marginal constraints define a closed
and bounded convex set, and $p\mapsto p\log p$ is continuous on
$[0,\infty)$ and strictly convex.

\begin{fact}[Hard decoding never exceeds a column-feasible assignment]
\label{fact:hard_decoding}
For $P\ge0$ with $P^\top\mathbf 1_K=b$,
\begin{equation}
D_{\mathrm{hard}}(C)\le R(C,P).
\label{eq:aux_hard_decoding}
\end{equation}
Equality holds exactly when $P_{k,n}>0$ implies
$M_{k,n}(C)=\min_jM_{j,n}(C)$.
\end{fact}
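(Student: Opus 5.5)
The plan is to prove the inequality column by column, using only the column constraint $P^\top\mathbf 1_K=b$ and the nonnegativity of $P$; the row constraint plays no role. For each block $n$, write $m_n(C)=\min_j M_{j,n}(C)$. Since every $M_{k,n}(C)\ge m_n(C)$ and $P_{k,n}\ge0$, we get
\[
\sum_k P_{k,n}M_{k,n}(C)\ge m_n(C)\sum_k P_{k,n}=b_n\,m_n(C).
\]
Summing this over $n$ gives
\[
R(C,P)=\sum_{k,n}P_{k,n}M_{k,n}(C)\ge\sum_n b_n m_n(C)=D_{\mathrm{hard}}(C),
\]
which is \eqref{eq:aux_hard_decoding}.

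For the equality case, the plan is to write the difference as a sum of nonnegative terms:
\[
R(C,P)-D_{\mathrm{hard}}(C)=\sum_{k,n}P_{k,n}\bigl(M_{k,n}(C)-m_n(C)\bigr).
\]
Each summand is a product of two nonnegative factors. Because the costs are finite, each summand is a well-defined nonnegative real number. The sum is therefore zero if and only if every summand vanishes, that is, if and only if $P_{k,n}>0$ forces $M_{k,n}(C)=m_n(C)$. This gives both directions of the stated characterization at once.

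No step here is a genuine obstacle. The only point needing care is that the characterization must be read termwise, which is valid precisely because all terms have the same sign. Ties in the $\arg\min$ cause no difficulty, since the condition allows mass to be split among any minimizing codewords.
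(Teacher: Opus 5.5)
Your proposal is correct and follows essentially the same route as the paper. The paper's proof rewrites $R(C,P)-D_{\mathrm{hard}}(C)$ as $\sum_{k,n}P_{k,n}\bigl(M_{k,n}(C)-m_n\bigr)$ using the column constraint, and reads off both the inequality and the equality condition from the nonnegativity of each summand, exactly as you do.
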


\begin{proof}
Writing $m_n=\min_jM_{j,n}(C)$, we have
\begin{align*}
R(C,P)-D_{\mathrm{hard}}(C)
&=\sum_{k,n}P_{k,n}M_{k,n}(C)-\sum_n b_nm_n\\
&=\sum_{k,n}P_{k,n}\bigl(M_{k,n}(C)-m_n\bigr)\ge0.
\end{align*}
The equality condition follows from the nonnegativity of each summand.
\end{proof}

\begin{fact}[Stationarity under linear constraints]
\label{fact:stationarity}
If $F$ is differentiable on $\mathbb{R}_{++}^{K\times N}$ and a positive
$P^\star\in\Pi(a,b)$ minimizes $F$, there are vectors $\lambda$ and $\mu$
such that
\begin{equation}
\frac{\partial F}{\partial P_{k,n}}(P^\star)+\lambda_k+\mu_n=0.
\label{eq:aux_stationarity}
\end{equation}
\end{fact}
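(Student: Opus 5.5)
The statement is Fact~\ref{fact:stationarity}: the first-order condition for minimizing a differentiable $F$ over $\Pi(a,b)$ at a strictly positive minimizer. The plan is to treat $\Pi(a,b)$ locally as an affine subspace and apply the Lagrange multiplier theorem for linear equality constraints.

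\textbf{Step 1: remove the inequality constraints.} Since $P^\star$ is entrywise positive, there is a radius $r>0$ with $P^\star+\Delta\in\mathbb{R}_{++}^{K\times N}$ whenever $\|\Delta\|<r$. Define the direction space
\[
V=\{\Delta:\Delta\mathbf 1_N=0,\ \Delta^\top\mathbf 1_K=0\}.
\]
For any $\Delta\in V$ and $|t|$ small, $P^\star+t\Delta$ is nonnegative and has marginals $a$ and $b$, so it lies in $\Pi(a,b)$. The nonnegativity constraints are therefore inactive near $P^\star$.

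\textbf{Step 2: derive the variational condition.} Because $P^\star$ is a minimizer, $t\mapsto F(P^\star+t\Delta)$ has a local minimum at $t=0$ for every $\Delta\in V$. Since $F$ is differentiable on the open set $\mathbb{R}_{++}^{K\times N}$, this gives $\langle\nabla F(P^\star),\Delta\rangle=0$ for all $\Delta\in V$. Equivalently, $\nabla F(P^\star)\in V^\perp$.

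\textbf{Step 3: identify $V^\perp$.} Consider the linear map $A(\Delta)=(\Delta\mathbf 1_N,\Delta^\top\mathbf 1_K)$. Then $V=\ker A$, so $V^\perp=\operatorname{range}(A^\top)$. The adjoint sends $(\lambda,\mu)$ to the matrix with entries $\lambda_k+\mu_n$. Hence there exist $\lambda\in\mathbb{R}^K$ and $\mu\in\mathbb{R}^N$ with $\partial F/\partial P_{k,n}(P^\star)=-(\lambda_k+\mu_n)$, which is Equation~\eqref{eq:aux_stationarity} after absorbing the sign into $\lambda$ and $\mu$.

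\textbf{Main obstacle.} The only nontrivial point is the identity $(\ker A)^\perp=\operatorname{range}(A^\top)$, which is standard finite-dimensional linear algebra. The constraint $A$ has a one-dimensional redundancy, since both marginals sum to the same total, so $A$ is not surjective. This does not matter: the identity holds for any linear map, and the redundancy only means that $(\lambda,\mu)$ is unique up to the shift $(\lambda+c\mathbf 1,\mu-c\mathbf 1)$. No constraint qualification is needed because all the constraints are affine.
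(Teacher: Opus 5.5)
Your proposal is correct and takes the same route as the paper, which justifies this fact in one line: $\nabla F(P^\star)$ is orthogonal to every matrix with zero row and column sums. You supply the details the paper leaves implicit, namely that positivity of $P^\star$ makes every direction in $\ker A$ feasible, and that $(\ker A)^\perp=\operatorname{range}(A^\top)$ consists exactly of the matrices with entries $\lambda_k+\mu_n$.
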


This is the first-order condition on the affine marginal constraints:
$\nabla F(P^\star)$ is orthogonal to matrices with zero row and column sums.

\begin{fact}[Taylor expansion with third-order remainder]
\label{fact:taylor}
If $\mathcal{L}$ is $C^3$ near $\{W+t\delta:0\le t\le1\}$ and
$|D^3\mathcal{L}(W+t\delta)[\delta,\delta,\delta]|
\le L_3\|\delta\|_2^3$ on this segment, then
\begin{equation}
\mathcal{L}(W+\delta)=\mathcal{L}(W)
+\nabla\mathcal{L}(W)^\top\delta
+\tfrac12\delta^\top\nabla^2\mathcal{L}(W)\delta+r,
\qquad |r|\le\tfrac{L_3}{6}\|\delta\|_2^3.
\label{eq:aux_taylor}
\end{equation}
\end{fact}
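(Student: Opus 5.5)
This is the standard Taylor theorem with Lagrange (or integral) remainder, applied along a line segment. The plan is to reduce it to a one-variable statement. Set $\varphi(t)=\mathcal{L}(W+t\delta)$ for $t\in[0,1]$. Because $\mathcal{L}$ is $C^3$ on a neighbourhood of the segment, $\varphi$ is $C^3$ on $[0,1]$. By the chain rule its derivatives are directional derivatives of $\mathcal{L}$:
\[
\varphi'(t)=\nabla\mathcal{L}(W+t\delta)^\top\delta,\qquad
\varphi''(t)=\delta^\top\nabla^2\mathcal{L}(W+t\delta)\delta,\qquad
\varphi'''(t)=D^3\mathcal{L}(W+t\delta)[\delta,\delta,\delta].
\]
At $t=0$ these give $\varphi(0)=\mathcal{L}(W)$, $\varphi'(0)=\nabla\mathcal{L}(W)^\top\delta$ and $\varphi''(0)=\delta^\top\nabla^2\mathcal{L}(W)\delta$. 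These are exactly the zeroth-, first- and second-order terms in Equation~\eqref{eq:aux_taylor}.

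Next, apply the one-dimensional Taylor formula with integral remainder:
\[
\varphi(1)=\varphi(0)+\varphi'(0)+\tfrac12\varphi''(0)+\int_0^1\tfrac{(1-t)^2}{2}\varphi'''(t)\,dt .
\]
Define $r$ to be this integral. The hypothesis gives $|\varphi'''(t)|\le L_3\|\delta\|_2^3$ for every $t\in[0,1]$. Since $\int_0^1(1-t)^2/2\,dt=1/6$, we obtain $|r|\le\frac{L_3}{6}\|\delta\|_2^3$. The Lagrange form $r=\varphi'''(\xi)/6$ for some $\xi\in(0,1)$ would give the same bound.

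The only point needing care is the chain-rule identification of $\varphi'''$ with the symmetric trilinear form $D^3\mathcal{L}[\delta,\delta,\delta]$. This is immediate from continuity of the third derivative on the neighbourhood, which makes $\varphi$ three times continuously differentiable. Nothing else is an obstacle; the argument is routine.
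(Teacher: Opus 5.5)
Your proposal is correct and follows essentially the same route as the paper. In the proof of Lemma~\ref{lem:local_loss_expansion}, the paper restricts $\mathcal{L}$ to the segment via $h(t)=\mathcal{L}(W+t\delta_C)$, applies the one-variable Taylor formula with integral remainder $\tfrac12\int_0^1(1-t)^2h'''(t)\,dt$, and bounds that remainder by $\tfrac{L_3}{2}\|\delta_C\|_2^3\int_0^1(1-t)^2\,dt=\tfrac{L_3}{6}\|\delta_C\|_2^3$, exactly as you do.
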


Here weights are viewed as vectors. The integral remainder is used in
Appendix~\ref{sec:proof_local_loss_expansion}.

\subsection{Proof of Theorem~\ref{thm:entropic_estep}}
\label{sec:proof_theorem_1}
\label{sec:proof_entropic_estep}

\begin{proof}
Fix $C$ and abbreviate $M=M(C)$ and $\Phi(P)=\Phi_\tau(C,P)$.
Compactness, continuity, and strict convexity from Fact~\ref{fact:polytope}
ensure that
\begin{equation}
P^\star=P_\tau(C)=\arg\min_{P\in\Pi(a,b)}\Phi(P)
\label{eq:estep_minimizer}
\end{equation}
exists and is unique. It remains to prove positivity and identify the
scaling representation.

Suppose $Z=\{(k,n):P^\star_{k,n}=0\}$ is nonempty. For $0<t<1$, the
coupling $P_t=(1-t)P^\star+tab^\top$ is feasible. With
$\psi_{k,n}(p)=p\log(p/(a_kb_n))$, the objective change is
\begin{align}
\Phi(P_t)-\Phi(P^\star)
&=t\langle M,ab^\top-P^\star\rangle
  +\tau\sum_{(k,n)\in Z}ta_kb_n\log t\nonumber\\
&\quad+\tau\sum_{(k,n)\notin Z}
  \bigl[\psi_{k,n}(P_{t,k,n})-\psi_{k,n}(P^\star_{k,n})\bigr]\nonumber\\
&=\tau t\log t\sum_{(k,n)\in Z}a_kb_n+O(t)<0
\label{eq:positivity_split_rewrite}
\end{align}
for sufficiently small $t>0$. The $O(t)$ term uses differentiability at
the finitely many positive entries, whereas the coefficient of $t\log t$
is strictly positive. This contradicts optimality, so $P^\star>0$.

Stationarity under the marginal constraints now gives multipliers
$\lambda$ and $\mu$ with
\begin{align}
0
&=M_{k,n}+\tau\left(\log\frac{P^\star_{k,n}}{a_kb_n}+1\right)
  +\lambda_k+\mu_n,
\label{eq:kkt_estep}\\
P^\star_{k,n}
&=a_kb_n\exp\left(-1-\frac{\lambda_k+\mu_n}{\tau}\right)
  \exp(-M_{k,n}/\tau)\nonumber\\
&=u_k\exp(-M_{k,n}/\tau)v_n,
\label{eq:sinkhorn_form_proof}
\end{align}
where $u_k=a_k\exp(-1-\lambda_k/\tau)$ and
$v_n=b_n\exp(-\mu_n/\tau)$ are positive. This proves
Equation~\eqref{eq:sinkhorn_form}. Any two representations of the same
coupling satisfy $u'_k/u_k=v_n/v'_n$ for every $(k,n)$ because the kernel
is positive; thus $(u',v')=(\alpha u,\alpha^{-1}v)$ for some $\alpha>0$.

To relate the representation to Algorithm~\ref{alg:sinkhorn_em}, let
$\mathcal K=\exp(-M/\tau)$. Exponentiating its log-domain updates gives
\begin{equation}
u_k^{(r)}=\frac{a_k}{(\mathcal K v^{(r-1)})_k},
\qquad
v_n^{(r)}=\frac{b_n}{(\mathcal K^\top u^{(r)})_n}.
\label{eq:sinkhorn_iterates}
\end{equation}
The two half-steps enforce the row and column marginals, respectively.
At a positive fixed point, the scaled coupling
$\bar P=\operatorname{diag}(u)\mathcal K\operatorname{diag}(v)$ belongs
to $\Pi(a,b)$, and
$\nabla\Phi(\bar P)_{k,n}=\rho_k+\sigma_n$, where
$\rho_k=\tau(\log(u_k/a_k)+1)$ and
$\sigma_n=\tau\log(v_n/b_n)$. For any $P\in\Pi(a,b)$, convexity yields
\begin{align*}
\Phi(P)-\Phi(\bar P)
&\ge\langle\nabla\Phi(\bar P),P-\bar P\rangle\\
&=\sum_k\rho_k\bigl[(P-\bar P)\mathbf 1_N\bigr]_k
 +\sum_n\sigma_n\bigl[(P-\bar P)^\top\mathbf 1_K\bigr]_n\\
&=0.
\end{align*}
Hence $\bar P=P_\tau(C)$ by uniqueness. This identifies the positive
fixed-point coupling; a finite iterate need not satisfy both marginals.
\end{proof}

\subsection{Proof of Theorem~\ref{thm:barycentric_mstep}}
\label{sec:proof_theorem_2}
\label{sec:proof_barycentric_mstep}
\label{sec:proof_em_descent}

\begin{proof}
Let $q_k=q_k(P)>0$ and write
\begin{equation}
\bar c_k=c_k^P=\frac{\sum_nP_{k,n}x_n}{q_k(P)},
\qquad C^P=\{\bar c_k\}_{k=1}^{K}.
\label{eq:barycentric_codeword}
\end{equation}
The centering identity
$\sum_nP_{k,n}(\bar c_k-x_n)=0$ eliminates the cross term in the
quadratic expansion:
\begin{align}
R(C,P)-R(C^P,P)
&=\sum_{k,n}P_{k,n}
  \bigl(\|c_k-x_n\|_H^2-\|\bar c_k-x_n\|_H^2\bigr)\nonumber\\
&=\sum_kq_k\|c_k-\bar c_k\|_H^2
 +2\sum_k\left\langle c_k-\bar c_k,
                  \sum_nP_{k,n}(\bar c_k-x_n)\right\rangle_H\nonumber\\
&=\sum_kq_k(P)\|c_k-c_k^P\|_H^2.
\label{eq:variance_decomposition}
\end{align}
Since $q_k>0$ and $H\succ0$, the right-hand side vanishes exactly when
$C=C^P$. Thus $C^P$ is the unique fixed-plan minimizer, as in the weighted
second Lloyd condition~\cite{kmeans1982lloyd}.

For the exact pair, $q_k(P^+)=a_k$ and $C^{P^+}=C^+$. Let
$\Delta_0=\sum_ka_k\|c_k^+-c_k^0\|_H^2$. Applying
Equation~\eqref{eq:variance_decomposition} while keeping the coupling fixed
gives the profiled descent chain
\begin{align}
J_\tau(C^+)
&=\min_{P\in\Pi(a,b)}\Phi_\tau(C^+,P)\nonumber\\
&\le\Phi_\tau(C^+,P^+)\nonumber\\
&=R(C^+,P^+)+\tau\KL(P^+\|ab^\top)\nonumber\\
&=R(C^0,P^+)-\Delta_0+\tau\KL(P^+\|ab^\top)\nonumber\\
&=J_\tau(C^0)-\Delta_0.
\label{eq:em_descent_proof}
\end{align}
Here the last equality uses exact E-step optimality. If any codeword moves,
then $\Delta_0>0$ and hence $J_\tau(C^+)<J_\tau(C^0)$.

Finally, Fact~\ref{fact:hard_decoding} and nonnegativity of the KL term give,
for every $C$,
\begin{align}
D_{\mathrm{hard}}(C)
&\le\min_{P\in\Pi(a,b)}R(C,P)
 =D_{\mathrm{bal}}(C)\nonumber\\
&\le\min_{P\in\Pi(a,b)}
 \bigl[R(C,P)+\tau\KL(P\|ab^\top)\bigr]
 =J_\tau(C).
\label{eq:objective_hierarchy}
\end{align}
Combining the two bounds also gives
$D_{\mathrm{hard}}(C^+)\le J_\tau(C^0)-\Delta_0$.
\end{proof}

\subsection{Proof of Theorem~\ref{thm:hard_improvement_condition}}
\label{sec:proof_theorem_3}
\label{sec:proof_hard_improvement}
\label{sec:proof_gap_bound}

\begin{proof}
Let $(P^+,C^+)$ be the exact pair generated from $C^0$. Recall the gain and
penalty
\begin{equation}
\Delta_0=\sum_ka_k\|c_k^+-c_k^0\|_H^2,
\qquad
\Gamma_0=R(C^0,P^+)-D_{\mathrm{hard}}(C^0).
\label{eq:gain_penalty}
\end{equation}
The additional cost removed by hard reassignment at $C^+$ is
\begin{equation}
S_1=R(C^+,P^+)-D_{\mathrm{hard}}(C^+)\ge0,
\label{eq:new_codebook_slack}
\end{equation}
where nonnegativity follows from Fact~\ref{fact:hard_decoding}.
Using the barycentric identity, we obtain
\begin{align}
D_{\mathrm{hard}}(C^+)-D_{\mathrm{hard}}(C^0)
&=R(C^+,P^+)-S_1-D_{\mathrm{hard}}(C^0)\nonumber\\
&=R(C^0,P^+)-\Delta_0-S_1-D_{\mathrm{hard}}(C^0)\nonumber\\
&=\Gamma_0-\Delta_0-S_1
\label{eq:hard_change_identity}\\
&\le\Gamma_0-\Delta_0.
\label{eq:hard_improvement_condition}
\end{align}
Thus $\Delta_0>\Gamma_0$ gives Equation~\eqref{eq:hard_margin} with
$m_Q=\Delta_0-\Gamma_0$. Equivalently, the condition is
$R(C^+,P^+)<D_{\mathrm{hard}}(C^0)$. It is sufficient rather than necessary
because the bound discards $S_1$.

\paragraph{Bounding the Assignment Penalty.}
To separate the balancing and entropic contributions, set
\begin{equation}
\Gamma_{\mathrm{bal}}(C^0)
=D_{\mathrm{bal}}(C^0)-D_{\mathrm{hard}}(C^0),
\label{eq:balancing_cost}
\end{equation}
and let $h_{a,b}=\min\{\mathcal{E}(a),\mathcal{E}(b)\}$.
For every $P\in\Pi(a,b)$, exact E-step optimality gives
\begin{align}
R(C^0,P^+)
&\le R(C^0,P^+)+\tau\KL(P^+\|ab^\top)\nonumber\\
&=\Phi_\tau(C^0,P^+)\nonumber\\
&\le\Phi_\tau(C^0,P)
 =R(C^0,P)+\tau\KL(P\|ab^\top).
\label{eq:gap_compare}
\end{align}
Choose an unregularized balanced minimizer $P^{\mathrm{bal}}$ and apply
Fact~\ref{fact:coupling_entropy}. Then
\begin{align}
\Gamma_0
&\le R(C^0,P^{\mathrm{bal}})-D_{\mathrm{hard}}(C^0)
    +\tau\KL(P^{\mathrm{bal}}\|ab^\top)\nonumber\\
&\le\Gamma_{\mathrm{bal}}(C^0)+\tau h_{a,b}.
\label{eq:t3_balancing}
\end{align}
It remains to bound $\Gamma_{\mathrm{bal}}(C^0)$ using the old hard
assignment.

Write $M=M(C^0)$, $z_n=z_n(C^0)$, and $m_n=M_{z_n,n}=\min_kM_{k,n}$.
The hard coupling $P^{\mathrm h}_{k,n}=b_n\mathbf 1\{k=z_n\}$ has column
marginal $b$, row marginal $q^{\mathrm h}$, and cost
$R(C^0,P^{\mathrm h})=D_{\mathrm{hard}}(C^0)$. Set
\begin{equation}
\Omega_0=\max_n\bigl[\max_kM_{k,n}-\min_kM_{k,n}\bigr].
\label{eq:cost_spread}
\end{equation}
Let $S=\{k:q_k^{\mathrm h}>a_k\}$ and $T=\{k:q_k^{\mathrm h}<a_k\}$
be the surplus and deficit rows, with
$e_k=q_k^{\mathrm h}-a_k$ for $k\in S$ and
$f_j=a_j-q_j^{\mathrm h}$ for $j\in T$. Their common total mass, the total variation (TV) distance, is
\begin{equation}
\theta=\sum_{k\in S}e_k=\sum_{j\in T}f_j
=\tfrac12\|q^{\mathrm h}-a\|_1
=\mathrm{TV}(q^{\mathrm h},a).
\label{eq:gap_tv}
\end{equation}
If $\theta=0$, $P^{\mathrm h}$ is already balanced and
$\Gamma_{\mathrm{bal}}(C^0)=0$. Assume below that $\theta>0$.

\paragraph{A Feasible Reference Coupling.}
For $k\in S$, $j\in T$, and $N_k=\{n:z_n=k\}$, define
$Y_{k,j}=e_kf_j/\theta$ and
$\mu_{k,n}=b_ne_k/q_k^{\mathrm h}$ for $n\in N_k$. These quantities satisfy
\begin{align}
\sum_{j\in T}Y_{k,j}
&=e_k,\qquad
\sum_{k\in S}Y_{k,j}=f_j,\nonumber\\
\sum_{n\in N_k}\mu_{k,n}
&=\frac{e_k}{q_k^{\mathrm h}}\sum_{n\in N_k}b_n=e_k,
\qquad 0<\mu_{k,n}<b_n.
\label{eq:t3_Y}
\end{align}
Remove $\mu_{k,n}$ from the hard entry $(k,n)$ and send it to deficit row
$j$ in proportion $Y_{k,j}/e_k$. More precisely, the nonzero entries of
the removed and added matrices are
$\Theta^-_{k,n}=\mu_{k,n}$ and
$\Theta^+_{j,n}=\mu_{k,n}Y_{k,j}/e_k$ for
$k\in S$, $j\in T$, and $n\in N_k$. Define
\begin{equation}
P^{\mathrm{ref}}=P^{\mathrm h}-\Theta^-+\Theta^+.
\label{eq:t3_pref}
\end{equation}
The bound $\mu_{k,n}<b_n$ gives nonnegativity, and the identities above give
\begin{align}
\sum_jP^{\mathrm{ref}}_{j,n}
&=b_n-\mu_{k,n}
  +\mu_{k,n}\sum_{j\in T}\frac{Y_{k,j}}{e_k}
 =b_n,\qquad n\in N_k,\ k\in S,\nonumber\\
\sum_nP^{\mathrm{ref}}_{k,n}
&=\begin{cases}
q_k^{\mathrm h}-e_k=a_k,&k\in S,\\
q_k^{\mathrm h}+\sum_{j\in S}Y_{j,k}=a_k,&k\in T,\\
q_k^{\mathrm h}=a_k,&k\notin S\cup T.
\end{cases}
\label{eq:t3_ref_feasible}
\end{align}
All other columns are unchanged, so $P^{\mathrm{ref}}\in\Pi(a,b)$.
Only the transferred mass incurs additional cost. Since
$0\le M_{j,n}-m_n\le\Omega_0$, we have
\begin{align}
R(C^0,P^{\mathrm{ref}})-D_{\mathrm{hard}}(C^0)
&=\sum_{k\in S}\sum_{n\in N_k}\sum_{j\in T}
  \mu_{k,n}\frac{Y_{k,j}}{e_k}(M_{j,n}-m_n)\nonumber\\
&\le\Omega_0\sum_{k\in S}\sum_{n\in N_k}
  \mu_{k,n}\sum_{j\in T}\frac{Y_{k,j}}{e_k}\nonumber\\
&=\Omega_0\sum_{k\in S}e_k
 =\theta\Omega_0.
\label{eq:gap_ref_cost}
\end{align}
Consequently,
\begin{align}
\Gamma_{\mathrm{bal}}(C^0)
&\le R(C^0,P^{\mathrm{ref}})-D_{\mathrm{hard}}(C^0)
 \le\theta\Omega_0,\nonumber\\
\Gamma_0
&\le\Gamma_{\mathrm{bal}}(C^0)+\tau h_{a,b}
 \le\mathrm{TV}(q^{\mathrm h},a)\,\Omega_0+\tau h_{a,b}.
\label{eq:gap_bound}
\end{align}
The same bound holds for $\theta=0$ by choosing
$P^{\mathrm{ref}}=P^{\mathrm h}$.

Either upper bound in Equation~\eqref{eq:gap_bound} yields a sufficient
condition for the main-text certificate. One such condition is
\begin{equation}
\Delta_0>\mathrm{TV}(q^{\mathrm h},a)\,\Omega_0
+\tau\min\{\mathcal{E}(a),\mathcal{E}(b)\}
\label{eq:measurable_condition}
\end{equation}
implies $\Delta_0>\Gamma_0$ and hence strict hard-distortion descent.
Under uniform marginals,
$h_{a,b}=\log\min\{K,N\}$ and
$\mathrm{TV}(q^{\mathrm h},a)=\tfrac12\|q^{\mathrm h}-K^{-1}\mathbf1\|_1$.
Thus balancing and entropy contribute separately to the threshold; failure
of this sufficient condition does not rule out an improvement.
\end{proof}

\subsection{Proof of Theorem~\ref{thm:loss_bridge}}
\label{sec:proof_theorem_4}
\label{sec:proof_output_error}
\label{sec:proof_loss_bridge}

\begin{proof}
We first establish the layer-output identity, then compare the two model
losses. The latter comparison uses only the distortion margin and the
definition of $\eta_C$, independently of the metric-alignment hypotheses.

\paragraph{Layer-Output Error.}
For one layer, let $\Delta=\Delta W_C=Q_C(W)-W$. Since
$\Sigma=\mathbb E[\xi\xi^\top]$,
\begin{align}
\mathbb E\|\Delta\xi\|_2^2
&=\mathbb E\operatorname{tr}(\Delta\xi\xi^\top\Delta^\top)\nonumber\\
&=\operatorname{tr}\bigl(\Delta\mathbb E[\xi\xi^\top]\Delta^\top\bigr)
 =\operatorname{tr}(\Delta\Sigma\Delta^\top).
\label{eq:t4_trace}
\end{align}
Under $\Sigma=\operatorname{diag}(\sigma)$, this is
$\sum_{i,j}\sigma_j\Delta_{ij}^2$. For a block $n$ of group $g$ lying
in row $i(n)$, the perturbation on $J_g$ is
$c^{(g)}_{z_n}-x_n$, where $z_n$ minimizes its $H_g$-weighted cost.
The aggregate distortion is
\begin{equation}
D(C)=\sum_gN_gD_{\mathrm{hard}}^{(g)}(C)
=\sum_g\sum_{n\in g}\min_k\|c_k^{(g)}-x_n\|_{H_g}^2.
\label{eq:layer_distortion}
\end{equation}
Using the aligned metrics and the block partition gives
\begin{align}
D(C)
&=\sum_g\sum_{n\in g}\|c^{(g)}_{z_n}-x_n\|_{H_g}^2\nonumber\\
&=\kappa\sum_g\sum_{n\in g}\sum_{j\in J_g}
  \sigma_j\Delta_{i(n),j}^2\nonumber\\
&=\kappa\sum_{i,j}\sigma_j\Delta_{ij}^2\nonumber\\
&=\kappa\operatorname{tr}(\Delta\Sigma\Delta^\top)
 =\kappa\,\mathbb E\|\Delta W_C\xi\|_2^2.
\label{eq:output_error_identity_proof}
\end{align}
The block partition justifies the third equality. Since $\kappa>0$ is
independent of $C$, a strict distortion reduction is equivalent to a
strict reduction in expected layer-output error.

\paragraph{Loss and Perplexity.}
Now sum distortion over all quantized layers. The groupwise certificates
aggregate as
\begin{equation}
D(C^+)-D(C^0)
=\sum_gN_g\bigl[D_{\mathrm{hard}}^{(g)}(C^+)
               -D_{\mathrm{hard}}^{(g)}(C^0)\bigr]
\le-\sum_gN_gm_Q^{(g)}=-m_{\mathrm{tot}}.
\label{eq:t4_distortion_hypothesis}
\end{equation}
By definition,
\begin{equation}
\eta_C=\bigl|\mathcal{L}(C)-\mathcal{L}(W)-\tfrac12D(C)\bigr|,
\label{eq:eta_def}
\end{equation}
so $\mathcal{L}(C)$ lies between
$\mathcal{L}(W)+\tfrac12D(C)-\eta_C$ and
$\mathcal{L}(W)+\tfrac12D(C)+\eta_C$. Applying the upper bound to $C^+$
and the lower bound to $C^0$ yields
\begin{align}
\mathcal{L}(C^+)-\mathcal{L}(C^0)
&\le\bigl[\mathcal{L}(W)+\tfrac12D(C^+)+\eta_{C^+}\bigr]
     -\bigl[\mathcal{L}(W)+\tfrac12D(C^0)-\eta_{C^0}\bigr]\nonumber\\
&=\tfrac12\bigl[D(C^+)-D(C^0)\bigr]
   +\eta_{C^+}+\eta_{C^0}\nonumber\\
&\le-\tfrac12m_{\mathrm{tot}}+\eta_{C^+}+\eta_{C^0}.
\label{eq:loss_bridge_proof}
\end{align}
If $m_{\mathrm{tot}}>2(\eta_{C^+}+\eta_{C^0})$, the right-hand side is
negative. Under this condition, exponentiation gives
\begin{align}
\frac{\mathrm{PPL}(C^+)}{\mathrm{PPL}(C^0)}
&=\exp\bigl(\mathcal{L}(C^+)-\mathcal{L}(C^0)\bigr)\nonumber\\
&\le\exp\bigl(-\tfrac12m_{\mathrm{tot}}+\eta_{C^+}+\eta_{C^0}\bigr)<1.
\label{eq:ppl_ratio}
\end{align}
This proves the loss and perplexity claims.
\end{proof}

\subsection{Proof of Lemma~\ref{lem:local_loss_expansion}}
\label{sec:proof_local_loss_expansion}

\begin{proof}
View the loss as a function of the vectorized weights and let
$h(t)=\mathcal{L}(W+t\delta_C)$ for $0\le t\le1$.
The chain rule gives $h'(0)=g^\top\delta_C$ and
$h''(0)=\delta_C^\top\mathfrak H\delta_C$. Taylor's formula gives
\begin{align}
\mathcal{L}(C)-\mathcal{L}(W)
&=h(1)-h(0)\nonumber\\
&=h'(0)+\tfrac12h''(0)
  +\tfrac12\int_0^1(1-t)^2h'''(t)\,dt\nonumber\\
&=g^\top\delta_C+\tfrac12\delta_C^\top\mathfrak H\delta_C+r_C,
\label{eq:t5_taylor}\\
|r_C|
&\le\tfrac12\int_0^1(1-t)^2
  \bigl|D^3\mathcal{L}(W+t\delta_C)
          [\delta_C,\delta_C,\delta_C]\bigr|\,dt\nonumber\\
&\le\tfrac{L_3}{2}\|\delta_C\|_2^3\int_0^1(1-t)^2\,dt
 =\tfrac{L_3}{6}\|\delta_C\|_2^3.
\label{eq:t5_remainder_bound}
\end{align}
This proves Equation~\eqref{eq:loss_expansion}. Subtracting the quadratic
distortion $D(C)=\delta_C^\top\widetilde H\delta_C$ and applying the
remainder bound gives
\begin{align}
\eta_C
&=\bigl|g^\top\delta_C
  +\tfrac12\delta_C^\top(\mathfrak H-\widetilde H)\delta_C+r_C\bigr|\nonumber\\
&\le |g^\top\delta_C|
 +\tfrac12\bigl|\delta_C^\top(\mathfrak H-\widetilde H)\delta_C\bigr|
 +|r_C|\nonumber\\
&\le |g^\top\delta_C|
 +\tfrac12\bigl|\delta_C^\top(\mathfrak H-\widetilde H)\delta_C\bigr|
 +\tfrac{L_3}{6}\|\delta_C\|_2^3.
\label{eq:t5_residual_proof}
\end{align}
This is Equation~\eqref{eq:loss_residual}.
\end{proof}

\section{Limitations}
\label{sec:appendix_limitations}

The study focuses on codebook-based weight quantization at low bit widths. Quantization behavior may also depend on model architecture, calibration data, and deployment conditions. Exploring the method across a broader range of models and settings may further clarify how codebook refinement relates to downstream performance.

\section{More Discussions}
\label{sec:more_discussions}

\noindent\textbf{Q1. Does balanced fitting force uniform codeword use at deployment?}\par
No. The uniform row marginal fixes the fitting mass of each codeword in the exact soft assignment (Theorem~\ref{thm:uniform_transport}). Deployment discards that assignment and applies the ordinary nearest-codeword encoder (Equation~\ref{eq:hard_distortion}). Hard use is therefore measured separately. Figure~\ref{fig:usage} shows higher realized use for \method{} at every measured layer for $K=16$ and in 74 of 75 matched layers for $K=256$; it is an observed result, not a consequence of the marginal constraint alone.

\noindent\textbf{Q2. What remains true when Sinkhorn scaling stops after finitely many iterations?}\par
The exact E-step has a unique positive coupling with the prescribed marginals. A finite iterate can retain a row-marginal residual, as described in Appendix~\ref{sec:appendix_implementation}. The implemented M-step divides by the actual row mass with a $10^{-12}$ floor. Whenever each mass exceeds that floor, its codeword update is still the exact barycenter of the computed assignment (Equation~\ref{eq:appendix_barycenter}); the exact-coupling guarantee and the fixed-assignment update are distinct statements.

\Needspace{7\baselineskip}
\noindent\textbf{Q3. Why can a better fitting objective improve the final hard quantizer?}\par
Fitting-objective descent alone does not establish hard-distortion descent. Theorem~\ref{thm:hard_improvement_condition} separates the barycentric gain $\Delta_0$ from the cost $\Gamma_0$ of using the balanced assignment at the old codebook: $\Delta_0>\Gamma_0$ is sufficient for a lower hard distortion after reassignment. The measured perplexity and zero-shot differences in Table~\ref{tab:main} are empirical comparisons of the deployed quantizers; the theorem states the condition under which the local fitting gain transfers.

\noindent\textbf{Q4. How should the reported bit budgets be compared?}\par
The displayed bpv values count the index and group-scale terms. The full expression also contains codebook storage $K d b_c/l$ (Equation~\ref{eq:bpv_gptvq}), whose size depends on codebook sharing. The term is identical within each paired GPT-VQ/\method{} setting because the two methods use the same codebook size and sharing configuration. The external-baseline table instead places published methods at nearby reported operating points; its bpv values should be read as approximate budget matches rather than exact compressed-file sizes.

\noindent\textbf{Q5. Is higher codeword utilization sufficient to explain the performance gains?}\par
The measurements separate use from reconstruction quality. At $K=4$, both methods nearly saturate the codebook and several paired configurations still regress; at $K=16$ and $K=256$, \method{} usually uses more codewords and has larger average perplexity gains (Figure~\ref{fig:gain}; Table~\ref{tab:main}). Figure~\ref{fig:concentration_mse} also measures lower curvature-weighted reconstruction error in the plotted larger-codebook settings. These observations support reading utilization alongside the encoded reconstruction error, not as a stand-alone performance metric.

\noindent\textbf{Q6. Does the GPT-VQ comparison isolate only the assignment rule?}\par
The paired rows share model weights, calibration data, nominal codebook size, and reported bpv. They begin from the same hard $k$-means codebook, but the GPT-VQ pipeline may include a later lookup-table refinement while \method{} uses a continuous barycentric update (Appendix~\ref{sec:appendix_evaluation}). Table~\ref{tab:main} therefore compares the resulting end-to-end quantizers under matched settings; it does not attribute the entire difference to one assignment step.

\end{document}